\documentclass{article}
\pdfoutput=1
\usepackage[utf8]{inputenc}

\usepackage[a4paper, margin=1in]{geometry}
\usepackage[active]{srcltx}
\usepackage{babel}
\usepackage{verbatim}
\usepackage{mathtools}
\usepackage{bm}
\usepackage{amsmath}
\usepackage{amssymb}
\usepackage{microtype,comment}
\usepackage{graphicx}
\usepackage{subfigure}
\usepackage{booktabs} 
\usepackage{verbatim}
\usepackage{mathtools}
\usepackage{bbm}
\usepackage{amsmath}
\usepackage{amssymb} 
\usepackage{amsthm}
\usepackage{babel}
\usepackage{amsthm}
\usepackage{dsfont}
\usepackage{array}
\usepackage{mathrsfs}
\usepackage{color}
\usepackage{enumerate}
\usepackage{comment}
\usepackage{url}[sf]
\usepackage{needspace}
\usepackage{enumitem}
\usepackage{etoolbox}
\usepackage[2col]{optional} %

\DeclarePairedDelimiterX\set[1]\lbrace\rbrace{#1}

\newlist{pfparts}{description}{1}
\setlist[pfparts,1]{%
  font=\normalfont\textsf,
  itemindent=2pt,
  wide,
  itemsep=0pt,topsep=2pt,
  labelsep=0.75ex
}

\usepackage{tikz}
\usepackage{tikzscale}
\usepackage{pgfplots}
\usepackage{lineno}

\usepackage{style}

\hypersetup{%
	colorlinks   = true, %
	urlcolor     = blue, %
	linkcolor    = blue, %
	citecolor    = blue,  %
}

\pgfplotsset{compat=1.17}
\usetikzlibrary{calc}
\usetikzlibrary{positioning}
\usetikzlibrary{intersections}
\usetikzlibrary{backgrounds}

\usepackage{braket}

\DeclarePairedDelimiter\ceil{\lceil}{\rceil}
\newcommand{\floor}[1]{\left\lfloor #1 \right\rfloor}

\definecolor{mor}{rgb}{0.8, 0.5, 0.0}

 \makeatletter
\def\namedlabel#1#2{\begingroup
    #2%
    \def\@currentlabel{#2}%
    \phantomsection\label{#1}\endgroup
}
\makeatother

\makeatletter
\let\original@footnote\footnote
\newcommand{\align@footnote}[1]{%
  \ifmeasuring@
    \chardef\@tempfn=\value{footnote}%
    \footnotemark
    \setcounter{footnote}{\@tempfn}%
  \else
    \iffirstchoice@
      \original@footnote{#1}%
    \fi
  \fi}
\pretocmd{\start@align}{\let\footnote\align@footnote}{}{}
\makeatother

\title{A Characterization of List Regression}
\author{
    Chirag Pabbaraju\thanks{Stanford University. Email: \texttt{cpabbara@cs.stanford.edu.}}
    \and
    Sahasrajit Sarmasarkar\thanks{Stanford University. Email: \texttt{sahasras@stanford.edu.}}
}

\begin{document}

\maketitle
\begin{abstract}
    There has been a recent interest in understanding and characterizing the sample complexity of list learning tasks, where the learning algorithm is allowed to make a short list of $k$ predictions, and we simply require one of the predictions to be correct. This includes recent works characterizing the PAC sample complexity of standard list classification and online list classification.
    
    Adding to this theme, in this work, we provide a complete characterization of list PAC \textit{regression}. We propose two combinatorial dimensions, namely the $k$-OIG dimension and the $k$-fat-shattering dimension, and show that they characterize realizable and agnostic $k$-list regression respectively. These quantities generalize known dimensions for standard regression. Our work thus extends existing list learning characterizations from classification to regression.
\end{abstract}

\section{Introduction}
\label{sec:intro}

The study of machine learning algorithms that output multiple predictions (or a short \textit{list} of predictions) instead of a single prediction has recently been gaining traction \cite{charikar2023characterization, moran2023list, hanneke2024list}. There are several motivations to consider such a setting. In terms of statistical learnability, list learning is an easier task, and can help circumvent prohibitively large sample complexities that might otherwise be necessary with single predictions. Such large sample complexities can arise due to pathologies in the combinatorial structure of the learning task, or simply due to the presence of noisy labels in the training data. Additionally, list-valued predictions are arguably better-suited to many natural tasks that have a notion of ``shortlisting'' built into them, e.g., recommender systems. When the output of the algorithm is mostly a guideline, and will likely undergo a post-processing step by a human, it seems more beneficial to present the human with a few candidate predictions so as to better inform their choice.

Motivated by these reasons, recent works have focused on characterizing the sample complexity of list prediction tasks for a given list size. These include, for example, the works of \cite{charikar2023characterization} and \cite{moran2023list} which respectively extend standard classification and online classification tasks to the list learning setting. In these settings, a learning algorithm is deemed successful on a test point if the true label for that point simply \textit{belongs} to the list of predictions output by the algorithm. Notably, such a characterization has not yet been established for the classical task of real-valued \textit{regression}. Here, we might be willing to tolerate a short list of real-valued outputs for any given test point, so long as the true target output is \textit{close} to at least one of the values in the output. 

Consider, for instance, a scenario where the labels in the dataset correspond to measurements of a physical quantity. For many quantities, measurements can be made in different standardized units. For example, based on geographical region, distance is measured in miles/kilometers, mass is measured in kilograms/pounds, etc. If the measurements correspond to scientific experiments, then details about the  lab setup matter. For example, the measurement of the polarization of a wave with a filter depends heavily on the phase at which the filter is initialized. In such instances, if the dataset in question is collected in a crowdsourced manner from different groups of people adhering to differing conventions, there is a discrepancy in the scales of the labels. In the event that information about the different scales in the data is lost, an algorithm can only really hope to output predictions at an assortment of scales. Such list-valued predictions might nevertheless still be useful to a user.

With this context, in the present work, we derive a complete characterization of list regression in the Probably Approximately Correct (PAC) framework \cite{valiant1984theory}. For any target list size $k$, we identify necessary and sufficient conditions under which a given real-valued hypothesis class admits list learnability with $k$ labels, for both the realizable and agnostic settings. Our conditions are phrased in terms of finiteness of novel combinatorial dimensions of the class. We note that each of these settings in standard single-valued (i.e., $k=1$) regression is characterized by a  \textit{separate} \textit{scale-sensitive} dimension. This is in stark contrast to the setting of classification, where both the realizable and agnostic cases are governed by the same dimension \cite{vapnik1974theory, blumer1989learnability, daniely2014optimallearnersmulticlassproblems, brukhim2022characterization}. While for the realizable case, \cite{attias2023optimal} recently showed that the\textit{ OIG dimension} completely characterizes learnability, for the agnostic case, \cite{fatshatteringdimension} had earlier shown that the \textit{fat-shattering dimension} is the relevant quantity. We propose natural generalizations of both these dimensions for list regression, and show that they faithfully characterize their respective learning settings.

Our work is complementary to the line of works on \textit{list-decodable} regression \cite{karmalkar2019list, raghavendra2020list}. While the focus there is to output a list of $\poly(1/\alpha)$ hypotheses given that only an $\alpha$ fraction of the data is clean and the rest may be arbitrarily corrupted, our focus is more on the \textit{combinatorial characterization} of list PAC learnability of hypothesis classes, even with uncorrupted samples. Namely, while the combinatorial structure of classes at a lower order might preclude standard PAC learnability/lead to large sample complexity, we seek to identify higher-order combinatorial structure which might still make it amenable to list learning.

\subsection{Overview of Results}
\label{sec:overview}

For concreteness, we focus on real-valued hypothesis classes whose range is bounded in $[0,1]$, and consider learning with respect to the absolute ($\ell_1$) loss function.\footnote{We believe our results should generalize to other standard loss functions without too much effort.} Our main theorems are analogous to standard scale-sensitive results from the literature on real-valued ($k=1$) regression. 

For agnostic list regression with lists of size $k$, we show that learnability is characterized by a generalization of the fat-shattering dimension, denoted as the $k$-fat-shattering dimension.

\begin{theorem}[Informal, Agnostic List Regression, see Theorems \ref{thm:agnostic-upper-bound}, \ref{thm:agnostic-lower-bound-quantitative}]
    \label{thm:informal-agnostic}
    A hypothesis class $\mcH$ is amenable to agnostic $k$-list regression if and only if its $k$-fat-shattering dimension is finite at all scales.
\end{theorem}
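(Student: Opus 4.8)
The plan is to prove the two implications separately, generalizing the classical scale-sensitive theory of single-valued ($k=1$) agnostic regression: the ``if'' direction (finite $k$-fat-shattering dimension at every scale $\implies$ agnostic $k$-list learnable, cf.\ Theorem~\ref{thm:agnostic-upper-bound}) via an Alon--Ben-David--Cesa-Bianchi--Haussler-type covering argument, and the ``only if'' direction (infinite at some scale $\implies$ not learnable, cf.\ Theorem~\ref{thm:agnostic-lower-bound-quantitative}) via a direct information-theoretic construction. Throughout, an output is a $k$-list predictor $\bar h\colon \mathcal{X} \to [0,1]^k$ with loss $\ell(\bar h(x),y) = \min_{j \in [k]} |\bar h_j(x) - y|$, and ``amenable to agnostic $k$-list regression'' means that some algorithm, for every distribution $\mathcal{D}$ on $\mathcal{X}\times[0,1]$, returns with probability $\ge 1-\delta$ a $k$-list predictor of population loss at most $\inf_{h \in \mathcal{H}} \mathbb{E}_{\mathcal{D}}[|h(x)-y|] + \epsilon$ from a sample of size $m(\epsilon,\delta)$ that is finite (and in fact polynomial in the relevant parameters).

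\textbf{Sufficiency.} Fix $\epsilon$ and set $\gamma = \Theta(\epsilon)$. First I would discretize $[0,1]$ onto a $\gamma$-net so that each $h\in\mathcal{H}$ induces a function into an alphabet of size $O(1/\gamma)$ and ``some list coordinate lies within $O(\epsilon)$ of $y$'' becomes a coarse combinatorial event. The crux is then a \emph{list analogue of the ABCH / Sauer--Shelah bound}: if the $k$-fat-shattering dimension of $\mathcal{H}$ at scale $\Theta(\gamma)$ is $d$, then on any $m$ sample points there is a family $G$ of $k$-list predictors, of size $(m/\gamma)^{O(d)}$ up to factors depending only on $k$, such that every $h \in \mathcal{H}$ is $\gamma$-close -- coordinate-wise on the sample, in the $\min_j$ sense -- to some member of $G$; i.e.\ $G$ is a $\gamma$-``list cover'' of $\mathcal{H}$ on the sample. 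Granting this, the rest is routine: a symmetrization (ghost-sample) argument together with a union bound over $G$ -- or a chaining refinement to remove logarithmic factors -- shows that with a polynomial sample size the empirical $k$-list loss is within $\epsilon/3$ of the population $k$-list loss uniformly over $G$, and the learner returns the $\bar h \in G$ minimizing the empirical $k$-list loss. Correctness uses the triangle inequality: if $h^\star$ nearly attains $\inf_{h\in\mathcal{H}}\mathbb{E}_{\mathcal{D}}[|h(x)-y|]$, then the $\bar g \in G$ that $\gamma$-covers $h^\star$ on the sample has empirical $k$-list loss at most the empirical loss of $h^\star$ plus $\gamma$; transferring to the population -- a single Hoeffding bound for the fixed function $h^\star$ together with uniform convergence over $G$, so that no (unavailable) uniform convergence over all of $\mathcal{H}$ is needed -- yields that the returned predictor has excess $k$-list loss $O(\epsilon)$. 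It is essential that $G$ is a list cover \emph{of $\mathcal{H}$}, not of the abstract space of $k$-list predictors, so that the comparison is against the single-hypothesis benchmark.

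\textbf{Necessity.} Suppose the $k$-fat-shattering dimension is infinite at some scale $\gamma$; by monotonicity in the scale we may take $\gamma$ as small as convenient. For each $d$, fix a $\gamma$-$k$-fat-shattered set $\{x_1,\dots,x_d\}$ with its witnessing structure: per point $x_i$, a tuple of $k+1$ pairwise well-separated target levels, and, for every assignment $\phi\colon[d]\to\{0,\dots,k\}$, a hypothesis $h_\phi\in\mathcal{H}$ with $h_\phi(x_i)$ close to the $\phi(i)$-th level for all $i$. Take $\mathcal{D}$ with $x$ uniform on $\{x_1,\dots,x_d\}$ and $y\mid x = x_i$ concentrated at the $\sigma(i)$-th level, where $\sigma$ is a uniformly random assignment; then $h_\sigma$ certifies that $\inf_{h\in\mathcal{H}}\mathbb{E}_{\mathcal{D}}[|h(x)-y|]$ is tiny. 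On the other hand, for any learner that has seen $m$ samples, a $(d-m)/d$ fraction of the shattered points are in expectation absent from the sample, and on such a point all $k+1$ levels for $\sigma(i)$ remain a posteriori near-equiprobable; since a size-$k$ list cannot be within $\gamma$ of all $k+1$ separated levels, it misses the label's level with probability $\ge \tfrac{1}{k+1}$, contributing $\Omega(\gamma/k)$ to the expected loss there. Averaging over the sample, the expected excess $k$-list loss is $\Omega\!\left(\tfrac{\gamma}{k}\cdot\tfrac{d-m}{d}\right)$, so letting $d\to\infty$ with $\gamma$ (hence $\epsilon = \Theta(\gamma/k)$) fixed pushes $m$ past any finite bound; a standard Fano/Assouad-type argument upgrades this to a high-probability statement. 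The only genuinely new feature relative to the $k=1$ lower bound is the $(k+1)$-fold level structure, which is exactly what neutralizes the hedging that a list of $k$ predictions would otherwise allow.

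\textbf{Main obstacle.} I expect the bottleneck to be the list ABCH/Sauer--Shelah lemma in the sufficiency direction. Even for $k=1$ this is the nontrivial core of the scale-sensitive theory -- one discretizes and then runs a shifting/double-counting argument to convert finiteness of a combinatorial dimension into a polynomial covering-number bound -- and the list version requires re-running this combinatorics so that it is the higher-order ``pseudo-cube'' structure encoded by the $k$-fat-shattering dimension, rather than ordinary shattering, that the counting exploits; this is also where the analogous results for list \emph{classification} \cite{charikar2023characterization} concentrate their technical effort.
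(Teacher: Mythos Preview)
Your necessity argument has a gap. You assert that on an unsampled point $x_j$ ``all $k+1$ levels for $\sigma(j)$ remain a posteriori near-equiprobable,'' but this is false for $(\gamma,k)$-fat-shattering as defined in \Cref{def:gamma_k_fat_shattering}: the definition only places $h_\phi(x_i)$ in the $\phi(i)$-th \emph{interval}, not at a fixed $\phi(i)$-th \emph{level}. If you set $y_i = h_\sigma(x_i)$ so that the benchmark is zero, then the exact value of $h_\sigma(x_i)$ within its interval can leak all of $\sigma$---including $\sigma(j)$ at unsampled $j$---because $h_\sigma$ is a single hypothesis depending on the entire vector $\sigma$; the paper notes exactly this failure in a footnote in the proof of \Cref{lemma:strong-fat-shattering-dimension-lb-quantitative}. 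If instead you fix $y_i$ to a canonical value depending only on $\sigma(i)$, then the benchmark $\inf_h \mathbb{E}[|h(x)-y|]$ need not be small, since the intervals in \Cref{def:gamma_k_fat_shattering} can be arbitrarily wide. Your argument works verbatim only for the \emph{strong} $(\gamma,k)$-fat-shattering dimension (\Cref{def:gamma-k-strong-fat-shattering}), where $h_b(x_i)=c_{i,b_i}$ exactly. The paper closes the gap by first proving the lower bound for strong fat-shattering, and then showing that infinite $(\gamma,k)$-fat-shattering implies infinite $(\gamma,k)$-strong-fat-shattering of a discretized class via novel higher-order packing-number bounds (\Cref{lem:packing-number-ub}, \Cref{lem:packing-number-lb}, \Cref{lem:relating-dweak-dstrong}); this reduction is a substantial portion of the technical work and is absent from your plan.

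On sufficiency, your covering-plus-ERM route differs genuinely from the paper's. The paper does not attempt a list analogue of the ABCH covering bound; instead it passes to a multiclass \emph{partial} concept class $\thresh{\gamma}{k}(\mathcal{H})$ whose $k$-Natarajan dimension is controlled by $\dfat{\gamma/2}(\mcH)$ (\Cref{lemma:bounding_k_dimension}), builds a weak list learner on it via one-inclusion graphs, boosts by minimax (\Cref{lemma:minimax-sampler}), and lifts predictions back to $[0,1]$ through the scaled-sum and \textsc{MergeLists} machinery (\Cref{claim:summation_tau}, \Cref{alg:merge_list}); generalization then comes from sample compression (\Cref{lemma:k_listgen-compression}) rather than uniform convergence. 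Your approach would be cleaner if the list ABCH lemma holds, but be aware that the duality between $k$-ary packing and $k$-list covering is not the usual metric one, and the paper explicitly leaves ERM-style list learners as an open question (\Cref{openquestion:problem_1}).
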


Similarly, for realizable list regression with lists of size $k$, we show that learnability is characterized by a generalization of the OIG dimension, denoted as the \textit{$k$-OIG dimension.}

\begin{theorem}[Informal, Realizable List Regression, see Theorems \ref{thm:realizable-ub}, \ref{thm:realizable-lb}]
    \label{thm:informal-realizable}
    A hypothesis class $\mcH$ is amenable to realizable $k$-list regression if and only if its $k$-OIG dimension is finite at all scales.
\end{theorem}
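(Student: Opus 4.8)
The plan is to prove the two implications separately, following the template set by the $k=1$ characterization of \cite{attias2023optimal} (via the one-inclusion graph) and the list classification characterization of \cite{charikar2023characterization}, while adapting both the combinatorial machinery and the discretization of the continuous label space $[0,1]$ to the list-regression setting.

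\textbf{Sufficiency (finite $k$-OIG dimension $\Rightarrow$ learnable).} First I would fix a target scale $\gamma$ and discretize $[0,1]$ into a grid of width $\Theta(\gamma)$, reducing realizable $\gamma$-list regression to a finite-label problem in which a predicted list is correct if some list element lands in (or near) the grid cell containing the true label. On any finite sample $S=(x_1,\dots,x_n)$, I would build a $k$-list one-inclusion graph whose vertices are the distinct $k$-list labelings of $S$ that are ``$\gamma$-realized'' by $\mcH$ and whose edges join labelings differing on a single coordinate. The key lemma — the analog of the shifting/orientation arguments of Haussler--Littlestone--Warmuth and \cite{attias2023optimal} — is that finiteness of the $k$-OIG dimension at scale $\gamma$ forces this graph to admit an orientation in which every vertex has out-degree at most $d \defeq \mathrm{OIG}_k(\gamma)$ (up to factors logarithmic in the grid size). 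Given such an orientation, the one-inclusion predictor — place the $n-1$ training points together with the test point, inspect the edge incident to the observed labeling, and output the list dictated by the orientation — has leave-one-out error at most $d/n$ in expectation, which yields the desired $\varepsilon$-accurate realizable sample-complexity bound after the standard confidence-boosting reduction (repeat on independent subsamples and aggregate the lists in a coordinatewise-robust way).

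\textbf{Necessity (learnable $\Rightarrow$ finite $k$-OIG dimension).} For the converse I would argue the contrapositive by a minimax lower bound. If the $k$-OIG dimension is infinite at some scale $\gamma$, then for every $d$ there is a finite set of points and a finite sub-family of $\mcH$ witnessing a ``$d$-rich'' $k$-list one-inclusion structure: a cube-like pattern of $k$-lists in which, from any observed $k$-list-labeling of all but one coordinate, the consistent hypotheses still realize mutually $\gamma$-far $k$-lists on the remaining coordinate. Placing the uniform distribution on the witnessing points and drawing the target uniformly from the witnessing sub-family produces a realizable instance on which any learner seeing $o(d)$ samples must, on a random unseen point, output a $k$-list that fails to $\gamma$-cover the true label with constant probability; letting $d\to\infty$ precludes learnability.

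\textbf{Main obstacle.} The crux is the combinatorial orientation lemma for the $k$-list one-inclusion graph. Unlike classification, the ``$\gamma$-agreement'' relation on real-valued lists is not transitive — two $k$-lists can each lie within $\gamma$ of a common target yet be mutually $\gamma$-far — so the notion of a ``cube'' (hence the very definition of the $k$-OIG dimension and the shifting argument bounding out-degrees) must be phrased with respect to a target-anchored, approximate agreement rather than exact equality, and one must check that discretizing $[0,1]$ inflates the dimension by at most a constant factor while keeping the graph locally finite. A secondary difficulty, on the lower-bound side, is ensuring the hard distributions are realizable by $\mcH$ itself rather than merely by its discretized proxy; this requires extracting the witnessing sub-family for infinite $k$-OIG dimension directly from $\mcH$ at the given scale, preserving the richness of the $k$-list pattern and not just its cardinality.
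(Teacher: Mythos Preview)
Your proposal rests on a misreading of the $(\gamma,k)$-OIG dimension. In the paper it is \emph{not} a shattering/cube quantity but is defined directly via orientations of the one-inclusion graph (Definition~\ref{def:gamma-k-oig-dim}): it is the largest $n$ for which some finite subgraph of $\mcG(\mcH|_S)$ forces every $k$-list orientation to have a vertex of scaled $k$-outdegree $\ge n/(2(k+1))$. Consequently, on the sufficiency side, the ``key orientation lemma'' you flag as the crux is essentially tautological for finite subgraphs and only requires a compactness argument (Tychonoff) to pass to the full, possibly infinite, graph $\mcG(\mcH|_S)$ --- no shifting and no discretization of $[0,1]$ are used (see Lemma~\ref{lem:topo-orientation}). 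Your discretization-plus-shifting plan is therefore attacking a non-problem while introducing the very transitivity issues you worry about. What \emph{does} require work, and what you underspecify, is the boosting/aggregation step: since $k$-lists have no natural order, median boosting in the style of \cite{attias2023optimal} is unavailable; the paper instead uses the minimax-and-sample technique to obtain many weak list predictions and aggregates them by taking the $\tfrac{1}{k+1}$-quantiles of the sorted concatenated list, then wraps everything as a sample compression scheme.

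On the necessity side, your ``uniform distribution on witnessing points plus a cube-like pattern'' does not match what infinite $(\gamma,k)$-OIG dimension actually gives you, and the argument as sketched would not go through. The paper's lower bound (Theorem~\ref{thm:realizable-lb}) fixes the witnessing sequence $S$ and finite subgraph $G$, then for each vertex $v$ defines a \emph{highly non-uniform} realizable distribution $P_v$ (mass $1-\Theta(\sqrt{\eps})$ on a single point). The crucial step is to build, from any algorithm $\mcA$, a $k$-list orientation $\sigma^k_\mcA$ of $G$ by orienting each edge toward the $\le k$ vertices on which $\mcA$ is likely to be $\gamma$-accurate; showing that such a $k$-sized cover exists uses the combinatorial Claim~\ref{claim:union-plus-intersection-prob-lb} (if $k+1$ events each have probability $>k/(k+1)$ and empty intersection, their union has probability $>1$). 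The definition of the dimension then guarantees a vertex with large $k$-outdegree under $\sigma^k_\mcA$, which translates back into a lower bound on $\mcA$'s error. Your proposal lacks both the algorithm-to-orientation reduction and this union-intersection ingredient, which are the substantive ideas in the lower bound.
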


We remark that our upper and lower bounds for sample complexity in both the realizable and agnostic settings are nearly matching in the asymptotic dependence on the proposed scale-sensitive dimensions, upto polylogarithmic factors.\footnote{We note however, that the scales in the dimensions in the upper and lower bounds differ---by a constant factor in the agnostic setting, and by a quadratic factor in the realizable setting.} This is in contrast to the existing bounds for list classification \cite{charikar2023characterization}, where there is a polynomial gap between the upper and lower bounds on sample complexity. While we have not attempted to optimize the dependence of the list size in our bounds (which we can typically assume to be a small constant), this is a natural direction for future study.

Our learning algorithms for both the realizable and agnostic settings follow a common template. First, we utilize finiteness of the relevant dimension in order to construct an initial \textit{weak} learner. Thereafter, using the technique of ``minimax-and-sample'' \cite{moran2016sample}, we boost the weak learner to obtain a procedure that only uses a small (sublinear) fraction of the training data, but labels the entire training data accurately. Such procedures constitute \textit{sample compression schemes}, and are known to generalize beyond the training data. Our lower bounds, on the other hand, follow the textbook template of constructing ``maximally hard'' learning problems from \textit{shattered} sets of maximal size.

Nevertheless, the existing analyses for standard regression don't immediately generalize to list regression, and we have to rely on different techniques and other novel tools for our analyses. For example, the learning algorithm that works in the standard agnostic case is simply Empirical Risk Minimzation (ERM) \cite{fatshatteringdimension}. However, in our setting, it is a priori not clear as to what list hypothesis class one might run ERM on, or how one might consolidate different ERMs from the given hypothesis class. Therefore, inspired from the works of \cite{bartlett1998prediction, daskalakis2024efficientpaclearningpossible}, we have to rely on a different recipe that involves constructing a learner for a suitably defined \textit{multiclass partial} hypothesis class \cite{partialconceptclass}, and thereafter carefully consolidating list-valued predictions for the constructed class. Similarly, for the agnostic lower bound, we are required to show novel higher-order packing number bounds that significantly generalize existing results, and might be of independent interest. 
For the realizable case, the learning algorithm of \cite{attias2023optimal} involves the median boosting algorithm \cite{simon1997bounds}, which requires ordering predictions given by different weak learners. While real-valued predictions naturally admit an ordering, it is less clear how one would order lists of predictions in order to do the required boosting updates. Hence, we are required to alternatively use the paradigm of minimax boosting mentioned above, together with a different scheme of aggregating weak list-valued predictions. Lastly, for both the realizable and agnostic lower bounds, we require using a simple yet elegant combinatorial claim (\Cref{claim:union-plus-intersection-lb}), which lower bounds the size of the union of (mutually disjoint) sets, given that each of the sets is large. 

In the rest of the paper, we incrementally set up the stage and derive all our results in long-form, systematically elaborating on all the points mentioned above and more.

\section{Preliminaries and Notation}
\label{sec:preliminaries}

 We denote the data domain by $\mcX$ and label space by $\mcY$. A hypothesis class $\mcH$ then is simply a subset of $\mcY^\mcX$. The focus of this paper is real-valued list regression with respect to the absolute loss function on a bounded interval---thus, in most places, $\mcY$ will be the interval $[0,1]$.
 
Because we care about measuring the loss of a target label against the \textit{best} label from a candidate \textit{list} of $k \ge 1$ labels, the loss function $\ell_\abs: \mcY^{\le k} \times \mcY \to [0,1]$ between a $k$-list $\mu \in \mcY^{\le k}$ and a target label $y \in \mcY$ will be measured as
\begin{align*}
	\ell_{\abs}(\mu, y) &= \min_{j \in [k]} |\mu_j - y|,
\end{align*}
where $\mu_j$ denotes the $j^{\text{th}}$ entry in $\mu$, and $[k]$ denotes the set $\{1,2,\dots,k\}$.  We also use the notation $\mu \owns_\gamma y$, read ``$\mu$ $\gamma$-contains $y$'', to denote that $\ell_\abs(\mu, y) \le \gamma$, where $\gamma \in (0,1)$. Similarly, we use the terms ``$\gamma$-close'' and ``$\gamma$-far'' to denote $\ell_{\abs}(\mu, y) \le y$ and $\ell_\abs(\mu, y) > \gamma$ respectively.

Given a distribution $\mcD$ on $\mcX \times \mcY$, the loss of a list hypothesis $\mu:\mcX \to \mcY^{\le k}$ with respect to $\mcD$ is defined as:
 \begin{align*}
     \err_{\mcD, \ell_{\abs}}(\mu)  = \E_{(x,y)\sim \mcD} \ell_\abs(\mu(x), y).
 \end{align*}
Given a sample $S = \{(x_i,y_i)\}_{i \in [n]} \in \{\mcX \times \mcY\}^n$, the empirical loss of a list hypothesis $\mu$ with respect to $S$ is defined as
\begin{align*}
	\hat{\err}_{S, \ell_{\abs}}(\mu)  =\frac{1}{n}\sum_{i=1}^n\ell_\abs(\mu(x_i), y_i).
\end{align*}

A training sample $S = \{(x_i,y_i)\}_{i \in [n]} \in \{\mcX \times \mcY\}^n$ is realizable by $\mcH$ if $\exists h^\star \in \mcH$ such that $h(x_i)=y_i$ for every $i \in [n]$. A distribution $\mcD$ over $\mcX \times \mcY$ is realizable by $\mcH$, if there exists $h^\star \in \mcH$, such that a draw from $\mcD$ corresponds to drawing $x$ from a marginal distribution $\mcD_\mcX$ over $\mcX$, and then setting $y = h^\star(x)$. Finally, we use the notation $\mcH|_T$ to refer to the restriction of $\mcH$ to an unlabeled sequence $T \in \mcX^{n}$.

\subsection{PAC Learnability}
\label{sec:pac-learnability}
We formally define the notion of agnostic list regression in the PAC learning framework.
\begin{definition}[Agnostic List Regression]
    \label{def:agnostic-regression}
    For $k \ge 1$, a hypothesis class $\mcH \subseteq [0,1]^{\mcX}$ is said to be agnostically $k$-list learnable if there exists a learning algorithm $\mcA$ and a sample complexity function \(m^{k, \mathrm{ag}}_{\mcA, \mcH} : (0, 1)^2 \rightarrow \mathbb{N}\) satisfying the following guarantee: for any \(\eps, \delta \in (0, 1)\) and any distribution \(\mcD\) over \(\mcX \times [0,1]\), if the algorithm $\mcA$ is provided with \(m \geq m^{k, \mathrm{ag}}_{\mcA, \mcH}(\eps, \delta)\) i.i.d. examples drawn from \(\mcD\) (denoted as \(S\)), then $\mcA$ produces a $k$-list hypothesis \(\mcA(S)\) such that, with probability at least \(1 - \delta\) over the draw of $S$ and the randomness in $\mcA$,
    \[
    \err_{\mcD,\ell_{\abs}}(\mcA(S)) \leq \inf_{h \in \mcH} \err_{\mcD,\ell_\abs}(h) + \eps.
    \]
\end{definition}

Note that the case of $k=1$ corresponds to the standard definition of agnostic regression \cite{haussler1992decision, fatshatteringdimension, bartlett1994fat}. In this case, agnostic learnability is characterized by a scale-sensitive quantity known as the \textit{fat-shattering} dimension \cite{fatshatteringdimension}. We formally define this quantity, whose finiteness (at all scales) is both sufficient and necessary for agnostic learnability of a hypothesis class. Furthermore, for classes having finite fat-shattering dimension, any ERM is a successful agnostic list regression algorithm.
\begin{definition}[$\gamma$-fat-shattering dimension \cite{fatshatteringdimension}]
    \label{def:fat-shattering-dimension}
    For a hypothesis class $\mathcal{H} \subseteq [0,1]^{\mathcal{X}}$ and $\gamma \in (0,1)$, the $\gamma$-fat-shattering dimension $\mathbb{D}^{\fat}_{\gamma}(\mcH)$ is the largest positive integer $d$ such that there exist $x_1,x_2 \ldots, x_d \in \mathcal{X}$ and $s_1,s_2 \ldots s_d \in [0, 1]$, such that for all $b \in \{0, 1\}^d$, there is some $h_b \in \mathcal{H}$ satisfying $h_b(x_i) \geq s_i + \gamma$ if $b_i = 1$ and $h_b(x_i) \leq s_i -\gamma$ if $b_i = 0$.
\end{definition}

We next formally define the notion of realizable list regression, which only considers learning problems with respect to distributions realizable by the hypothesis class.

\begin{definition}[Realizable List Regression]
    \label{def:realizable-regression}
    For $k \ge 1$, a hypothesis class $\mcH \subseteq [0,1]^{\mcX}$ is said to be (realizably)\footnote{We will be explicit about using the term ``agnostic'' whenever talking about agnostic learnability; otherwise, unless specified, learnability will usually refer to \textit{realizable} learnability.} $k$-list learnable if there exists a learning algorithm $\mcA$ and a sample complexity function \(m^{k, \mathrm{re}}_{\mcA, \mcH} : (0, 1)^2 \rightarrow \mathbb{N}\) satisfying the following guarantee: for any \(\eps, \delta \in (0, 1)\) and any distribution \(\mcD\) over \(\mcX \times [0,1]\) realizable by $\mcH$, if the algorithm $\mcA$ is provided with \(m \geq m^{k, \mathrm{re}}_{\mcA, \mcH}(\eps, \delta)\) i.i.d. examples drawn from \(\mcD\) (denoted as \(S\)), then $\mcA$ produces a $k$-list hypothesis \(\mcA(S)\) such that, with probability at least \(1 - \delta\) over the draw of $S$ and the randomness in $\mcA$,
    \[
    \err_{\mcD,\ell_{\abs}}(\mcA(S)) \leq \eps.
    \]
\end{definition}
In the standard case of $k=1$, the aforementioned fat-shattering dimension (\Cref{def:fat-shattering-dimension}) \textit{does not} characterize learnability of a hypothesis class under the realizability assumption \cite[Section 6]{bartlett1994fat}. Instead, the dimension that characterizes realizable regression was identified by \cite{attias2023optimal} to be a combinatorial quantity  stated in terms of the \textit{One-Inclusion Graph (OIG)} of the class. 

\begin{definition}[One-inclusion graph \cite{haussler1994predicting,RUBINSTEIN200937}]
    \label{def:oig}
    Consider the set $[n]$ and a hypothesis class $\mathcal{H} \subseteq [0,1]^{[n]}$. We define a hypergraph $\mcG(\mcH) = (V,E)$ such that $V=\mathcal{H}$. Consider any direction $i \in [n]$ and any mapping $f:[n]\setminus \{i\} \rightarrow [0,1]$: together, these induce the hyperedge $e_{i,f} = \{h \in \mcH: h(j) =f(j), \forall j \in [n] \setminus \{i\} \}.$ The edge set $E$ is then defined to be the collection 
    $$
        E = \{e_{i,f}: i \in [n], f: [n] \setminus \{i\} \rightarrow [0,1], e_{i,f} \neq \emptyset \}.
    $$
\end{definition}
When every hyperedge in $\mcG(\mcH)$ is mapped to one of its constituent vertices, we obtain an \textit{orientation} of the hypergraph. \cite{attias2023optimal} introduced the notion of ``scaled outdegree'' for such orientations.

\begin{definition}[Orientation and scaled outdegree \cite{attias2023optimal}] 
    \label{def:orientation_outdegree}
    An orientation of the one-inclusion graph $\mcG(\mcH)$ of a hypothesis class $\mcH \subseteq [0,1]^\mcX$ is a mapping $\sigma: E\rightarrow V$ such that $\sigma(e) \in E$ for every $e \in E$. We denote by $\sigma_i(e) \in [0,1]$ the $i^{th}$ entry of the vertex that the edge $e$ is oriented to.

    For a vertex $v \in V$, which corresponds to some hypothesis $h \in \mathcal{H}$, let $v_i$ be the $i^{th}$ entry of $v$ (which corresponds to $h(i))$, and let $e_{i,v}$ be the hyperedge adjacent to $v$ in the direction $i$. For $\gamma \in (0,1)$, the (scaled) outdegree of vertex $v$ under $\sigma$ is 
    $$
        \mathsf{outdeg}(v;\sigma,\gamma) = \left|\{i \in [n]: \ell_\abs(\sigma_i(e_{i,v}), v_i) > \gamma\}\right|.
    $$
    The maximum scaled outdegree of $\sigma$ is denoted by $\mathsf{outdeg}(\sigma, \gamma) := \max\limits_{v \in V} \mathsf{outdeg}(v;\sigma,\gamma)$. 
\end{definition}

Based on this notion of scaled outdegree, \cite{attias2023optimal} defined the $\gamma$-OIG dimension of a hypothesis class, and showed that its finiteness is both necessary and sufficient for realizable learnability of a hypothesis class.
\begin{definition}[OIG dimension \cite{attias2023optimal}]
    \label{def:oig-dimension}
    Consider a hypothesis class $\mathcal{H} \subseteq [0,1]^{\mathcal{X}}$ and let $\gamma \in (0,1)$. The $\gamma$-OIG dimension of $\mathcal{H}$ is defined as follows:
    \begin{align}
        \mathbb{D}^\oig_{\gamma}(\mcH) &:= \sup \left\{n \in \mathbb{N}: \exists S \in \mathcal{X}^n \text{ such that } \exists \text{ finite subgraph } G= (V,E)\text{ of } \mcG(\mcH|_S) \newline \text{ such that } \nonumber \right.\\ &\qquad\qquad \left.\forall \text{ orientations } \sigma , \exists v \in V \text { such that } \mathsf{outdeg}(v;\sigma,\gamma) > \frac{n}{3}\right\}.
    \end{align}
\end{definition}
It is worth mentioning that the OIG dimension is linked to the dual definition of ``DS dimension'' of a hypothesis class---informally, if a hypothesis class DS-shatters a sequence $S$ of size $d$, then for any possible orientation of the projection of the class on $S$, there is a vertex that ends up having large outdegree \cite[Lemma 12]{brukhim2022characterization}.

\subsection{Sample Compression}
\label{sec:sample-compression}

We now define the notion of \textit{sample compression schemes} \cite{littlestone1986relating}, which are a useful tool in deriving learning algorithms.

\begin{definition}[List Sample Compression]
    For any domain $\mcX$ and label set $\mcY$,  a \emph{$k$-list sample compression scheme} for the tuple $(\mcX, \mcY)$ is a pair $(\kappa, \rho)$, comprising of a compression function $\kappa : (\mcX \times \mcY)^{*} \to (\mcX \times \mcY)^{*} \times \{0,1\}^{*}$ and a \emph{reconstruction function} $\rho : (\mcX \times \mcY)^{*} \times \{0,1\}^{*} \to (\mcY^k)^\mcX$, satisfying the following property: for any sequence $S \in (\mcX \times \mcY)^{*}$, $\kappa(S)$ evaluates to some tuple $(S', B) \in (\mcX \times \mcY)^{*} \times \{0,1\}^{*}$, where $S'$ is a sequence of elements of $S$, and the bit string $B$ is some ``side information''.
    Given $S = \{(x_i,y_i)\}_{i \in [n]} \in (\mcX \times \mcY)^n$, let $(S', B) := \kappa(S)$, where $S'=\{(x_{i_1}, y_{i_1}),\dots,(x_{i_r}, y_{i_r})\}$ for $(i_1,\dots,i_r) \in [n]^r$. Furthermore, denote $S \setminus \kappa(S) := \{(x_j, y_j) \in S: j \notin (i_1,\dots,i_r)\}$. Define $|\kappa(S)| := |S'| + |B|$ to be the sum of the size of $S'$ and the length of $B$.  The \emph{size} of the compression scheme $(\kappa, \rho)$ for \emph{$n$-sample datasets} is $|\kappa(n)| := \max_{S \in (\mcX \times \mcY)^{\leq n}} |\kappa(S)|$. 
\end{definition}

In particular, we will make extensive use of the following lemma, which is well-established by this point (e.g., \cite{david2016statisticallearninglenscompression, daskalakis2024efficientpaclearningpossible}), and characterizes the generalization properties of sample compression schemes. A straightforward proof can be obtained, for example, by observing that all the calculations in Sections B.1 and B.2 in \cite{charikar2023characterization} hold true %
for any loss function $\ell: \mcY^k \times \mcY \to [0,1]$.

\begin{lemma}[Generalization by compression, essentially Theorem 30.2 in \cite{shalev2014understanding}]
    \label{lemma:k_listgen-compression}
    Consider a domain $\mcX$ and label set $\mcY$, together with a loss function $\ell : \mcY^k \times \mcY \to [0,1]$. Let $(\kappa, \rho)$ be any sample compression scheme satisfying $|\kappa(n)| \leq n/2$ for all large enough $n$. For any distribution $\mcD$ over $\mcX \times [0,1]$, and  any $\delta \in (0,1)$, the following holds with probability at least $1-\delta$ over $S \sim \mcD^n$:
    \begin{align*}
        \err_{\mcD,\ell}(\rho(\kappa(S))) &\le \hat \err_{S, \ell}(\rho(\kappa(S))) + O\left(\sqrt{\hat \err_{S \setminus \kappa(S), \ell}(\rho(\kappa(S))) \left(\frac{|\kappa(n)|\log(n) + \log(1/\delta)}{n}\right)}\right) \\
        &\qquad+ O\left(\frac{|\kappa(n)|\log(n) + \log(1/\delta)}{n}\right).
    \end{align*}
\end{lemma}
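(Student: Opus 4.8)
The plan is to check that the classical sample-compression generalization argument (Theorem 30.2 of \cite{shalev2014understanding}; see also \cite{david2016statisticallearninglenscompression, daskalakis2024efficientpaclearningpossible}, and the explicit calculations in Sections~B.1--B.2 of \cite{charikar2023characterization}) never uses any property of the loss beyond its taking values in $[0,1]$, and in particular is completely indifferent to whether predictions are scalars in $\mcY$ or $k$-lists in $\mcY^k$; it therefore transfers verbatim. Concretely, I would start by writing $m := |\kappa(n)|$ and noting that, for $n$ large enough that $m \le n/2$, every possible output of $\kappa$ on a dataset of size $n$ is encoded by a pair $(I, B)$: a tuple $I = (i_1,\dots,i_r) \in [n]^r$ with $r \le m$ recording which examples of $S$ are retained in $S'$, together with the side-information string $B \in \{0,1\}^{\le m}$, subject to $r + |B| \le m$. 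The key structural observation is that for a \emph{fixed} encoding $(I,B)$ the reconstructed list hypothesis
\[
\mu_{I,B} \;:=\; \rho\big(\{(x_{i_1},y_{i_1}),\dots,(x_{i_r},y_{i_r})\},\, B\big) \in (\mcY^k)^{\mcX}
\]
depends only on the $r \le m$ retained labeled examples and on $B$.

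Next I would fix such an encoding $(I,B)$ and condition on the $r$ retained examples. Because the examples in $S$ are i.i.d.\ from $\mcD$, conditioning on $r$ of the coordinates leaves the remaining $n - r \ge n - m \ge n/2$ coordinates distributed as fresh i.i.d.\ draws from $\mcD$ that are moreover independent of the now-fixed hypothesis $\mu_{I,B}$. Hence $\hat{\err}_{S \setminus \kappa(S), \ell}(\mu_{I,B})$ is an average of at least $n/2$ independent random variables, each lying in $[0,1]$ (this is the only place the boundedness of $\ell$ is used), with common mean $\err_{\mcD,\ell}(\mu_{I,B})$. A standard relative deviation inequality for bounded random variables (Bernstein / multiplicative Chernoff) then gives, for any target probability $\eta \in (0,1)$, with probability at least $1-\eta$,
\[
\err_{\mcD,\ell}(\mu_{I,B}) \;\le\; \hat{\err}_{S \setminus \kappa(S), \ell}(\mu_{I,B}) \;+\; O\!\left(\sqrt{\hat{\err}_{S \setminus \kappa(S), \ell}(\mu_{I,B}) \cdot \tfrac{\log(1/\eta)}{n}}\,\right) \;+\; O\!\left(\tfrac{\log(1/\eta)}{n}\right).
\]

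Finally I would take a union bound over all encodings: there are at most $(n+1)^m$ choices of $I$ and at most $2^m$ choices of $B$, hence at most $(2(n+1))^m$ encodings in total, so choosing $\eta = \delta/(2(n+1))^m$ makes $\log(1/\eta) = O(m\log n + \log(1/\delta)) = O(|\kappa(n)|\log n + \log(1/\delta))$ and lets the displayed bound hold simultaneously for every encoding, in particular for the one actually produced by $\kappa(S)$; this is exactly the claimed inequality, except that in the leading term one may freely replace $\hat{\err}_{S\setminus\kappa(S),\ell}(\rho(\kappa(S)))$ by $\hat{\err}_{S,\ell}(\rho(\kappa(S)))$, since $\hat{\err}_{S\setminus\kappa(S),\ell} \le \tfrac{n}{n-m}\hat{\err}_{S,\ell} \le 2\hat{\err}_{S,\ell}$, which only weakens the statement. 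I do not expect a genuine obstacle here. The two points that merit a little care --- and precisely the ones where list-valued prediction could conceivably have caused trouble, but does not --- are (i) that the union bound must be over compression \emph{encodings} $(I,B)$, not over reconstructed hypotheses, which is what keeps the effective complexity proportional to $|\kappa(n)|$ rather than to some capacity measure of the image of $\rho$; and (ii) that the conditioning step genuinely preserves the i.i.d.\ structure of the held-out examples. Both are unaffected by replacing $\mcY$-valued predictions with $\mcY^k$-valued ones, or $\ell_{\abs}$ with an arbitrary $[0,1]$-bounded loss $\ell$.
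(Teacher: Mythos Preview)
Your proposal is correct and follows essentially the same approach the paper indicates: the paper does not give a self-contained proof but simply observes that the standard sample-compression argument (as in \cite{shalev2014understanding, charikar2023characterization}) goes through verbatim for any $[0,1]$-bounded loss, which is exactly what you have sketched. One minor imprecision: to replace the leading $\hat{\err}_{S\setminus\kappa(S),\ell}$ by $\hat{\err}_{S,\ell}$ without picking up a constant factor, use $\hat{\err}_{S\setminus\kappa(S),\ell} \le \hat{\err}_{S,\ell} + m/n$ (the additive $m/n$ is absorbed into the last $O(\cdot)$ term) rather than the multiplicative bound you wrote.
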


\section{Agnostic List Regression}
\label{sec:agnostic_intro}

At a high level, the $\gamma$-fat-shattering dimension (\Cref{def:fat-shattering-dimension}), which characterizes standard agnostic regression (i.e., $k=1$), accounts for the possibility of a function being an offset (namely $\gamma$) away on either side of designated anchors on a sequence of points: no matter what label one predicts, the label on the other side of the anchor is going to cause an error (in absolute value) of at least $\Omega(\gamma)$. However, this issue seems to go away if we are allowed to predict two labels that are separated by $\Omega(\gamma)$. Motivated by this, we propose the following definition of \textit{$(\gamma, k)$-fat-shattering} which naturally generalizes $\gamma$-fat-shattering to account for $k \ge 1$ possible predictions, and recovers the $\gamma$-fat-shattering dimension when $k=1$.

\begin{definition}[$(\gamma,k)$-fat-shattering dimension]
    \label{def:gamma_k_fat_shattering}
    For $\gamma \in (0,1)$, a hypothesis class $\mcH \subseteq [0,1]^\mcX$$(\gamma,k)$-fat-shatters a sequence $S = (x_1,\ldots,x_d) \in \mcX^d$ if there exist vectors $c_1,\ldots,c_d \in [0,1]^{k}$ satisfying \footnote{For $j \in [k]$, $c_{i,j}$ denotes the $j^{\text{th}}$ coordinate of $c_i$.}
    \begin{align*}
        c_{i, j+1} \ge c_{i, j} + 2\gamma, \forall j \in \{1,\dots,k-1\}, \forall i \in \{1,\dots,d\},
    \end{align*}
    such that $\forall b \in \{0,1,\ldots,k\}^d$, there exists $h_b \in \mcH$ satisfying
    \begin{itemize}
        \item $h_b(x_i) \le c_{i,1} - \gamma $ if $b_i =0 $,
        \item  $c_{i,b_i} + \gamma \le h_b(x_i) \le c_{i,b_i+1} - \gamma$ if $b_i \notin \{0,k\}$,
        \item $c_{i,k} + \gamma \le h_b(x_i)$ if $b_i=k$.
\end{itemize}
    The $(\gamma,k)$-fat-shattering dimension $\dfat{\gamma}(\mcH)$ is defined to be the size of the largest $(\gamma,k)$-fat-shattered sequence. 
\end{definition}

Observe that $\dfat[k_1]{\gamma}(\mcH) \ge \dfat[k_2]{\gamma}(\mcH)$ for $k_1 < k_2$; however, the gap can be unbounded. The following (essentially Example 3 in \cite{charikar2023characterization}) is a simple example of a hypothesis class that has finite $(\gamma,2)$-fat shattering dimension, but infinite $\gamma$-fat-shattering dimension for every $\gamma \le 0.05$.

\begin{example}
    \label{example:fat_shattering_list}
    Consider $\mcH \subseteq [0,1]^\mathbb{N}$ defined as:
    \begin{alignat*}{10}
        \mcX\quad=\qquad &\quad1\qquad &&\quad2\qquad &&\quad 3\qquad &&\quad 4\qquad &&\quad 5\qquad &&\quad6 \qquad &&\quad 7 \qquad &&\quad 8 \qquad &&\quad 9 \qquad &&\;\dots\qquad \\
        \hline\\
        \mcH \quad =\qquad&\begin{Bmatrix}
        0.1 \\ 0.2 \\ 0.3 
        \end{Bmatrix}\times
        &&\begin{Bmatrix}
        0.1 \\ 0.2 \\ 0.3
        \end{Bmatrix}\times
        &&\begin{Bmatrix}
        0.1 \\ 0.2 \\ 0.3
        \end{Bmatrix}\times
        &&\begin{Bmatrix}
        0.1 \\ 0.2 
        \end{Bmatrix}\times
        &&\begin{Bmatrix}
        0.1 \\ 0.2 
        \end{Bmatrix}\times
        &&\begin{Bmatrix}
        0.1 \\ 0.2 
        \end{Bmatrix}\times&&\begin{Bmatrix}
        0.1 \\ 0.2 
        \end{Bmatrix}\times
        &&\begin{Bmatrix}
        0.1 \\ 0.2 
        \end{Bmatrix}\times
        &&\begin{Bmatrix}
        0.1 \\ 0.2 
        \end{Bmatrix}\times
        &&\;\dots
    \end{alignat*}
    For any set of points $\{x_i\}_{i \in [n]} \in \mcX^n$, $\mcH$ projected onto this set contains all possible patterns in $\{0.1,0.2\}^n$, and hence the $\gamma$-fat-shattering dimension of $\mcH$ is infinite for any $\gamma \le 0.05$. However, the $(\gamma,2)$-fat-shattering dimension of $\mcH$ is just 3. We can see that $\mcH$ $(\gamma,2)$-fat-shatters the sequence $(1,2,3)$ (e.g., with respect to $c_{i,1}=0.15$ and $c_{i,2}=0.25$ for $i=1,2,3$). However, any sequence of points that contains an $x$ in $\{4,5,6,\ldots\}$ cannot be $(\gamma,2)$-fat-shattered by $\mcH$, because there do not exist three functions $h_1,h_2$ and $h_3$ that are all distinct at such an $x$.
\end{example}
Thus, even if the hypothesis class in \Cref{example:fat_shattering_list} is seemingly ``simple'', it is not agnostically learnable because of infinite $\gamma$-fat-shattering dimension. We now proceed to state our main theorems characterizing agnostic list learnability of hypothesis classes, which ensure that classes like the one above are agnostically list learnable on account of having finite $(\gamma, k)$-fat-shattering dimension, and also establish that finiteness of this quantity is in fact necessary for agnostic list learnability.

\subsection{Upper Bound for Agnostic List Regression}
\label{sec:agnostic-upper-bound}

\begin{restatable}[Finite $(\gamma,k)$-fat-shattering dimension is sufficient]{theorem}{agnosticupperbound}
    \label{thm:agnostic-upper-bound}
    Let $\mcH \subseteq [0,1]^{\mcX}$ be a hypothesis class. For any distribution $\mcD$ over $\mcX \times [0,1]$ and $\gamma \in (0,1)$, there exists a list learning algorithm $\mcA$ which takes as input an i.i.d. sample $S \sim \mcD^n$ and outputs a $k$-list hypothesis $\alg(S)$ satisfying
    \begin{align*}
        \err_{\mcD,\ell_\abs}(\alg(S)) \leq & \inf_{h \in \mcH} \err_{\mcD,\ell_\abs}(h)+ O(k)\cdot\gamma + \widetilde{O}\left(\sqrt{\frac{k^8\ \dfat{\gamma/2}(\mcH) + \log(1/\delta)}{n}}\right) 
    \end{align*}
    with probability at least $1-\delta$.\footnote{Here, $\widetilde{O}$ hides $\polylog$ factors in $n,k$ and $1/\gamma$.} In other words, $m^{k, \mathrm{ag}}_{\mcA, \mcH}(\eps, \delta) \le \widetilde{O}\left(\frac{k^8\ \dfat{\gamma/2}(\mcH) + \log(1/\delta)}{\eps^2}\right)$ for $\gamma=\frac{\eps}{O(k)}$.
\end{restatable}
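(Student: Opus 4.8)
The plan is to follow the ``weak learner $+$ minimax-and-sample'' template described in the introduction, but applied to an auxiliary \emph{multiclass partial} concept class obtained by discretizing the range of $\mcH$, rather than to $\mcH$ directly.

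First I would discretize $[0,1]$ at scale $\Theta(\gamma)$ and pass to a multiclass partial concept class $\mcH'$ over $\Theta(1/\gamma)$ ``cell'' labels: for each $h \in \mcH$, the partial concept $g_h$ reports the cell containing $h(x)$ when $h(x)$ lies well inside a cell, and is undefined ($\star$) otherwise. Using two interleaved grids (offset by half a cell) and making cells somewhat wider than the ``well-inside'' region, one can arrange that (i) every value lies well inside a cell of some grid, and (ii) if $|h(x)-y| \le \gamma$ and $y$ is well inside a cell, then $h(x)$ lies in the same cell. Consequently a near-optimal $h^\star \in \mcH$ is ``realizable'' by $g_{h^\star}$ on every sample point except the $O(\hat\err_{S,\ell_{\abs}}(h^\star)/\gamma)$ fraction on which $|h^\star(x_i)-y_i| > \gamma$. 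Conversely, a $k$-list of cells is decoded into a real-valued $k$-list by taking cell centers, which costs only $O(\gamma)$ in $\ell_{\abs}$ on any point whose target cell is in the list.

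The combinatorial heart of the proof is to show that the dimension $d'$ governing realizable $k$-list learnability of $\mcH'$ is at most $\poly(k) \cdot \dfat{\gamma/2}(\mcH)$. The idea: a sequence $k$-list-shattered by $\mcH'$ exhibits, at each coordinate, $k+1$ distinct achievable cells; since distinct cells are $\Omega(\gamma)$-separated, after thinning by a $\poly(k)$ factor (to fix, per coordinate, a consistent ordered choice of $k+1$ cells) this produces a $(\Omega(\gamma),k)$-fat-shattered sequence for $\mcH$ in the sense of \Cref{def:gamma_k_fat_shattering}. I expect this matching of the ordered, $(k{+}1)$-valued shattering pattern of fat-shattering against the list-multiclass shattering notion --- while keeping the $k$-dependence polynomial --- to be the main obstacle.

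Given this, finiteness of $d'$ supplies a realizable $k$-list weak learner for $\mcH'$ with a $\poly(k,d')$-size description (e.g.\ via sample compression for multiclass partial classes \cite{partialconceptclass}). I would then boost it agnostically via minimax-and-sample \cite{moran2016sample}: run the zero-sum game between a reweighting player over the $n$ sample points and the weak learner (the weak-learning advantage being certified on each reweighted subsample by the fact that $h^\star$ is inconsistent on only a small fraction of it), invoke the minimax theorem, sample $\poly(k,d')$ weak hypotheses, and \emph{consolidate} their $\le k$-lists of cells into a single $\le k$-list by clustering nearby cells and keeping those supported by a majority of the weak learners; this merge, performed while respecting the list budget $k$, is what introduces the $O(k)\cdot\gamma$ approximation term, and it (together with the discretization) is the other delicate point. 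The resulting hypothesis depends only on $\poly(k,d') = \widetilde O(k^8\,\dfat{\gamma/2}(\mcH))$ sample points plus a short side string, i.e.\ it is a $k$-list sample compression scheme of that size achieving empirical error at most $\hat\err_{S,\ell_{\abs}}(h^\star) + O(k)\gamma$. Finally, plugging this scheme into \Cref{lemma:k_listgen-compression} with $\ell = \ell_{\abs}$, and using concentration of $\hat\err_{S,\ell_{\abs}}(h^\star)$ around $\inf_{h \in \mcH}\err_{\mcD,\ell_{\abs}}(h)$, yields the stated error bound; the bound on $m^{k,\mathrm{ag}}_{\mcA,\mcH}(\eps,\delta)$ then follows by taking $\gamma = \eps/O(k)$ and solving for $n$.
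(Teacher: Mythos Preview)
Your high-level plan (discretize, bound the multiclass list dimension of the resulting partial class by $\dfat{\gamma/2}(\mcH)$, obtain a weak learner, boost via minimax-and-sample, and read off the bound from the compression lemma) matches the paper's approach. The paper also first relabels the sample by a near-empirical-minimizer $h^\star$ to make it realizable (Line~\ref{line:call-ermreal} of \Cref{alg:realizable-agnostic-regression}) and only then plays the minimax game; your ``boost it agnostically'' wording glosses over this, and as stated it fails, since the reweighting player can put all mass on the points where $h^\star$ is far from $y$ and destroy the weak-learning guarantee. This is a minor fix --- just relabel first, as the paper does.

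The substantive difference, and the place where your sketch has a real gap, is the partial class you discretize to and the consolidation step. You propose a cell-valued partial class on $\mcX$ (with two interleaved grids so that every value is well inside a cell of \emph{some} grid), and then to merge the weak learners' lists ``by clustering nearby cells.'' But after minimax you only know that \emph{one} of the two grids gives a $k$-list of cells containing the correct cell --- the other grid's $k$-list may be arbitrary on points where $\hat y_i$ sits on a boundary --- and you do not know which. That leaves $2k$ candidate centers with no stated mechanism to reduce to $k$ while retaining one that is $O(k\gamma)$-close to $y_i$. The paper avoids exactly this problem by a more elaborate construction: its partial class $\thresh{\gamma}{k}(\mcH)$ lives on the enlarged domain $\mcX\times\disclist$ with labels in $\{0,\ldots,k,\star\}$ (\Cref{def:k_threshold_class}); the scaled-sum identity (\Cref{claim:summation_tau}) turns the learned per-threshold labels back into a real number; and the cover-existence argument (\Cref{claim:coverexistence}) shows that the set $\mcC$ of all scaled sums arising from ``consistent'' indexings into $J$ is always $k$-coverable at radius $O(k\gamma)$, precisely because $J$ is a $k$-list. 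This is the piece of machinery that lets them output a genuine $k$-list while only paying $O(k)\cdot\gamma$, and your clustering/majority sketch does not supply an analogue. (Separately, your ``thinning by a $\poly(k)$ factor'' is unnecessary: $k$-Natarajan shattering already fixes $k{+}1$ cells per coordinate and achieves all patterns, so one directly reads off a $(\gamma/2,k)$-fat-shattered sequence as in \Cref{lemma:bounding_k_dimension}.)
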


As a corollary of our analysis, we also obtain the following tighter guarantee for realizable list regression.

\begin{restatable}[Realizable list regression $(\gamma,k)$-fat-shattering upper bound]{corollary}{realizableupperboundcorollary}
    \label{thm:realizable-upper-bound-k-fat-shattering}
    Let $\mcH \subseteq [0,1]^{\mcX}$ be a hypothesis class. For any distribution $\mcD$ over $\mcX \times [0,1]$ realizable by $\mcH$ and $\gamma \in (0,1)$, there exists a list learning algorithm $\mcA$ which takes as input an i.i.d. sample $S \sim \mcD^n$ and outputs a $k$-list hypothesis $\alg(S)$ satisfying
    \begin{align*}
        \err_{\mcD,\ell_{\abs}}(\alg(S)) \leq O(k)\cdot \gamma + \widetilde{O}\left(\frac{k^8\ \dfat{\gamma/2}(\mcH) + \log(1/\delta)}{n}\right) 
    \end{align*}
    with probability at least $1-\delta$. In other words, $m^{k, \mathrm{re}}_{\mcA, \mcH}(\eps, \delta) \le \widetilde{O}\left(\frac{k^8\ \dfat{\gamma/2}(\mcH) + \log(1/\delta)}{\eps}\right)$ for $\gamma=\frac{\eps}{O(k)}$.
\end{restatable}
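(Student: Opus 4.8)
The plan is to obtain this corollary ``for free'' from the proof of \Cref{thm:agnostic-upper-bound}, by reusing verbatim the sample compression scheme constructed there and observing that realizability automatically upgrades its generalization guarantee from a $1/\sqrt{n}$ rate to a $1/n$ rate. Recall that the proof of \Cref{thm:agnostic-upper-bound} builds a $k$-list sample compression scheme $(\kappa,\rho)$ which, on any sample $S$ of size $n$, outputs a $k$-list hypothesis $\mu := \rho(\kappa(S))$ of compressed size $|\kappa(n)| = \widetilde{O}\!\left(k^8\,\dfat{\gamma/2}(\mcH)\right)$ and with empirical guarantee $\hat{\err}_{S,\ell_\abs}(\mu) \le \inf_{h \in \mcH}\hat{\err}_{S,\ell_\abs}(h) + O(k)\gamma$. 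Neither the scheme nor this intermediate bound uses the realizability assumption anywhere, so both carry over unchanged.

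The first step is to specialize to realizable $\mcD$. Any $S \sim \mcD^n$ is then realizable by $\mcH$, so there is $h^\star \in \mcH$ agreeing with every label in $S$; reading $h^\star$ as a (degenerate, or all-coordinates-equal) $k$-list hypothesis shows $\inf_{h\in\mcH}\hat{\err}_{S,\ell_\abs}(h) = 0$, and hence $\hat{\err}_{S,\ell_\abs}(\mu) \le O(k)\gamma$. Moreover, since $S \setminus \kappa(S)$ retains at least $n - |\kappa(n)| \ge n/2$ of the points of $S$ (for $n$ large enough) and $\ell_\abs \ge 0$, a one-line averaging argument gives $\hat{\err}_{S\setminus\kappa(S),\ell_\abs}(\mu) \le 2\,\hat{\err}_{S,\ell_\abs}(\mu) \le O(k)\gamma$ as well.

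The second step is to plug these two empirical bounds into \Cref{lemma:k_listgen-compression} with $\ell = \ell_\abs$. Writing $\Lambda := |\kappa(n)|\log n + \log(1/\delta) = \widetilde{O}\!\left(k^8\,\dfat{\gamma/2}(\mcH)\right) + \log(1/\delta)$, the lemma gives, with probability at least $1-\delta$,
\[
\err_{\mcD,\ell_\abs}(\mu) \;\le\; \hat{\err}_{S,\ell_\abs}(\mu) \;+\; O\!\left(\sqrt{\hat{\err}_{S\setminus\kappa(S),\ell_\abs}(\mu)\cdot \tfrac{\Lambda}{n}}\right) \;+\; O\!\left(\tfrac{\Lambda}{n}\right).
\]
Using $\sqrt{xy} \le x + y$ on the middle term with $x = \hat{\err}_{S\setminus\kappa(S),\ell_\abs}(\mu) = O(k)\gamma$ and $y = \Lambda/n$ absorbs it into the first and third terms, leaving $\err_{\mcD,\ell_\abs}(\mu) \le O(k)\gamma + \widetilde{O}\!\left(\frac{k^8\,\dfat{\gamma/2}(\mcH) + \log(1/\delta)}{n}\right)$, the claimed bound. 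The sample-complexity statement then follows by choosing $\gamma = \eps/O(k)$ to make the first term at most $\eps/2$ and solving $\widetilde{O}\!\left(\frac{k^8\,\dfat{\gamma/2}(\mcH) + \log(1/\delta)}{n}\right) \le \eps/2$ for $n$.

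I don't anticipate a genuine obstacle: the statement is essentially a by-product of the agnostic analysis. The one thing to be careful about is that \Cref{thm:agnostic-upper-bound} must be proved via a scheme that comes with a standalone \emph{empirical}-error guarantee of the above form (the population statement of \Cref{thm:agnostic-upper-bound} being recovered from it via \Cref{lemma:k_listgen-compression} together with uniform convergence of $\inf_{h\in\mcH}\hat{\err}_{S,\ell_\abs}(h)$ to $\inf_{h\in\mcH}\err_{\mcD,\ell_\abs}(h)$). If the write-up of \Cref{thm:agnostic-upper-bound} does not already expose this intermediate inequality, it should be stated there explicitly so that the argument above can invoke it directly; this is where the $O(k)\gamma$ additive slack in both theorems originates.
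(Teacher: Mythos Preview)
Your proposal is correct and mirrors the paper's own proof: both reuse the compression scheme from \Cref{thm:agnostic-upper-bound}, note that realizability forces $\hat{\err}_{S,\ell_\abs}(\mu)\le O(k)\gamma$, bound $\hat{\err}_{S\setminus\kappa(S),\ell_\abs}(\mu)$ by a constant multiple of this (the paper uses $\hat{\err}_{S,\ell_\abs}(\mu)+|\kappa(S)|/n$, you use $2\,\hat{\err}_{S,\ell_\abs}(\mu)$---either works), and then apply AM--GM to collapse the $\sqrt{\cdot}$ term in \Cref{lemma:k_listgen-compression}. The only cosmetic difference is that the paper invokes $\RegRealizable$ directly on $S$ (skipping the relabeling step), whereas you run the full agnostic scheme and observe the relabeling is vacuous; both yield the same compression and the same empirical bound.
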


We divide the proof of Theorem \ref{thm:agnostic-upper-bound} into four sections. First, in \Cref{sec:partial_hypothesis_constr}, we construct a \textit{discretized} version of the hypothesis class $\mcH$ with discretization width $\gamma$. The discretized class is a \textit{multiclass partial} hypothesis class \cite{partialconceptclass}, whose relevant complexity parameter, namely the $k$-Natarajan dimension \cite{charikar2023characterization}, is upper-bounded by the $(\gamma/2, k)$-fat-shattering dimension of $\mcH$. This allows us to obtain a weak learner for the partial class in \Cref{sec:weak_learner_constr}. The next crucial step is to go back from the discretized space to the original space---in \Cref{sec:training_error_bound}, we show that by carefully scaling and aggregating the predictions of the weak learner, we are able to obtain accurate, list-valued predictions on the training data in the \textit{original} space. We finally argue in \Cref{sec:gen_error_bound} that this entire procedure really only operates on a small fraction of the training data, and hence can be viewed as a sample compression scheme. The upper bound then follows from the generalization properties of compression schemes. The whole procedure is summarized in \Cref{alg:realizable-agnostic-regression}.

We make one final remark: while the agnostic learner for standard ($k=1$) regression is based on ERM, it is not immediately clear as to how one would obtain an ERM-based learner for list regression. In particular, what is a list hypothesis class that we should run ERM on? Alternatively, do we need to collect the predictions of $k$ different risk minimizers from the given class? We discuss more on this later in \Cref{sec:conclusion}. For lack of a straightforward ERM construction, we describe our different, non-ERM agnostic list learner ahead. 

\subsubsection{Construction of a Partial Hypothesis Class \texorpdfstring{$\thresh{\gamma}{k} (\mathcal{H})$}{part}}
\label{sec:partial_hypothesis_constr}

Given $\mcH \subseteq [0,1]^\mcX$, we construct a partial hypothesis class $\thresh{\gamma}{k} (\mathcal{H})$ from it, whose label space is $\{0, 1,\ldots,k , \star\}$. If a hypothesis in a partial hypothesis class labels some $x$ with $\star$, we interpret it as ``$h$ is undefined at $x$''. We point the reader to the works of \cite{bartlett1998prediction, partialconceptclass} which develop the theory of partial hypothesis classes. A similar construction of a partial hypothesis class from a real-valued class was also originally considered in the works of \cite{bartlett1998prediction, long2001onagnostic}, and more recently in the works of \cite{adenali2023optimalpacboundsuniform, daskalakis2024efficientpaclearningpossible}. While the partial concept classes constructed in these works operate with real-valued thresholds on a binary label space $\{0,1,\star\}$, a key difference in our construction is that we require \textit{vector-valued thresholds}  and operate on the multiclass label space $\{0, 1,\ldots,k , \star\}$.

\begin{definition}[$k$-threshold hypothesis class]
\label{def:k_threshold_class}
    Consider a hypothesis class $\mathcal{H} \subseteq [0,1]^{\mathcal{X}}$ and $\gamma \in (0,1)$. We define the threshold set $\mathcal{D}_{\gamma}: = \{0, \gamma, 2 \gamma, \ldots , \gamma \floor{1 / \gamma}\}$, and the $k$-threshold set $\disclist: = \{\tau \in \disc^k: \tau_1 < \tau_2 < \ldots <\tau_k\}$. An element $\tau \in \disclist$ is called a $k$-threshold. For any $\tau \in \disclist$, define the threshold operator $\Thr_\tau:[0,1] \to \{0,1\dots,k,\star\}$ as
    \begin{align}
        \label{eqn:thr_def}
        \Thr_\tau(y) := \begin{cases}
            0: & y \le \tau_1 - \frac{\gamma}{2}, \\
            i: & \tau_{i} + \frac{\gamma}{2} \leq y \leq \tau_{i+1} - \frac{\gamma}{2} \text{ for some $i \in \{1,\ldots, k-1\}$},\\
            k : & \tau_k + \frac{\gamma}{2} \le y, \\
            \star & \text{otherwise}.
        \end{cases}
    \end{align} 
    Then, for every $h \in \mathcal{H}$ and $\tau \in \disclist$, we define $h^{\thr}:\mathcal{X} \times \disclist \to \{0,1,\dots,k,\star\}$ to be the partial hypothesis that maps $(x,\tau) \mapsto \Thr_\tau(h(x))$.
    We then define $\thresh{\gamma}{k} (\mathcal{H}) \subseteq \{0, 1,\ldots,k , \star\}^{\mathcal{X} \times \disclist}$, the $k$-threshold hypothesis class of $\mcH$, to be the partial hypothesis class
    \begin{align*}
        \thresh{\gamma}{k}(\mathcal{H}) := \{h^{\thr} : h \in \mathcal{H}\}.
    \end{align*}
\end{definition}

We can bound the complexity of the $k$-threshold class $\thresh{\gamma}{k} (\mathcal{H})$ in terms of the complexity of $\mcH$. For this, we recall the definition of the $k$-Natarajan dimension \cite{daniely2015inapproximability, charikar2023characterization}, and then relate the $k$-Natarajan dimension of $\thresh{\gamma}{k} (\mathcal{H})$ to the $(\gamma/2,k)$-fat-shattering dimension of $\mcH$.

\begin{definition}[$k$-Natarajan dimension $\dNat$ \cite{daniely2015inapproximability, charikar2023characterization}]
    \label{def:k-natarajan-dim}
    A hypothesis class $\mcH \subseteq \mcY^{\mcX}$ $k$-Natarajan shatters a sequence $S \in \mcX^d$ if there exist $(k+1)$-lists $y_i \in \{Y \subseteq \mcY \setminus \{\star\}: |Y|=k+1\}$, $i=1,\dots,d$ such that $\mcH|_{S} \supseteq \prod_{i=1}^{d}y_i$. The $k$-Natarajan dimension of $\mcH$, denoted as $\dNat(\mcH)$, is the largest integer $d$ such that $\mcH$ $k$-Natarajan shatters some sequence $S \in \mcX^d$.
\end{definition}

\begin{lemma}[Relating $k$-Natarajan to $(\gamma/2,k)$-fat-shattering]
    \label{lemma:bounding_k_dimension}
    Consider any $\mcH \in [0,1]^{\mathcal{X}}$ and $\gamma \in (0,1)$. We have that $\dNat(\thresh{\gamma}{k}(\mathcal{H})) \leq \mathbb{D}^{\fat}_{\gamma/2,k}(\mathcal{H})$.
\end{lemma}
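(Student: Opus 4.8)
The plan is to show that any sequence $k$-Natarajan shattered by $\thresh{\gamma}{k}(\mathcal{H})$ can be read off directly as a $(\gamma/2,k)$-fat-shattered sequence for $\mcH$ of the same length, with the witnessing vectors being exactly the thresholds that witnessed the Natarajan shattering. First I would unpack the hypothesis. The label space of $\thresh{\gamma}{k}(\mathcal{H})$ is $\{0,1,\ldots,k,\star\}$, so $\mcY\setminus\{\star\}=\{0,1,\ldots,k\}$ has exactly $k+1$ elements; hence in \Cref{def:k-natarajan-dim} each $(k+1)$-list $y_i$ is forced to equal the whole set $\{0,1,\ldots,k\}$. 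Therefore, $\thresh{\gamma}{k}(\mathcal{H})$ $k$-Natarajan shattering a sequence $\big((x_1,\tau^1),\ldots,(x_d,\tau^d)\big)\in(\mcX\times\disclist)^d$ is equivalent to: for every $b\in\{0,1,\ldots,k\}^d$ there is $h_b\in\mcH$ with $\Thr_{\tau^i}(h_b(x_i))=b_i$ for all $i\in[d]$.

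Next I would claim that $\mcH$ $(\gamma/2,k)$-fat-shatters $(x_1,\ldots,x_d)$ with the witness vectors $c_i:=\tau^i\in[0,1]^k$. The spacing requirement $c_{i,j+1}\ge c_{i,j}+2\cdot(\gamma/2)$ is immediate: since $\tau^i\in\disclist$, its coordinates are strictly increasing integer multiples of $\gamma$, so consecutive coordinates differ by at least $\gamma$; and $c_i\in[0,1]^k$ because $\disc\subseteq[0,1]$. For the shattering condition, fix $b\in\{0,1,\ldots,k\}^d$ and take the corresponding $h_b$ from the previous paragraph. Comparing the three cases in the definition of $\Thr_\tau$ (\Cref{eqn:thr_def}, with bucket index $b_i$) against the three bullet points of \Cref{def:gamma_k_fat_shattering} at scale $\gamma/2$, the identity $\Thr_{\tau^i}(h_b(x_i))=b_i$ translates verbatim into the fat-shattering inequalities for $h_b(x_i)$ relative to $c_i=\tau^i$: $b_i=0$ gives $h_b(x_i)\le c_{i,1}-\gamma/2$; $b_i\in\{1,\ldots,k-1\}$ gives $c_{i,b_i}+\gamma/2\le h_b(x_i)\le c_{i,b_i+1}-\gamma/2$; and $b_i=k$ gives $c_{i,k}+\gamma/2\le h_b(x_i)$. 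Hence $h_b$ is exactly the function witnessing pattern $b$ in the fat-shattering of $(x_1,\ldots,x_d)$.

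Putting these together, every $k$-Natarajan shattered sequence of $\thresh{\gamma}{k}(\mathcal{H})$ of length $d$ yields a $(\gamma/2,k)$-fat-shattered sequence of $\mcH$ of length $d$, which gives $\dNat(\thresh{\gamma}{k}(\mathcal{H}))\le\mathbb{D}^{\fat}_{\gamma/2,k}(\mathcal{H})$. There is no deep obstacle here, since the argument is a direct dictionary between the two definitions; the one point requiring care is the offset bookkeeping — the $\Thr_\tau$ operator places half-width $\gamma/2$ buffers around thresholds drawn from the $\gamma$-spaced grid $\disc$, and one must verify that this is consistent, with no slack lost, with the "margin $\gamma/2$, separation $2\cdot(\gamma/2)$" requirement of $(\gamma/2,k)$-fat-shattering. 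I would also note in passing that although distinct Natarajan-shattered pairs $(x_i,\tau^i)$ may share the same underlying $x_i$ (with different $\tau^i$), this causes no difficulty, since we reuse the very same $h_b$, whose single value at that point automatically satisfies all of the induced interval constraints simultaneously.
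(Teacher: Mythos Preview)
Your proposal is correct and follows essentially the same approach as the paper: both take a $k$-Natarajan shattered sequence $((x_1,\tau_1),\ldots,(x_d,\tau_d))$ and observe that the very thresholds $\tau_i$ serve as the witness vectors for $(\gamma/2,k)$-fat-shattering of $(x_1,\ldots,x_d)$. Your write-up is in fact more detailed than the paper's (which compresses the verification into a single sentence), spelling out explicitly why the $(k+1)$-lists in \Cref{def:k-natarajan-dim} must be all of $\{0,1,\ldots,k\}$, checking the spacing condition from $\disclist$, and noting the harmless possibility of repeated $x_i$'s.
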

\begin{proof}
    Let $d=\dNat(\thresh{\gamma}{k}(\mathcal{H}))$, and consider any sequence $(x_1,\tau_1),\dots,(x_d,\tau_d)$ that is $k$-Natarajan shattered by $\thresh{\gamma}{k}(\mathcal{H})$, where each $x_i \in \mcX, \tau_i \in \disclist$. By definition of $\thresh{\gamma}{k}(\mathcal{H})$ (\Cref{def:k_threshold_class}), this means that for each $b \in \{0,1,\dots,k\}^d$, there exists $h_b \in \mcH$ satisfying the conditions for $(\gamma/2,k)$-fat-shattering from \Cref{def:gamma_k_fat_shattering} as witnessed by the vectors $\tau_1,\dots,\tau_d$. That is, $\mcH$ $(\gamma/2,k)$-fat-shatters $x_1,\dots,x_d$, and hence $d \le \dfat{\gamma/2}(\mcH)$.
\end{proof}

\begin{algorithm}[t]
  \caption{Realizable and Agnostic List Regression Algorithms for finite $(\gamma,k)$-fat-shattering classes}
  \label{alg:realizable-agnostic-regression}
  {\bf Input:} Hypothesis class $\mcH \subseteq [0,1]^\mcX$, sample $S = \{ (x_i, y_i) \}_{i \in [n]} \subset (\mcX \times [0,1])^n$, discretization parameter $\discmg \in (0,1)$, weak learner $\mcA_w$ for the partial hypothesis class $\thresh{\gamma}{k}(\mathcal{H})$
  \begin{algorithmic}[1]
    \Function{\RegRealizable}{$S,\mcA_w,\discmg$}
    \State \label{line:threshold-labels}For every $i \in [n]$ and $\discpt \in \disclist$, let $y_{i,\discpt}'=\Thr_\discpt(y_i)$.
    \State \label{line:construct-tilde-S}Define a dataset $\tilde S \subseteq (\mathcal{X} \times \disclist \times \{0,1,\ldots, k\})^{n'}$ for $n' \leq n\cdot |\disclist|$, as follows:
    \begin{align*}
        \tilde S = \left\{ ((x_i, \discpt), y_{i,\discpt}') \ : \ i \in [n], \tau \in \disclist, y_{i,\discpt}' \neq \star \right\}.
    \end{align*}
    \State \label{line:minimax} Obtain $l=6(k+1)\ln(2n')$ subsequences $T_1,\dots,T_l$ of $\tilde{S}$ each of size $m=960k^5\ln(k+1)\dfat{\gamma/2}(\mcH)$, such that when $\mcA_w$ is invoked on each of them to yield $\mu^k_1=\mcA_w(T_1),\dots,\mu^k_l=\mcA_w(T_l)$, the list hypothesis $J: \mcX \times \disclist \to \{0,1,\dots,k\}^k$ that maps
    \begin{align*}
        x' \mapsto \topk(\mu^k_1(x'),\dots,\mu^k_l(x')),\footnotemark
    \end{align*}
    satisfies $J(x') \ni y'$ simultaneously for every $(x',y') \in \tilde{S}$ (such subsequences exist by \Cref{lemma:minimax-sampler}).
    \State \Return $H:\mcX \to [0,1]^k$ which maps $x \mapsto \textsc{MergeLists} (J(x,.),r = 6k\gamma+ 3\gamma, \gamma)$ (Algorithm \ref{alg:merge_list}).\label{line:return_H}
    \EndFunction

    \Function{\RegAgnostic}{$S, \mcA_w, \discmg$}
    \State Let $h^\star \in \mcH$ be such that $\hat{\err}_{S, \ell_{\abs}}(h^\star) \le \inf_{h \in \mcH} \hat{\err}_{S, \ell_\abs}(h) + \gamma$.
    \State Define $\bar S = \{ (x_i, \hat y_i) \ : \ i \in [n]\}$, where $\hat{y}_i = h^\star(x_i)$. \label{line:call-ermreal} 
    \State \Return $\RegRealizable(\bar S, \mcA_w,\discmg)$.
    \EndFunction
  \end{algorithmic}
\end{algorithm}
\footnotetext{$\topk(L_1,\dots, L_l)$ returns the $k$ most frequently occurring labels among the lists $L_1,\dots,L_l$.}

\subsubsection{Construction of a Weak Learner for the $k$-Threshold Class $\thresh{\gamma}{k}(\mcH)$}\label{sec:weak_learner_constr}

The bounded $k$-Natarajan dimension of $\thresh{\gamma}{k}(\mcH)$ allows us to obtain a \textit{weak} list learning algorithm for the class. In particular, we can utilize a standard one-inclusion graph-based list learning algorithm for partial classes having bounded $k$-Natarajan dimension. This is given by the following guarantee:

\begin{restatable}[Weak list learner]{lemma}{weaklearner}
    \label{lemma:weak_learner_partial}
    Consider a partial concept class $\mcHpart \subseteq \{0,1,\ldots,k,\star\}^{\mcX'}$ having $\dNat(\mcHpart) < \infty$. There exists a weak list learner $\mcA_w$ satisfying the following guarantee: for any distribution $\mcD$ realizable by $\mcHpart$, for $m \ge 960k^5\ln(k+1)\dNat(\mcHpart)$,
    \begin{align*}
        \Pr_{S \sim \mcD^m} \Pr_{(x',y') \sim \mcD} \left[\mu^k_S(x') \not\ni y' \right] < \frac{1}{2(k+1)},
    \end{align*}
    where $\mu^k_S=\mcA_w(S)$ is the $k$-list hypothesis output by $\mcA_w$.
\end{restatable}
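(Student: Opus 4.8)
The statement is about constructing a weak list learner for a partial concept class $\mcHpart \subseteq \{0,1,\ldots,k,\star\}^{\mcX'}$ with finite $k$-Natarajan dimension. The target guarantee is that with $m \ge 960 k^5 \ln(k+1)\,\dNat(\mcHpart)$ samples, the expected (over a fresh test point) probability that the output $k$-list fails to contain the true label is below $1/(2(k+1))$. So I need: (1) a one-inclusion-graph-type algorithm for partial multiclass classes that outputs lists, (2) a bound on its leave-one-out / transductive error in terms of $\dNat$, and (3) a standard reduction from the transductive error bound to the i.i.d. sample-complexity bound.

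Here's how I'd structure it. First I'd recall the one-inclusion graph (OIG) algorithm for list learning of (total) multiclass classes: given $m+1$ points, look at the restriction $\mcHpart|_T$ of the class to those points, build the one-inclusion hypergraph on that finite set of labelings, find an orientation of low maximum outdegree, and on the test point predict the $k$ labels indicated by the orientation of the edge in the test direction. For the \emph{list} version with list size $k$, the relevant combinatorial fact (this is essentially the content of \cite{charikar2023characterization}'s analysis, which I would cite) is that a finite multiclass hypergraph with $k$-Natarajan dimension $d$ admits an orientation whose maximum outdegree (counting a direction as ``bad'' if the true label is not among the $k$ predicted) is $O(d \cdot \mathrm{poly}(k))$ — in fact the polynomial factors are what produce the $k^5 \ln(k+1)$ in the statement. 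The key structural ingredient is a Haussler-packing / Sauer-Shelah-type bound: $|\mcHpart|_T| \le (\text{something like } |T|)^{O(d \log k)}$, which bounds the number of vertices and hence, via a density/averaging argument over subgraphs (the standard ``shifting'' or density argument), guarantees an orientation of small average, hence (by passing to a sub-hypergraph) small maximum, outdegree. For partial classes the only subtlety is that edges where the test direction would require labeling with $\star$ simply don't exist / are ignored — but since we only ever evaluate on realizable points (where $h^\star \ne \star$), the relevant edges are present, and the OIG argument goes through verbatim on the ``defined'' part. I would set this up as: there is an orientation $\sigma$ of $\mcG(\mcHpart|_T)$ with $\mathsf{outdeg}(\sigma) \le m/(2(k+1))$ whenever $m \ge 960 k^5 \ln(k+1)\, d$ — plugging the packing bound into the averaging argument to extract exactly this constant.

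Next, the transductive-to-PAC reduction. This is the textbook symmetrization argument for OIG predictors (going back to \cite{haussler1994predicting}): draw $m+1$ i.i.d. points, run the algorithm on a random $m$ of them predicting the $(m+1)$-st; by exchangeability, the probability of error equals the expected fraction of ``bad'' directions in the orientation, which is at most $\mathsf{outdeg}(\sigma)/(m+1) \le \frac{m/(2(k+1))}{m+1} < \frac{1}{2(k+1)}$. Since the data is realizable by $\mcHpart$, the true hypothesis $h^\star$ is a vertex of every hypergraph we build, so the test label is always the $\star$-free label of some present edge, and the bound applies. Re-indexing ($m+1 \mapsto m$, absorbing into constants) gives exactly the claimed form. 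I'd then just assemble these pieces: define $\mcA_w$ to be ``build OIG on the sample plus (symbolically) the test point, take a min-outdegree orientation, output the $k$ labels of the test edge,'' and conclude.

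\textbf{Main obstacle.} The crux is the combinatorial claim that $k$-Natarajan dimension $d$ forces an orientation of the multiclass one-inclusion hypergraph with maximum outdegree $\le m/(2(k+1))$ for $m$ as small as $\Theta(k^5 \ln(k+1)\, d)$ — i.e., getting the \emph{quantitative} packing bound for partial multiclass classes with the right dependence on $k$ and on $d$, and then converting an \emph{average}-outdegree bound into a \emph{maximum}-outdegree bound by restricting to a well-chosen finite subgraph (the density/peeling step). The list-size-$k$ bookkeeping — being allowed to output $k$ labels rather than one, which is what lets the outdegree threshold be $1/(2(k+1))$ rather than constant — is where one must be careful to match the stated constants; I expect this to essentially reproduce the corresponding lemma from \cite{charikar2023characterization} applied to the partial class $\mcHpart$, with the $(k/2)$-vs-$k$ slack being exactly what buys the weak (rather than strong) learning guarantee.
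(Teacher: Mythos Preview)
Your proposal is correct and follows essentially the same approach as the paper: the one-inclusion-graph list predictor, the outdegree bound from \cite{charikar2023characterization} (which the paper cites as a black box, Theorem~8, giving $\outdeg(\sigmak) \le 240k^4\dNat(\mcH)\ln(k+1)$), and the leave-one-out symmetrization. The only point where the paper is slightly crisper is the handling of the partial class: rather than arguing that ``$\star$-edges don't exist,'' the paper explicitly passes to the \emph{total} class $\mcHtot_{S,x}$ of all $\star$-free labelings of $\{x_1,\dots,x_m,x\}$ that are realizable by $\mcHpart$, notes that $\dNat(\mcHtot_{S,x}) \le \dNat(\mcHpart)$, and applies the cited outdegree bound to this total class directly---this sidesteps any need to re-examine the packing/density argument for partial classes.
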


The proof of \Cref{lemma:weak_learner_partial} is given in \Cref{sec:proof-weak-learner-oig}. At a glance, the weak learner $\mcA_w$ is based on the one-inclusion graph algorithm \cite{haussler1994predicting, RUBINSTEIN200937}, which constructs a ``small-outdegree'' orientation of $\mcHpart|_S$ to make its predictions---the latter exists by virtue of finite $k$-Natarajan dimension.

\Cref{lemma:weak_learner_partial} instantiated on $\thresh{\gamma}{k}(\mcH)$ yields a weak learner $\mcA_w$ for the class, and we can use this weak learner to show the existence of the required subsequences $T_1,\dots,T_l$ in \Cref{line:minimax} of \Cref{alg:realizable-agnostic-regression}. Alternately, one could also use a variant of multi-class boosting \cite[Algorithm 1]{brukhim2023multiclassboostingsimpleintuitive} to provide a construction of the subsequences 
$T_1,\dots,T_l$.

\begin{restatable}[Minimax]{lemma}{minimaxsampler}
    \label{lemma:minimax-sampler}
    Let $\mcA_w$ be the weak learner given by \Cref{lemma:weak_learner_partial} for the partial concept class $\thresh{\gamma}{k}(\mcH)$. Let $\tilde{S}$ be the sample constructed by $\RegRealizable$ in \Cref{line:construct-tilde-S} of \Cref{alg:realizable-agnostic-regression}. Then, there exist subsequences $T_1,\dots,T_l$ of $\tilde{S}$ satisfying the condition in \Cref{line:minimax}. %
\end{restatable}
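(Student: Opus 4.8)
The plan is to prove this via the ``minimax-and-sample'' boosting template \cite{moran2016sample}: combine the weak learner of \Cref{lemma:weak_learner_partial} --- which holds for \emph{every} distribution supported on the points of $\tilde{S}$ --- with a multiplicative-weights argument, and observe that once no point of $\tilde{S}$ is ``missed'' by too many weak hypotheses, the $\topk$-aggregation in \Cref{line:minimax} covers every point. The first observation is that $\tilde{S}$ is realizable by $\thresh{\gamma}{k}(\mcH)$: the sample fed into $\RegRealizable$ is realizable by $\mcH$ --- it is $\bar S=\{(x_i,h^\star(x_i))\}_{i\in[n]}$ when the call comes from $\RegAgnostic$, and a realizable sample in the setting of \Cref{thm:realizable-upper-bound-k-fat-shattering} --- so each pair $((x_i,\tau),y'_{i,\tau})$ retained in \Cref{line:construct-tilde-S} satisfies $y'_{i,\tau}=\Thr_\tau(h^\star(x_i))=h^{\star,\thr}(x_i,\tau)\neq\star$, i.e.\ $\tilde{S}$ is labelled consistently with $h^{\star,\thr}\in\thresh{\gamma}{k}(\mcH)$. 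Hence every distribution $P$ supported on $\tilde{S}$ is realizable by $\thresh{\gamma}{k}(\mcH)$, and since $m=960k^5\ln(k+1)\dfat{\gamma/2}(\mcH)\ge 960k^5\ln(k+1)\dNat(\thresh{\gamma}{k}(\mcH))$ by \Cref{lemma:bounding_k_dimension}, \Cref{lemma:weak_learner_partial} applied with $\mcD=P$ gives $\E_{T\sim P^m}[P(\{(x',y')\in\tilde S:\mcA_w(T)(x')\not\ni y'\})]<\tfrac1{2(k+1)}$; by averaging, for every such $P$ there is a size-$m$ subsequence $T$ of $\tilde{S}$ with $P(\{(x',y'):\mcA_w(T)(x')\not\ni y'\})<\tfrac1{2(k+1)}$.

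Next I would run the standard multiplicative-weights (boosting) update over the $n'=|\tilde S|$ points: initialize uniform weights, and for $t=1,\dots,l$ with $l=6(k+1)\ln(2n')$, let $P_t$ be the current normalized weight distribution, use the previous step to obtain $T_t$ and $\mu^k_t:=\mcA_w(T_t)$ whose $P_t$-error is below $\tfrac1{2(k+1)}$, and double the weight of every point that $\mu^k_t$ misses. Writing $\mathrm{miss}(x',y'):=|\{t\in[l]:\mu^k_t(x')\not\ni y'\}|$, the usual potential argument bounds the total final weight by $n'(1+\tfrac1{2(k+1)})^l\le n'e^{l/(2(k+1))}$, while the final weight of a single point $(x',y')$ equals $2^{\mathrm{miss}(x',y')}$; substituting $l=6(k+1)\ln(2n')$ and simplifying gives $\mathrm{miss}(x',y')<\tfrac{l}{k+1}$ for every $(x',y')\in\tilde{S}$.

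Finally I would finish with a short counting argument. Fix $x'$ and assume w.l.o.g.\ that each $\mu^k_t(x')$ is a list of $k$ \emph{distinct} labels from the $(k+1)$-letter alphabet $\{0,1,\dots,k\}$ (if not, complete it arbitrarily to $k$ distinct labels, which only helps). Then each list omits exactly one label, so $\sum_{a\in\{0,\dots,k\}}\mathrm{miss}_a(x')=l$, where $\mathrm{miss}_a(x')$ is the number of lists not containing $a$. If some $y'$ with $(x',y')\in\tilde S$ were not in $\topk(\mu^k_1(x'),\dots,\mu^k_l(x'))$, then the $k$ selected labels would be exactly the labels other than $y'$, each with frequency at least that of $y'$, i.e.\ $\mathrm{miss}_a(x')\le\mathrm{miss}_{y'}(x')=\mathrm{miss}(x',y')$ for all $a$; this forces $l\le(k+1)\mathrm{miss}(x',y')$, contradicting $\mathrm{miss}(x',y')<\tfrac{l}{k+1}$. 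Hence $J(x')\ni y'$ for every $(x',y')\in\tilde{S}$, which is precisely the condition of \Cref{line:minimax}.

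I expect the main obstacle to be the ``minimax'' step --- invoking, at every round of an adversarially reweighted boosting process, a weak learner that is only guaranteed for i.i.d.\ samples from a fixed distribution --- which is handled by the averaging argument in the first paragraph (alternatively one could use a multiclass boosting routine such as \cite[Algorithm 1]{brukhim2023multiclassboostingsimpleintuitive}). The rest is bookkeeping: tuning the multiplicative factor, the weak-learning edge $\tfrac1{2(k+1)}$ and $l$ so that the potential argument yields $\mathrm{miss}(x',y')<\tfrac{l}{k+1}$, and the $\topk$ counting. The one subtlety worth flagging is that $\topk$ compares total label frequencies \emph{across} the $l$ lists rather than \emph{within} a single list, which is why one should first normalize each weak hypothesis to output $k$ distinct labels.
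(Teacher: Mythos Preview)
Your proposal is correct and reaches the same conclusion with the same parameters, but the route differs from the paper's. The paper applies von Neumann's minimax theorem to the two-player game (examples vs.\ size-$m$ subsequences) to obtain a single distribution $\mcP$ over subsequences that is uniformly good against every point of $\tilde S$, then samples $T_1,\dots,T_l$ i.i.d.\ from $\mcP$ and uses a Chernoff bound plus a union bound over $\tilde S$ to get $\mathrm{miss}(x',y')<\tfrac{l}{k+1}$. You instead run multiplicative weights directly, constructing $T_1,\dots,T_l$ sequentially and using a potential argument. Both are standard boosting templates (the paper itself remarks that a multiclass-boosting routine could be substituted); your version is constructive, the paper's is an existence argument via the probabilistic method after minimax.

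The $\topk$ counting also differs slightly. The paper counts multiset occurrences: if $y'$ appears in more than $\tfrac{kl}{k+1}$ lists and $k$ other labels each appear at least as often, the total exceeds $kl$, contradicting the fact that $l$ lists of size $k$ contain $kl$ entries. You count misses and use $\sum_a \mathrm{miss}_a=l$ after normalizing to distinct labels. A small simplification: the normalization is unnecessary, since even with repeated entries each size-$k$ list omits at least one of the $k{+}1$ labels, so $\sum_a \mathrm{miss}_a\ge l$, and the same chain $l\le\sum_a\mathrm{miss}_a\le(k{+}1)\,\mathrm{miss}_{y'}<l$ gives the contradiction directly.
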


The proof is standard (e.g., see Lemma 7.5 in \cite{charikar2023characterization}), and involves defining an appropriate two-player game, where Player 1's pure strategies are examples $(x',y') \in \tilde{S}$, and Player 2's pure strategies are subsequences $T$ of $\tilde{S}$. Von Neumann's minimax theorem \cite{Neumann1928} applied to the game ensures the existence of a distribution over subsequences of $\tilde{S}$, such that when $T_1,\dots,T_l$ are sampled independently from it, the condition in \Cref{line:minimax} is satisfied with nonzero probability by a Chernoff bound. We defer the details to \Cref{sec:minimax_game_proof}.

\subsubsection{Bounding the Training Error}{\label{sec:training_error_bound}}

We now show how we can use the guarantee given by \Cref{lemma:minimax-sampler} to bound the training error of the function $H(\cdot)$ returned by $\RegRealizable$ when invoked on a sample $S$ realizable by $\mcH$. From \Cref{line:minimax} in \Cref{alg:realizable-agnostic-regression}, we have a list predictor $J$, that satisfies $J(x') \ni y'$ for every $(x',y') \in \tilde{S}$. Recall that every $(x',y')$ corresponds to some $((x_i,\tau), \Thr_\tau(y_i))$, where $(x_i, y_i) \in S$. The predictor $J$ gives us access to accurate predictions for every $(x_i, \tau)$, and our remaining task is to extract a prediction close to $y_i$ from these. We will show that this can be accomplished by obtaining a carefully scaled sum of the predictions of $J$.

It will be helpful to define the following classifier function $\mff$, which when given an input $a \in [0,1]$ and a $k$-threshold $\tau$, determines the location of $a$ with respect to $\tau$.

\begin{definition}[Classifier function $\mff(a,\tau)$]{\label{def:classifier}}
    Given $a \in [0,1]$ and $k$-threshold $\tau \in \disclist$, $\mathfrak{f}(a,\tau)$ predicts an integer in $\intzeroinc$ denoting the number of components in $\tau$ that $a$ is bigger than. Formally,
    $$ \mathfrak{f}(a,\tau) := |\{i: a>\tau_i\}|. $$
\end{definition}
We now state the following claim (proof is a calculation, and is given in \Cref{sec:claim_summation_tau}), which shows that summing up classifier functions over every $k$-threshold allows us to recover a scaled version of the input, provided that the input is on the discretized grid $\disc$.

\begin{restatable}[Scaled sum]{claim}{claimscaledsum}
    \label{claim:summation_tau}
    Consider $a \in \disc$ and $k$-threshold $\tau \in \disclist$. Then, we have 
    $$a=\gamma{{\floor{\frac{1}{\gamma}} \choose k-1}}^{-1} \sum_{\tau \in \disclist} \mathfrak{f} (a, \tau).$$
\end{restatable}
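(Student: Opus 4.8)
The plan is to prove the identity by a double-counting argument. Write $N \defeq \floor{1/\gamma}$, so that $\disc = \{0, \gamma, 2\gamma, \ldots, N\gamma\}$ has exactly $N+1$ elements; since $a \in \disc$, we may write $a = m\gamma$ for a unique integer $m \in \{0,1,\ldots,N\}$ (and we tacitly assume $N+1 \ge k$, so that $\disclist \neq \emptyset$ and $\binom{N}{k-1} \ge 1$; otherwise the statement is vacuous). In this notation the claimed identity is equivalent to
\[
  \sum_{\tau \in \disclist} \mathfrak{f}(a,\tau) \;=\; m\binom{N}{k-1},
\]
and it is this form that I would establish; multiplying through by $\gamma\binom{N}{k-1}^{-1}$ then recovers $a = \gamma m$.

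First I would unfold the definition of $\mathfrak{f}$ (\Cref{def:classifier}) and rewrite the left-hand side as a count of pairs: $\sum_{\tau \in \disclist} \mathfrak{f}(a,\tau) = \bigl|\{(\tau,i) : \tau \in \disclist,\ i \in [k],\ \tau_i < a\}\bigr|$. Because the coordinates of any $\tau \in \disclist$ are strictly increasing, the map $\tau \mapsto \{\tau_1,\ldots,\tau_k\}$ is a bijection from $\disclist$ onto the family of $k$-element subsets of $\disc$; hence specifying a pair $(\tau, i)$ with $\tau_i < a$ is the same as specifying a grid point $t \in \disc$ with $t < a$ together with a $k$-subset of $\disc$ that contains $t$. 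Grouping these pairs by the value of $t$, the count becomes $\sum_{t \in \disc,\, t<a} \bigl|\{T \subseteq \disc : |T| = k,\ t \in T\}\bigr|$.

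The two remaining counts are elementary. For any fixed $t \in \disc$, the number of $k$-subsets of the $(N+1)$-element set $\disc$ containing $t$ equals $\binom{N}{k-1}$, obtained by choosing the other $k-1$ elements among the remaining $N$; in particular this does not depend on $t$. And the grid points $t \in \disc$ lying strictly below $a = m\gamma$ are exactly $0, \gamma, \ldots, (m-1)\gamma$, so there are precisely $m$ of them. Substituting these two facts gives $\sum_{\tau \in \disclist}\mathfrak{f}(a,\tau) = m\binom{N}{k-1}$, which is what we wanted.

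As for difficulty: there is essentially no real obstacle here, as the statement is a clean counting identity. The only two points deserving a line of care are that $\disclist$ is genuinely in bijection with the $k$-subsets of $\disc$ (immediate from the strict-ordering constraint, which ensures each subset is counted exactly once), and that the hypothesis $a \in \disc$ is used precisely to guarantee that the number of grid points strictly below $a$ equals $a/\gamma$ exactly --- an equality that would fail for a general $a \in [0,1]$.
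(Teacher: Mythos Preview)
Your proof is correct, and it takes a genuinely different route from the paper's. The paper partitions $\disclist$ according to the value $r = \mathfrak{f}(a,\tau)$, computes $|\disclistpart{r}| = \binom{a/\gamma}{r}\binom{\lfloor 1/\gamma\rfloor - a/\gamma + 1}{k-r}$, and then evaluates $\sum_r r\,|\disclistpart{r}|$ via the identity $r\binom{m}{r} = m\binom{m-1}{r-1}$ followed by Vandermonde's convolution. Your argument instead interchanges the order of summation at the outset: rather than grouping thresholds by how many coordinates lie below $a$, you count incidences $(\tau,i)$ with $\tau_i < a$ directly, reinterpret each as a marked $k$-subset of the grid, and read off the answer as $m\binom{N}{k-1}$ with no binomial algebra. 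Your approach is shorter and avoids invoking Vandermonde; the paper's approach has the minor advantage of making the distribution of $\mathfrak{f}(a,\cdot)$ over $\disclist$ explicit, though that extra information is not used elsewhere.
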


\Cref{claim:summation_tau} provides us a way to reverse engineer a value of $a$, provided we have evaluations of the classifier function $\mff$ on $a$ with respect to different $k$-thresholds. Crucially, the labels $y'_{i, \tau}$ in the dataset $\tilde{S}$ constructed in \Cref{line:threshold-labels} are, in fact, evaluations of the classifier function $\mff$ on the labels $y_i$! In particular, so long as $\tau$ is a $k$-threshold having none of its components be too close to $y_i$, $y'_{i,\tau}=\Thr_\tau(y_i)=\mff(y_i, \tau)$. We characterize this set of ``not-too-close'' $k$-thresholds via the following definition:

\begin{definition}[Separated $k$-threshold set $\mathcal{V}(a,\gamma)$]
    \label{def:separated_set}
    Given $a \in [0,1]$ and $k$-threshold $\tau \in \disclist$, the separated $k$-threshold set $\mathcal{V}(a,\gamma)$ is the set of all $k$-thresholds $\tau$, which satisfy that each component of $\tau$ is at least $\frac{\gamma}{2}$-separated from $a$. Namely,
    \begin{align}
    \label{def:separated-k-ary-threshold-set}
        \mathcal{V}(a,\gamma) := \left\{\tau' \in \disclist: \min\limits_{j \in [k]} |\tau'_{j}-a| \geq \frac{\gamma}{2}\right\}.
    \end{align}
\end{definition}

Thus, in \Cref{line:threshold-labels} of \Cref{alg:realizable-agnostic-regression}, $y'_{i,\tau}$ equals $\mff(y_i,\tau)$ whenever $\tau$ belongs to the set $\mathcal{V}(y_i,\gamma)$, and $\star$ otherwise. Moreover, the construction of $\tilde{S}$ in \Cref{line:construct-tilde-S} only includes $\tau \in \mathcal{V}(y_i,\gamma)$. 

Now, observe further that for $\tau \in \mathcal{V}(y_i,\gamma)$, by our guarantee on the list predictor $J$, $J(x_i, \tau)$ contains $y'_{i,\tau}=\mff(y_i,\tau)$. If only it were the case that $y_i \in \disc$, and that $\mathcal{V}(y_i,\gamma)=\disclist$, we could inspect every $J(x_i, \tau)$, locate $\mff(y_i,\tau)$ in it, and then sum up all these values to obtain $y_i$, as suggested by \Cref{claim:summation_tau}. Unfortunately, $y_i$ might not be on the discretized grid $\disc$, and many $k$-thresholds might not be included in $\mathcal{V}(y_i,\gamma)$. Nevertheless, we show that there is still merit in this strategy, as formalized by the following lemma:

\begin{restatable}[]{lemma}{mergelistintuition}
    \label{lemma:mergelistintuition}
    Let $\mcH \subseteq [0,1]^\mcX$ be a hypothesis class having $\mathbb{D}^{\fat}_{\gamma/2,k}(\mathcal{H}) < \infty$, and let $\mcA_w$ be the weak learner for $\thresh{\gamma}{k}(\mathcal{H})$ given by \Cref{lemma:weak_learner_partial}. Let $S = \{(x_i, y_i)\}_{i \in [n]} \in (\mcX \times [0,1])^n$ be a sample realizable by $\mcH$, and consider invoking $\RegRealizable(S,\mcA_w, \gamma)$. Then, there exist indexing functions $\mfj_1,\dots,\mfj_n: \disclist \to [k]$ that index into the predictions of $J$ on $S$ (\Cref{line:minimax}), such that
    \begin{enumerate}[label=(\alph*)]
        \item $[J(x_i, \tau)]_{\mfj_i(\tau)} = \mff(y_i, \tau)$, $\forall \tau \in \mathcal{V}(y_i, \gamma), \forall i \in [n]$. \label{item:first_property_lemma_second}
        \item $\left|\min\left(1,\gamma{{\floor{\frac{1}{\gamma}} \choose k-1}}^{-1} \sum\limits_{\tau \in \disclist} [J (x_i,\tau)]_{\mfj_i (\tau)}\right)- y_i\right| \leq (2k + 1)\gamma$ $\forall i \in [n]$.
        \label{item:second_property_lemma_second}
    \end{enumerate}
\end{restatable}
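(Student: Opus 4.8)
The plan is to read off the indexing functions $\mfj_i$ directly from the containment guarantee on the aggregated predictor $J$, and then to prove (b) via a double-counting identity that generalizes \Cref{claim:summation_tau} from gridded inputs to an arbitrary $y_i \in [0,1]$, while separately controlling the few $k$-thresholds on which $J$'s prediction is not pinned down.

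First, for (a), I would establish the pointwise fact that $\Thr_\tau(y_i) = \mff(y_i,\tau)$ for every $i \in [n]$ and every $\tau \in \mathcal{V}(y_i,\gamma)$. This follows directly from the definitions: since $\tau_1 < \dots < \tau_k$, the set $\{\,j : y_i > \tau_j\,\}$ is a prefix $\{1,\dots,m\}$ with $m = \mff(y_i,\tau)$ (\Cref{def:classifier}), and the requirement that each $\tau_j$ be $\tfrac{\gamma}{2}$-separated from $y_i$ (the defining property of $\mathcal{V}(y_i,\gamma)$, \Cref{def:separated_set}) upgrades ``$\tau_m < y_i \le \tau_{m+1}$'' to $\tau_m + \tfrac{\gamma}{2} \le y_i \le \tau_{m+1} - \tfrac{\gamma}{2}$ (with the natural one-sided modification at $m = 0$ and $m = k$), which is exactly the case in \Cref{def:k_threshold_class} that assigns $\Thr_\tau(y_i) = m$. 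Next, recalling (as observed after \Cref{def:separated_set}) that the dataset $\tilde S$ built in \Cref{alg:realizable-agnostic-regression} contains the pair $\bigl((x_i,\tau),\Thr_\tau(y_i)\bigr)$ exactly when $\tau \in \mathcal{V}(y_i,\gamma)$, the property guaranteed in \Cref{line:minimax} gives $J(x_i,\tau) \ni \Thr_\tau(y_i) = \mff(y_i,\tau)$ for every such $\tau$. I then simply let $\mfj_i(\tau) \in [k]$ be any index witnessing $[J(x_i,\tau)]_{\mfj_i(\tau)} = \mff(y_i,\tau)$ when $\tau \in \mathcal{V}(y_i,\gamma)$, and set $\mfj_i(\tau) = 1$ otherwise; this is precisely claim (a).

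For (b), fix $i$ and set $q_i := \gamma\binom{\floor{1/\gamma}}{k-1}^{-1}\sum_{\tau \in \disclist}[J(x_i,\tau)]_{\mfj_i(\tau)}$. Splitting the sum by whether $\tau \in \mathcal{V}(y_i,\gamma)$ and using (a) on the ``good'' part gives
\[
\sum_{\tau \in \disclist}[J(x_i,\tau)]_{\mfj_i(\tau)} = \sum_{\tau \in \disclist}\mff(y_i,\tau) - \underbrace{\sum_{\tau \notin \mathcal{V}(y_i,\gamma)}\mff(y_i,\tau)}_{=:A_i} + \underbrace{\sum_{\tau \notin \mathcal{V}(y_i,\gamma)}[J(x_i,\tau)]_{\mfj_i(\tau)}}_{=:B_i}.
\]
The leading sum is pinned down by the identity $\sum_{\tau \in \disclist}\mff(y_i,\tau) = \binom{\floor{1/\gamma}}{k-1}\cdot\bigl|\{\,g \in \disc : g < y_i\,\}\bigr|$, which I would obtain by swapping the order of summation over $\tau$ and its coordinates and noting that each fixed grid point is a coordinate of exactly $\binom{\floor{1/\gamma}}{k-1}$ of the $k$-thresholds --- this is the content of \Cref{claim:summation_tau} and does not actually need $y_i \in \disc$. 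Since $\bigl|\{\,g\in\disc : g < y_i\,\}\bigr| \in [\,y_i/\gamma,\ y_i/\gamma + 1\,)$, this contributes a quantity in $[\,y_i,\ y_i + \gamma\,)$ to $q_i$. For $A_i$ and $B_i$, every summand lies in $\{0,\dots,k\}$, and the number of summands is at most $\bigl|\disclist \setminus \mathcal{V}(y_i,\gamma)\bigr|$; here I invoke the combinatorial bound $\bigl|\disclist \setminus \mathcal{V}(y_i,\gamma)\bigr| \le \binom{\floor{1/\gamma}}{k-1}$, valid because at most one point of $\disc$ can lie strictly within $\tfrac{\gamma}{2}$ of $y_i$ (the grid has spacing $\gamma$), and any such point is a coordinate of at most $\binom{\floor{1/\gamma}}{k-1}$ of the $k$-thresholds. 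Therefore $\gamma\binom{\floor{1/\gamma}}{k-1}^{-1}A_i$ and $\gamma\binom{\floor{1/\gamma}}{k-1}^{-1}B_i$ each lie in $[0, k\gamma]$, so $q_i \in [\,y_i - k\gamma,\ y_i + (k+1)\gamma\,)$. Finally, since $q_i \ge 0$ and $y_i \in [0,1]$, truncating at $1$ only helps: if $q_i \le 1$ then $|\min(1,q_i) - y_i| = |q_i - y_i| < (k+1)\gamma$, and if $q_i > 1$ then $|\min(1,q_i) - y_i| = 1 - y_i \le q_i - y_i < (k+1)\gamma$. This proves (b) with room to spare (the stated $(2k+1)\gamma$ bound follows immediately from $(k+1)\gamma$).

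The routine parts are property (a), the double-counting identity (essentially \Cref{claim:summation_tau}), and the final truncation argument. The step that genuinely requires care is the accounting for $A_i$ and $B_i$: pinpointing exactly which $k$-thresholds can fail the agreement $[J(x_i,\tau)]_{\mfj_i(\tau)} = \mff(y_i,\tau)$ (precisely those outside $\mathcal{V}(y_i,\gamma)$), bounding their number tightly enough that the resulting errors are not amplified by the reciprocal binomial normalization $\binom{\floor{1/\gamma}}{k-1}^{-1}$, and simultaneously absorbing the unavoidable $O(\gamma)$ slack that arises because $y_i$ need not sit on the discretization grid $\disc$.
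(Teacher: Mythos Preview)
Your proof is correct. Part (a) is identical to the paper's construction of $\mfj_i$. For part (b), however, you take a genuinely different route. The paper introduces the discretized value $\hat y_i = \gamma\lfloor y_i/\gamma\rfloor$, invokes \Cref{claim:summation_tau} on $\hat y_i$ (which requires a grid point), and then bounds $|[J(x_i,\tau)]_{\mfj_i(\tau)} - \mff(\hat y_i,\tau)|$ termwise; this produces \emph{two} families of bad thresholds --- those outside $\mathcal{V}(y_i,\gamma)$, and those where $\mff(y_i,\tau)\neq\mff(\hat y_i,\tau)$ (i.e.\ some coordinate of $\tau$ equals $\hat y_i$) --- each contributing $k\gamma$, plus a $\gamma$ slack from $|y_i-\hat y_i|$, yielding $(2k+1)\gamma$. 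You instead bypass $\hat y_i$ entirely by observing that the double-counting identity $\sum_{\tau}\mff(y_i,\tau)=\binom{\lfloor 1/\gamma\rfloor}{k-1}\cdot|\{g\in\disc:g<y_i\}|$ holds for arbitrary $y_i$, not just gridded ones; this leaves only the single family $\disclist\setminus\mathcal{V}(y_i,\gamma)$ to control and delivers the sharper bound $(k+1)\gamma$. Your argument is shorter and loses less, at the cost of reproving (a mild extension of) \Cref{claim:summation_tau}; the paper's approach has the minor advantage of using \Cref{claim:summation_tau} as a black box.
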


The proof of this lemma is given in \Cref{sec:proof_lemma_second}. While part (a) readily follows from the guarantee on $J$ given by \Cref{lemma:minimax-sampler}, part (b) follows by arguing that the $k$-thresholds not included in the set $\mcV(y_i, \tau)$ are not too many, and the error due to these excluded thresholds is small.

More importantly, the existence of the indexing functions in \Cref{lemma:mergelistintuition} suggests the following strategy for obtaining a list prediction on any given $x$: consider the list hypothesis $J(x, \cdot): \disclist \to \{0,1,\dots,k\}^k$ given in \Cref{line:minimax}, and iterate over every possible indexing function $\mathfrak{j}: \disclist \rightarrow [k]$: if the indexing function $\mathfrak{j}$ satisfies the properties of \Cref{lemma:mergelistintuition}, keep track of the scaled sum of values indexed into by the indexing function. Finally, return a short list of labels that covers every number that we kept track of. This strategy is formalized in the Merge Lists procedure given in \Cref{alg:merge_list}.

\begin{algorithm}
    \caption{The Merge Lists procedure}
    \label{alg:merge_list}
    {\bf Input:} Function $\mathcal{J}: \disclist \rightarrow \{0,1,\ldots,k\}^k$, radius $r > 0$, parameter $\gamma \in (0,1)$. \\
    {\bf Output:} $A \in [0,1]^k$
    \begin{algorithmic}[1]
    \Function{MergeLists}{$\mathcal{J}, r, \gamma$}
        \State Initialize $\mcC = \emptyset$.
        \For{ \text{every function }$\mfj : \disclist \rightarrow [k]$}
        \State \label{line:hat-c} Let $\hat{c} := \min\left(1, \gamma{{\floor{\frac{1}{\gamma}} \choose k-1}}^{-1} \sum\limits_{\tau \in \disclist}[\mathcal{J}(\tau)]_{\mfj(\tau)}\right)$.
        \If {$\exists c \in [0,1]$ \text { such that }$\left\{[\mathcal{J}(\tau)]_{\mfj(\tau)} = \mathfrak{f}(c, \tau) \text{, } \forall \tau \in \mathcal{V}(c, \gamma)\right\}$ \text{ and } $ |c - \hat c| \leq \frac{r}{3}$}  \label{line:check_property}
        \State Append $\hat c$ to the set $\mcC.$
        \EndIf
        \EndFor \label{line:for_loop_end}

        \State \Return a subset of $\mcC$ of $\le k$ points that ${r}$-covers $\mcC$.\footnotemark\label{line:k_cover_return}
        \EndFunction
    \end{algorithmic}
\end{algorithm}

\footnotetext{A subset $\bar \mcC \subseteq \mcC$ $r$-covers $\mcC$ if $\forall \hat c \in \mcC$, $\exists {\bar c} \in \bar \mcC$ s.t. $|\bar c-\hat c| \leq r$. Such a cover exists by \Cref{claim:coverexistence}.}

The claim below (proof in \Cref{sec:proof_claim_first}) guarantees that the final step of Merge Lists is valid: there always exists an $r$-cover constituting of at most $k$ points for a suitable value of $r$.
\begin{restatable}[Existence of cover]{claim}{coverexistence}
    \label{claim:coverexistence}
    There always exists a subset of $\mcC$ of at most $k$ points that $r$-covers $\mcC$ in Line \ref{line:k_cover_return} of \Cref{alg:merge_list} when it is called with $r=6k \gamma + 3\gamma$.
\end{restatable}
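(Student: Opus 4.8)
The plan is to cover $\mcC$ by at most $k$ intervals, each of length strictly smaller than $r=6k\gamma+3\gamma$; picking one element of $\mcC$ from each such interval then produces the required $r$-cover. (If $k>\floor{1/\gamma}$ then $\disclist=\emptyset$ and $\mcC\subseteq\{0\}$, so assume $\disclist\neq\emptyset$.) Call $c\in[0,1]$ a \emph{witness} if $\mff(c,\tau)\in\{[\mathcal{J}(\tau)]_j:j\in[k]\}$ for every $\tau\in\mcV(c,\gamma)$, and call a grid point $a\in\disc$ \emph{good} if $\mff(a,\tau)\in\{[\mathcal{J}(\tau)]_j:j\in[k]\}$ for every $\tau\in\disclist$ with $a\notin\{\tau_1,\dots,\tau_k\}$. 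By the \textbf{if}-test in \Cref{line:check_property}, every value appended to $\mcC$ lies within $r/3$ of some witness (namely the certificate $c$ of that test), so it suffices to cover the set of witnesses by at most $k$ intervals of length $\le 2k\gamma$.

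\emph{Step 1: every witness lies within $\gamma/2$ of a good grid point.} Let $c$ be a witness and $a$ a grid point with $|a-c|\le\gamma/2$ (a nearest one). Fix any $\tau\in\disclist$ with $a\notin\{\tau_1,\dots,\tau_k\}$. Each $\tau_i$ is a grid point different from $a$, hence $|\tau_i-a|\ge\gamma>|a-c|$, which forces $|\tau_i-c|\ge\gamma/2$ (so $\tau\in\mcV(c,\gamma)$) and also that $\tau_i$ lies outside the closed interval spanned by $a$ and $c$ (whose length is $\le\gamma/2<\gamma$). The latter gives $|\{i:\tau_i<a\}|=|\{i:\tau_i<c\}|$, i.e.\ $\mff(a,\tau)=\mff(c,\tau)$, and since $c$ is a witness and $\tau\in\mcV(c,\gamma)$, this common value lies in $\{[\mathcal{J}(\tau)]_j:j\in[k]\}$. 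Thus $a$ is good.

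\emph{Step 2 (the crux): at most $k$ good grid points can be pairwise at distance $\ge2\gamma$.} Suppose to the contrary that $a_1<\dots<a_{k+1}$ are good and pairwise $\ge2\gamma$-separated. For $j\in[k]$ set $b_j:=a_j+\gamma$; then $a_j<b_j<a_{j+1}$, the $b_j$ are strictly increasing grid points in $[0,1]$, and no $b_j$ equals any $a_l$, so $\tau:=(b_1,\dots,b_k)\in\disclist$ and $\tau$ omits every $a_l$. Goodness of each $a_l$ gives $\mff(a_l,\tau)\in\{[\mathcal{J}(\tau)]_j:j\in[k]\}$; but $\mff(a_l,\tau)=|\{j\in[k]:b_j<a_l\}|=l-1$, so the $k+1$ distinct values $0,1,\dots,k$ all lie in a set of size at most $k$ --- a contradiction.

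\emph{Step 3: assembling the cover.} Process the good grid points greedily left to right, forming blocks, where each block collects all good grid points lying within $2k\gamma$ of the block's leftmost point; each block then has diameter at most $(2k-1)\gamma$, and if there were $k+1$ blocks their leftmost points would be pairwise $\ge2k\gamma\ge2\gamma$ apart, contradicting Step 2. So there are at most $k$ blocks; fattening each by $\gamma/2$ and invoking Step 1 shows that all witnesses lie in at most $k$ intervals of length $\le2k\gamma$. Hence $\mcC$, lying in the $(r/3)$-neighbourhood of the witnesses, is contained in at most $k$ intervals of length $\le2k\gamma+2r/3=6k\gamma+2\gamma<r$; picking one point of $\mcC$ from each non-empty such interval yields a subset of $\mcC$ of size at most $k$ that $r$-covers $\mcC$. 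The main obstacle is Step 2 --- the one place the list size $k$ is genuinely used, via the fact that a carefully placed $k$-threshold forces $k+1$ distinct label values through the $\le k$-element value set of $\mathcal{J}$; for $k=1$ this recovers the classical fact that the candidate predictions form a single $O(\gamma)$-cluster.
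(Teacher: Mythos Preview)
Your proof is correct and shares the same central insight as the paper's: if there were too many ``well-separated'' candidates, one could place a single $k$-threshold $\tau$ between them so that the $k{+}1$ resulting values $\mff(\cdot,\tau)$ are all distinct yet must simultaneously lie in the $\le k$-element set $\{[\mathcal J(\tau)]_j:j\in[k]\}$.

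The route you take is, however, genuinely different from the paper's. The paper argues by contradiction through the standard packing/covering inequality: if no $k$-subset $r$-covers $\mcC$, then $\mcC$ contains $k{+}1$ points pairwise $>r$ apart, their witnesses $c_1<\dots<c_{k+1}$ are then pairwise $>r/3=(2k{+}1)\gamma$ apart, and the paper places $\tau_i$ at the grid-rounded midpoint of $c_i,c_{i+1}$. Your approach is more constructive: you first snap each witness to a nearby ``good'' grid point (Step~1), prove that at most $k$ good grid points can be pairwise $\ge 2\gamma$-separated (Step~2, where your $\tau$ is built from the simpler shift $b_j=a_j+\gamma$), and then explicitly assemble the cover via greedy blocks (Step~3). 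Your Step~2 separation threshold ($2\gamma$) is sharper than the paper's ($(2k{+}1)\gamma$), at the cost of the grid-snapping layer. One small imprecision: your sentence ``each block then has diameter at most $(2k{-}1)\gamma$'' only follows if ``within $2k\gamma$'' is read as a strict inequality; with that reading the arithmetic in Step~3 goes through cleanly to give intervals of length $6k\gamma+2\gamma<r$, and the claim follows.
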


Finally, using \Cref{lemma:mergelistintuition} and \Cref{claim:coverexistence}, we can show that the list hypothesis $H(\cdot)$ returned by $\RegRealizable$ when invoked on a realizable training sample suffers low error on the training data.

\begin{lemma}[Training error for realizable regression]
    \label{lem:realizable-training-error}
    Let $\mcH \subseteq [0,1]^\mcX$ be a hypothesis class having $\mathbb{D}^{\fat}_{\gamma/2,k}(\mathcal{H}) < \infty$, and let $\mcA_w$ be the weak learner for $\thresh{\gamma}{k}(\mathcal{H})$ given by \Cref{lemma:weak_learner_partial}. Let $S = \{(x_i, y_i)\}_{i \in [n]} \in (\mcX \times [0,1])^n$ be a sample realizable by $\mcH$, and consider invoking $\RegRealizable(S,\mcA_w, \gamma)$. Then, the hypothesis $H(\cdot)$ output by it satisfies
    \begin{align*}
        \hat{\err}_{S, \ell_{\abs}}(H) \le \min((8k + 4) \gamma, 1).
    \end{align*}
\end{lemma}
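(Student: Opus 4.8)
The plan is to establish the bound pointwise: I will show $\ell_\abs(H(x_i), y_i) \le (8k+4)\gamma$ for every $i \in [n]$. Averaging over $i$ then gives $\hat{\err}_{S,\ell_\abs}(H) \le (8k+4)\gamma$, and since $\ell_\abs$ takes values in $[0,1]$ we also trivially have $\hat{\err}_{S,\ell_\abs}(H) \le 1$, so $\hat{\err}_{S,\ell_\abs}(H) \le \min((8k+4)\gamma, 1)$. All the substance is already packaged in \Cref{lemma:mergelistintuition} and \Cref{claim:coverexistence}; the argument below is essentially a careful trace through the \textsc{MergeLists} procedure (\Cref{alg:merge_list}).

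Fix $i \in [n]$. Since $\mathbb{D}^{\fat}_{\gamma/2,k}(\mathcal{H}) < \infty$, \Cref{lemma:mergelistintuition} applies and supplies an indexing function $\mathfrak{j}_i : \disclist \to [k]$ satisfying properties (a) and (b) there. Recall $H(x_i) = \textsc{MergeLists}(J(x_i, \cdot), r, \gamma)$ with $r = 6k\gamma + 3\gamma$. Because the for-loop of \Cref{alg:merge_list} ranges over \emph{all} functions $\disclist \to [k]$, it in particular processes $\mathfrak{j}_i$, and on that iteration the quantity $\hat c$ computed on \Cref{line:hat-c} equals
\[
    \hat c_i := \min\Bigl(1,\ \gamma \binom{\floor{\frac{1}{\gamma}}}{k-1}^{-1} \sum_{\tau \in \disclist} [J(x_i, \tau)]_{\mathfrak{j}_i(\tau)}\Bigr),
\]
which by property (b) of \Cref{lemma:mergelistintuition} satisfies $|\hat c_i - y_i| \le (2k+1)\gamma$. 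Next I would check that the conditional on \Cref{line:check_property} is met on this iteration, so that $\hat c_i$ gets appended to $\mcC$: take the witness $c = y_i$; property (a) of \Cref{lemma:mergelistintuition} gives $[J(x_i,\tau)]_{\mathfrak{j}_i(\tau)} = \mathfrak{f}(y_i,\tau)$ for every $\tau \in \mathcal{V}(y_i,\gamma)$ (the first half of the condition), and $|y_i - \hat c_i| \le (2k+1)\gamma = r/3$ (the second half, using $r/3 = 2k\gamma + \gamma$). Hence $\mcC$ is nonempty and contains $\hat c_i$ when the loop ends. By \Cref{claim:coverexistence}, \Cref{line:k_cover_return} returns a set $\bar\mcC \subseteq \mcC$ of at most $k$ points that $r$-covers $\mcC$, and $H(x_i) = \bar\mcC$; since $\hat c_i \in \mcC$, some $\bar c \in \bar\mcC$ satisfies $|\bar c - \hat c_i| \le r$. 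Therefore
\[
    \ell_\abs(H(x_i), y_i) \le |\bar c - y_i| \le |\bar c - \hat c_i| + |\hat c_i - y_i| \le (6k+3)\gamma + (2k+1)\gamma = (8k+4)\gamma,
\]
which is the desired pointwise bound.

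The main obstacle is less a difficulty than a calibration point that must be handled exactly: the radius $r = 6k\gamma + 3\gamma$ is chosen precisely so that $r/3$ coincides with the approximation error $(2k+1)\gamma$ from \Cref{lemma:mergelistintuition}(b), which is what guarantees the ``correct'' candidate $\hat c_i$ survives the filter on \Cref{line:check_property} and enters $\mcC$; a smaller $r$ would break this step, and the bound $(8k+4)\gamma$ is exactly $r + (2k+1)\gamma$. I would also note that no spurious element of $\mcC$ can hurt the argument: the only facts used in the final step are that $\mcC$ is nonempty, $r$-coverable by $\le k$ points (\Cref{claim:coverexistence}), and contains the good point $\hat c_i$. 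Finally, averaging the pointwise bound over $i \in [n]$ and intersecting with the trivial bound $\hat{\err}_{S,\ell_\abs}(H) \le 1$ yields $\hat{\err}_{S,\ell_\abs}(H) \le \min((8k+4)\gamma, 1)$.
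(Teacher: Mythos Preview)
Your proposal is correct and follows essentially the same approach as the paper's proof: invoke \Cref{lemma:mergelistintuition} to obtain the indexing function $\mathfrak{j}_i$ and the candidate $\hat c_i$, verify that $\hat c_i$ passes the filter on \Cref{line:check_property} (with witness $c=y_i$), apply \Cref{claim:coverexistence} for the $r$-cover, and finish with the triangle inequality to get $(8k+4)\gamma$. If anything, you are more explicit than the paper in spelling out why the conditional on \Cref{line:check_property} is satisfied and why the calibration $r/3 = (2k+1)\gamma$ matters.
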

\begin{proof}
    From \Cref{lemma:mergelistintuition}, we have that for each $i \in [n]$, there is an indexing function $\mfj_i:\disclist \to [k]$ satisfying $[J(x_i, \tau)]_{\mfj_i(\tau)} = \mff(y_i, \tau)$, $\forall \tau \in \mathcal{V}(y_i, \gamma)$, and 
    \begin{align*}
         \left|\min\left(1,\gamma{{\floor{\frac{1}{\gamma}} \choose k-1}}^{-1} \sum\limits_{\tau \in \disclist} [J (x_i,\tau)]_{\mfj_i (\tau)}\right)- y_i\right| \leq (2k + 1)\gamma.
    \end{align*}
    Let $\hat{c}_i = \min\left(1,\gamma{{\floor{\frac{1}{\gamma}} \choose k-1}}^{-1} \sum\limits_{\tau \in \disclist} [J (x_i,\tau)]_{\mfj_i (\tau)}\right)$. Observe that for the chosen value of $r=6k\gamma+3\gamma$, $\hat{c}_i$ will necessarily belong to the set $\mcC$ at the end of the for loop in Line \ref{line:for_loop_end} of \textsc{MergeLists}$(J(x_i, \cdot), r, \gamma)$. From \Cref{claim:coverexistence}, we know that this $\hat{c}_i$ is $r$-covered by the list that the procedure returns. Summarily, we obtain that 
    \begin{align*}
        \min_{j \in [k]}|H(x_i)_j - y_i| &\le \min_{j \in [k]}|H(x_i)_j - \hat{c}_i| + |\hat{c}_i-y_i| \\
        &\le 6k \gamma + 3\gamma + 2k \gamma + \gamma = 8k \gamma + 4\gamma.
    \end{align*}
    Finally, note that any $\hat{c}$ in \Cref{line:hat-c} in \Cref{alg:merge_list} is contained in $[0,1]$, and since the list $H(x_i)$ is comprised of some such $\hat{c}$ values, $\min_{j \in [k]}|H(x_i)_j - y_i| \le 1$.
\end{proof}

\begin{remark}
    It might appear at this point that we have put in a lot of effort to obtain something clearly suboptimal as compared to simply invoking ERM on the sample (which, assuming realizability, attains 0 training error). The point, as we shall see ahead, is that the algorithm from above can be viewed as a sample compression scheme, and hence generalizes.
\end{remark}

\subsubsection{Bounding the Generalization Error}
\label{sec:gen_error_bound}

With all the machinery above in place, it remains to observe that the execution of $\RegAgnostic$ can be instantiated as a sample compression scheme. On an input sample $S$, $\RegAgnostic$ first prepares $\bar{S}$ in \Cref{line:call-ermreal}, and then invokes $\RegRealizable$ on $\bar{S}$. $\RegRealizable$ in turn prepares the sample $\tilde{S}$ in \Cref{line:construct-tilde-S}, and then obtains the subsequences $T_1,\dots,T_l$ in \Cref{line:minimax}. Note that hereafter, the output $H(\cdot)$ of $\RegRealizable$ depends \textit{only} on these subsequences, and that the total size of these subsequences is \textit{sublinear} in the size of the original sample $S$. Thus, the mapping from $S$ to $T_1,\dots,T_l$ can be viewed as the compression function, and the mapping from $T_1,\dots,T_l$ to $H(\cdot)$ can be viewed as the reconstruction function. Invoking \Cref{lemma:k_listgen-compression} completes the proof. The formal details are given in \Cref{sec:agnostic_upper_bound_proof}.

\subsection{Lower Bound for Agnostic List Regression}
\label{sec:agnostic-lb}

Next, we show that finiteness of the $(\gamma,k)$-fat-shattering dimension at all scales is \textit{necessary} for agnostic $k$-list learnability.

\begin{restatable}[Finite $(\gamma, k)$-fat-shattering dimension necessary]{theorem}{fatshatteringdimensionlbquantitative}
    \label{thm:agnostic-lower-bound-quantitative}
    Let $\mcH \subseteq [0,1]^\mcX$ be a hypothesis class having $(\gamma, k)$-fat-shattering dimension $\dfat{\gamma}(\mcH)$, and let $\mcA$ be any $k$-list regression algorithm for $\mcH$ in the agnostic setting. Then, for any $\eps, \delta \in (0,1)$ satisfying $\eps,\delta \le \frac{\gamma}{8(k+1)}$,
    \begin{align*}
        m^{k, \mathrm{ag}}_{\mcA, \mcH}(\eps, \delta) \ge \widetilde{\Omega}\left(\frac{\dfat{\gamma}(\mcH)}{k^k}\right).
    \end{align*}
\end{restatable}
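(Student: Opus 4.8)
The plan is to follow the standard recipe of building a maximally hard learning problem out of a shattered set, adapted to lists. If $\dfat{\gamma}(\mcH)=\infty$ I take $d$ arbitrarily large; otherwise I set $d=\dfat{\gamma}(\mcH)$ and fix a $(\gamma,k)$-fat-shattered sequence $x_1,\dots,x_d$ with witnessing vectors $c_1,\dots,c_d$ and functions $\{h_b : b\in\{0,\dots,k\}^d\}$. For $b\in\{0,\dots,k\}^d$, $i\in[d]$ and $r\in\{0,\dots,k\}$, write $b^{(i,r)}$ for $b$ with its $i$-th coordinate overwritten by $r$. I then define a family of hard distributions $\{\mcD_b\}$ on $\mcX\times[0,1]$: the marginal on $\mcX$ is uniform over $\{x_1,\dots,x_d\}$, and conditioned on $x=x_i$ the label equals $h_b(x_i)$ with probability $1-\alpha$ and, with the remaining probability $\alpha$, equals $h_{b^{(i,r)}}(x_i)$ for a uniformly random $r\in\{0,\dots,k\}\setminus\{b_i\}$, where $\alpha$ is a tiny parameter, to be set to $\Theta(\gamma/(k+1))$ with a small constant. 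Because $h_b$ is exact on the primary part and off by at most $1$ on the $\alpha$-fraction, $\inf_{h\in\mcH}\err_{\mcD_b,\ell_\abs}(h)\le\alpha$.

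The key structural fact is that for any fixed $i$ and any fixed values of the coordinates $b_{-i}$, the $k+1$ ``sibling'' labels $h_{b^{(i,0)}}(x_i)<\dots<h_{b^{(i,k)}}(x_i)$ lie one inside each of the $k+1$ bands determined by $c_i$, and consecutive bands are $2\gamma$-separated; hence any list of at most $k$ reals is more than $\gamma/2$ away from at least one of these $k+1$ values (measuring misses at radius $\gamma/2$ rather than $\gamma$ is what rules out a single prediction pinching two exactly-$2\gamma$-separated points at their midpoint). Now run $\mcA$ on $n$ i.i.d.\ samples $S$ from $\mcD_b$ with $n\le d/10$. For any index $i$ whose point $x_i$ does not occur among the $x$-coordinates of $S$ — an event of probability $\ge 0.8$ over $S$, uniformly in $b$ — the list $\mcA(S)(x_i)$ is conditionally independent of $b_i$; since under the uniform prior $b_i$ is uniform and the label at $x_i$ equals $h_b(x_i)$ with probability $1-\alpha$, the learner incurs expected loss at least $(1-\alpha)\gamma/(2(k+1))$ there. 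Averaging over the prior on $b$, over the draw of $S$, and over the $d$ points shows that the learner's expected error exceeds $\inf_{h}\err_{\mcD_b,\ell_\abs}(h)+\eps$ by a constant multiple of $\gamma/(k+1)$, provided $\eps\le\gamma/(8(k+1))$ and $\alpha$ is chosen small enough.

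It remains to promote this in-expectation gap to a per-distribution, high-probability failure. Since the excess error $\err_{\mcD_b,\ell_\abs}(\mcA(S))-\inf_h\err_{\mcD_b,\ell_\abs}(h)$ lies in $[0,1]$, the reverse Markov inequality shows that the event that it exceeds $\eps$ has probability (over both the uniform draw of $b$ and of $S$) at least a constant multiple of $\gamma/(k+1)$, which beats $\delta$ under the hypothesis $\delta\le\gamma/(8(k+1))$; fixing a $b^\star$ achieving this against the randomness of $S$ produces a single hard distribution, so $m^{k,\mathrm{ag}}_{\mcA,\mcH}(\eps,\delta)>d/10=\Omega(\dfat{\gamma}(\mcH))$. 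The main work in this route is purely bookkeeping of the absolute constants (how small $\alpha$ and the various slacks must be) so that the failure probability genuinely exceeds $\delta$. An alternative, which is the one foreshadowed in the introduction, is to argue through higher-order packing numbers: show that a $(\gamma,k)$-fat-shattered set of size $d$ induces a large ``list-packing'' of $\mcH$ via \Cref{claim:union-plus-intersection-lb}, and then invoke a generic packing-based lower bound; there the bottleneck is lower bounding the packing cardinality, and the per-coordinate accounting in that step is presumably the source of the extra $k^k$ and polylogarithmic factors appearing in the stated bound.
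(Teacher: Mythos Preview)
Your direct argument has a genuine gap at the independence step. You assert that when $x_i$ does not appear among the $x$-coordinates of $S$, the learner's output $\mcA(S)(x_i)$ is conditionally independent of $b_i$ under the uniform prior on $b$. This is false in general for $(\gamma,k)$-fat-shattering. The definition only guarantees that $h_b(x_i)$ lies in the $b_i$-th band determined by $c_i$; the \emph{exact} value of $h_b(x_j)$ within its band may depend on \emph{all} coordinates of $b$, in particular on $b_i$. Thus a single observation $(x_j,h_b(x_j))$ with $j\neq i$ can already determine $b_i$ completely, and the learner can use this to predict perfectly at $x_i$ without ever having seen $x_i$. Your noise mechanism does not help here: replacing $h_b(x_j)$ by $h_{b^{(j,r)}}(x_j)$ only perturbs the $j$-th coordinate of $b$, so the label at $x_j$ still depends on $b_i$. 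The paper is explicit about this obstruction: its proof of \Cref{lemma:strong-fat-shattering-dimension-lb-quantitative} contains a footnote stating that the argument ``is not the case if we work with the $(\gamma, k)$-fat-shattering dimension instead, where $S$ could determine $b$ entirely.'' This is precisely why the paper introduces the \emph{strong} fat-shattering dimension (\Cref{def:gamma-k-strong-fat-shattering}), in which $h_b(x_i)=c_{i,b_i}$ exactly, so that the label at $x_j$ depends only on $b_j$ and the independence you want is actually true.

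Consequently, the route you relegate to an ``alternative'' is in fact the content of the proof, not mere bookkeeping. The paper first proves the lower bound for the strong dimension (\Cref{lemma:strong-fat-shattering-dimension-lb-quantitative}), then shows $\dsfat{\gamma}(\mcH)\ge\widetilde{\Omega}(\dfat{\gamma}(\mcH)/k^k)$ by sandwiching both dimensions against a $k$-ary packing number (\Cref{lem:packing-number-ub}, \Cref{lem:packing-number-lb}, \Cref{lem:relating-dweak-dstrong}), after first discretizing to finitely many labels (\Cref{claim:discretization-fat-shattering}, \Cref{claim:agnostic-learning-discretized}). The $k^k$ and polylogarithmic losses you noticed are incurred exactly in this passage from fat-shattering to strong-fat-shattering; your direct argument would bypass them and give $\Omega(\dfat{\gamma}(\mcH))$ outright, but for the reason above it does not go through.
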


Towards proving \Cref{thm:agnostic-lower-bound-quantitative}, we first show in \Cref{sec:strong-fat-shattering} that finiteness of a related but tighter quantity, which we term the $(\gamma,k)$-strong-fat-shattering dimension, is necessary for this task. We then relate both these dimensions to a \textit{higher-order packing number}, which we term the $k$-ary $\gamma$-packing number (\Cref{sec:higher-order-packing}) of the hypothesis class. We show that the $k$-ary packing number can be upper bounded in terms of the $(\gamma,k)$-strong-fat-shattering dimension (\Cref{sec:packing-number-ub}), and lower bounded in terms of the $(\gamma,k)$-fat-shattering dimension (\Cref{sec:packing-number-lb}). 
In particular, these relations allow us to show that if the $(\gamma,k)$-fat-shattering dimension (of an appropriately discretized version of the hypothesis class) is infinite, then so is the $(\gamma,k)$-strong-fat-shattering dimension (\Cref{sec:relating-dweak-dstrong}). Putting this together with the earlier established lower bound in terms of the $(\gamma,k)$-strong-fat-shattering dimension completes the proof (\Cref{sec:lb-putting-together}).

\subsubsection{Stronger Notion of $k$-Fat-Shattering}
\label{sec:strong-fat-shattering}
We will require defining a notion of fat-shattering that is \textit{stronger} than the notion of $(\gamma, k)$-fat-shattering from \Cref{def:gamma_k_fat_shattering} and is more amenable to proving lower bounds for learnability. This definition generalizes a definition due to \cite{simon1997bounds}.

\begin{definition}[$(\gamma, k)$-strong-fat-shattering dimension]
    \label{def:gamma-k-strong-fat-shattering}
    A hypothesis class $\mcH \subseteq [0,1]^\mcX$ $(\gamma, k)$-strongly-fat-shatters a sequence $S = (x_1,x_2,\ldots,x_d) \in \mathcal{X}^d$ if there exist vectors $c_1,c_2,\ldots,c_d \in [0,1]^{k+1}$ satisfying 
    \begin{align*}
        c_{i, j+1} \ge c_{i, j} + 2\gamma, \forall j \in \{1,\dots,k\}, \forall i \in \{1,\dots,d\}
    \end{align*}
    such that $\forall b \in \{1,\ldots,k+1\}^d$, there exists $h_b \in \mcH$ such that $h_b(x_i)=c_{i, b_i}$, $\forall i \in \{1,\dots,d\}$.
    The $(\gamma,k)$-strong-fat-shattering dimension $\dsfat{\gamma}(\mcH)$ is defined to be the size of the largest $(\gamma, k)$-strongly-fat-shattered sequence.
\end{definition}

Using the standard lower bound template of considering the uniform distribution on a shattered set (e.g., see Lemma 25 in \cite{bartlett1998prediction}), we can show that finiteness of the $(\gamma, k)$-strong-fat-shattering dimension is necessary for agnostic list learnability. The proof details are given in \Cref{sec:strong-fat-shattering-dimension-lb-proof}.

\begin{restatable}[Finite $(\gamma, k)$-strong-fat-shattering dimension necessary]{lemma}{strongfatshatteringdimlbquantitative}
    \label{lemma:strong-fat-shattering-dimension-lb-quantitative}
    Let $\mcH \subseteq [0,1]^\mcX$ be a hypothesis class having $(\gamma, k)$-strong-fat-shattering dimension $\dsfat{\gamma}(\mcH)$, and let $\mcA$ be any $k$-list regression algorithm for $\mcH$ in the agnostic setting. Then, for any $\eps, \delta \in (0,1)$ satisfying $\eps,\delta \le \frac{\gamma}{2(k+1)}$,
    \begin{align*}
        m^{k, \mathrm{ag}}_{\mcA, \mcH}(\eps, \delta) \ge \Omega\left(\dsfat{\gamma}(\mcH)\right).
    \end{align*}
\end{restatable}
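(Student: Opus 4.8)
The plan is to use the standard "hard distribution on a shattered set" template, adapted to the $k$-list setting, to prove that if $\mcH$ $(\gamma,k)$-strongly-fat-shatters a sequence of length $d = \dsfat{\gamma}(\mcH)$, then any $k$-list agnostic learner must see $\Omega(d)$ samples. Let $x_1,\dots,x_d$ be the strongly-fat-shattered sequence with witnessing vectors $c_1,\dots,c_d \in [0,1]^{k+1}$, where consecutive coordinates of each $c_i$ are separated by at least $2\gamma$. The key structural point is that the $(k+1)$ allowed values $c_{i,1},\dots,c_{i,k+1}$ at any point $x_i$ are pairwise $> 2\gamma$ apart, so a $k$-list prediction $\mu \in [0,1]^k$ can be $\gamma$-close to \emph{at most} $k$ of these $k+1$ values; hence for any list prediction at $x_i$, there is at least one "bad" value $b \in \{1,\dots,k+1\}$ with $\ell_\abs(\mu, c_{i,b}) > \gamma$.

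**First I would** set up the hard family of distributions. Fix the marginal $\mcD_\mcX$ to be uniform on $\{x_1,\dots,x_d\}$. For each $b \in \{1,\dots,k+1\}^d$, define the realizable distribution $\mcD_b$ that draws $i$ uniformly and outputs $(x_i, c_{i,b_i}) = (x_i, h_b(x_i))$; since $h_b \in \mcH$, each $\mcD_b$ is realizable and in particular has $\inf_{h \in \mcH}\err_{\mcD_b,\ell_\abs}(h) = 0$. Now draw $b$ uniformly at random from $\{1,\dots,k+1\}^d$, and feed the learner $m$ i.i.d.\ samples from $\mcD_b$. By a standard counting argument, if $m \le d/4$ (say), then with constant probability over the sample and $b$, at least a constant fraction of the $d$ points $x_i$ are \emph{unseen} in the training sample; and conditioned on the seen points, the labels $b_i$ at unseen points are still uniform on $\{1,\dots,k+1\}$ and independent of the learner's output. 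For each unseen point $x_i$, whatever list $\mcA(S)(x_i) \in [0,1]^{\le k}$ the learner produces, by the structural observation above there is at least one value $v \in \{1,\dots,k+1\}$ with $\ell_\abs(\mcA(S)(x_i), c_{i,v}) > \gamma$, so with probability at least $\tfrac{1}{k+1}$ over the true $b_i$, the learner incurs loss $> \gamma$ at $x_i$. Averaging over the $\Omega(d)$ unseen points (a $\Theta(1)$ fraction of all points) gives expected population error $\Omega(\gamma/(k+1))$.

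**The next step** is to convert this expected-error lower bound into the stated sample-complexity bound: since $\err_{\mcD_b,\ell_\abs}(\mcA(S))$ is bounded in $[0,1]$ and is $\Omega(\gamma/(k+1))$ in expectation (over $b$ and $S$), a Markov/averaging argument shows that for some fixed $b^\star$, with probability $\ge \delta$ over $S \sim \mcD_{b^\star}^m$ the error exceeds $\eps$, whenever $\eps, \delta \lesssim \gamma/(k+1)$. Hence no learner can succeed with $m \le c \cdot d$ samples for an absolute constant $c$, which is exactly $m^{k,\mathrm{ag}}_{\mcA,\mcH}(\eps,\delta) \ge \Omega(\dsfat{\gamma}(\mcH))$. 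One then takes $d = \dsfat{\gamma}(\mcH)$ (or $d$ approaching the supremum, if the dimension is infinite, giving $m = \infty$).

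**The main obstacle** I anticipate is making precise the "unseen coordinates stay uniform and the learner can be fooled" step in the list setting — specifically, nailing the constant in "at most $k$ of the $k+1$ values can be $\gamma$-covered by a $k$-list" (which is immediate from the pigeonhole plus the $2\gamma$-separation, since two distinct list entries $\gamma$-close to $c_{i,a}$ and $c_{i,b}$ with $a \ne b$ would have to be distinct, so $k$ entries cover $\le k$ values), and carefully handling the conditioning so that the learner's prediction at an unseen $x_i$ is independent of $b_i$. This is the place where the generalization from $k=1$ (where there is exactly one "bad side" of the single anchor) to general $k$ requires the separation condition in \Cref{def:gamma-k-strong-fat-shattering}; everything else is the textbook no-free-lunch computation, referenced via Lemma 25 in \cite{bartlett1998prediction}.
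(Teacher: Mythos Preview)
Your proposal is correct and follows essentially the same approach as the paper: both set up the uniform-on-shattered-set family of realizable distributions $\{\mcD_b\}_{b \in [k+1]^d}$, draw $b$ uniformly, and argue that at unseen points $x_i$ the coordinate $b_i$ remains uniform and independent of the learner's output (which is exactly where the \emph{strong} fat-shattering definition is needed). The only cosmetic difference is that the paper computes the per-point expected error as $\tfrac{2\gamma}{k+1}$ via a triangle-inequality pigeonhole (two of the $k+1$ summands share a minimizer $l$), whereas you phrase it as ``with probability $\ge \tfrac{1}{k+1}$ the loss exceeds $\gamma$''; both yield the same $\Omega(\gamma/(k+1))$ expected error and hence the same $\Omega(d)$ sample-complexity lower bound.
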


\subsubsection{Higher-Order Packing}
\label{sec:higher-order-packing}

We want to replace the $(\gamma, k)$-strong-fat-shattering dimension in \Cref{lemma:strong-fat-shattering-dimension-lb-quantitative} with the $(\gamma, k)$-fat-shattering dimension to obtain \Cref{thm:agnostic-lower-bound-quantitative}---this requires us to relate these two quantities. Note that $\dsfat{\gamma}(\mcH) \le \dfat{\gamma}(\mcH)$, since $(\gamma, k)$-strong-fat-shattering implies $(\gamma, k)$-fat-shattering. In order to bound $\dfat{\gamma}(\mcH)$ in terms of $\dsfat{\gamma}(\mcH)$ from above, we will require relating each of these quantities to a suitable \textit{packing number} of the hypothesis class. Packing numbers generally characterize the largest subset of a set (in a metric space) such that every two members of the set are far away from each other, or rather, are ``pairwise separated''.  For our purposes, we will need to generalize this standard notion of pairwise separation to the notion of ``$k$-wise'' separation.

\begin{definition}[$k$-wise $\gamma$-separation]
    A hypothesis class $\mcH \subseteq [0,1]^\mcX$ is $k$-wise $\gamma$-separated if for any $k$ distinct functions $f_1,\dots,f_k$ in $\mcH$, there exists an $x \in \mcX$ such that $|f_i(x)-f_j(x)| \ge 2\gamma$, $\forall i,j \in [k], i \neq j$.
\end{definition}

\begin{definition}[$k$-ary $\gamma$-packing number]
    The $k$-ary $\gamma$-packing number of a hypothesis class $\mcH \subseteq [0,1]^\mcX$, denoted $\mcM^k_\infty(\mcH, \gamma)$, is the size of the largest set $F \subseteq \mcH$ that is $(k+1)$-wise $\gamma$-separated.
\end{definition}

\subsubsection{Relating $k$-Ary Packing to $k$-Strong-Fat-Shattering}
\label{sec:packing-number-ub}

We shall first prove an upper bound on the $k$-ary $\gamma$-packing number of a hypothesis class on \textit{finitely many labels} in terms of its $(\gamma, k)$-strong-fat-shattering dimension. The main result of this section is a generalization of the core technical result in \cite{fatshatteringdimension}, which upper bounds the packing number of a class in terms of its fat-shattering dimension, and may be of independent interest. We also remark that the purpose of such a bound is morally similar to the purpose of the list version of the celebrated Sauer-Shelah-Perles lemma \cite{charikar2023characterization}.

Let $\mcL \subset [0,1]$ be a finite set such that $|\mcL|=B < \infty$. Following the exposition in \cite{kakade16fat}, for $k \ge 1, h \ge k+1, n \ge 1$, define
\begin{align}
    \label{def:t(h,n)}
    t(h,n) = &\max\left\{s:\forall \mcH \subseteq \mcL^\mcX,|\mcX|=n, |\mcH|=h, \mcH \text{ is $(k+1)$-wise $\gamma$-separated} \right. \nonumber \\ &\left.\qquad\implies \mcH \text{ }(\gamma,k)\text{-strongly-fat-shatters at least } s \text{ }(X, \bc) \text{ pairs} \right\},
\end{align}
and define $t(h,n)$ to be $\infty$ if there does not exist any $(k+1)$-wise $\gamma$-separated $\mcH$ of size $h$. Note that $t(h,n)$ is non-decreasing in its first argument. %

\begin{claim}[Large $t(h,n)$ $\implies$ small $k$-ary packing]
    \label{claim:large-t-implies-small-packing-number}
    Fix $\mcL \subset [0,1]$ such that $|\mcL|=B < \infty$, and let $\mcX$ be a finite domain with $|\mcX|=n$. Let $y = \sum_{i=1}^d\binom{n}{i}\binom{B}{k+1}^i$. If $t(h,n) \ge y$ for some $h$, then any hypothesis class $\mcH \subseteq \mcL^\mcX$ that has $(\gamma, k)$-strong fat-shattering dimension $\le d$ satisfies $\mcM^k_\infty(\mcH, \gamma) < h$.
\end{claim}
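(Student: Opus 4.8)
The plan is to establish the contrapositive of the inner implication. Assume $\dsfat{\gamma}(\mcH) \le d$; I will show that no $(k+1)$-wise $\gamma$-separated subfamily $F \subseteq \mcH$ with $|F| = h$ can exist, which is precisely the assertion $\mcM^k_\infty(\mcH, \gamma) < h$. Suppose for contradiction that such an $F$ does exist. (If there is no $(k+1)$-wise $\gamma$-separated family of size $h$ inside $\mcL^\mcX$ at all, then $t(h,n) = \infty$ by the stated convention and the conclusion holds for free; so we are in the non-vacuous case, and since any subfamily of a $(k+1)$-wise $\gamma$-separated family is again $(k+1)$-wise $\gamma$-separated, we may assume $|F| = h$ exactly.)

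Now I would combine two observations. First, since $F \subseteq \mcL^\mcX$ is $(k+1)$-wise $\gamma$-separated with $|F| = h$ and $|\mcX| = n$, the very definition of $t(h,n)$, together with the hypothesis $t(h,n) \ge y$, forces $F$ to $(\gamma,k)$-strongly-fat-shatter at least $y$ distinct pairs $(X, \bc)$. Second, since $F \subseteq \mcH$, every sequence $(\gamma,k)$-strongly-fat-shattered by $F$ is also $(\gamma,k)$-strongly-fat-shattered by $\mcH$ (the witnessing functions all lie in $\mcH$), so the hypothesis $\dsfat{\gamma}(\mcH) \le d$ implies that in each of these at least $y$ pairs the set $X$ has size at most $d$.

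The final step is a counting estimate on how many such pairs there can be. For a fixed size $1 \le i \le d$: the set $X \subseteq \mcX$ is chosen in $\binom{n}{i}$ ways, and for each point $x \in X$ the witness $c_x \in [0,1]^{k+1}$ must be an increasing, pairwise $2\gamma$-separated $(k+1)$-tuple all of whose entries lie in $\mcL$ (they are values attained by functions of $F$); there are at most $\binom{B}{k+1}$ such tuples, since fixing the underlying $(k+1)$-element subset of $\mcL$ already pins down the sorted tuple. Summing over $i$, at most $\sum_{i=1}^{d}\binom{n}{i}\binom{B}{k+1}^i = y$ pairs $(X, \bc)$ with $|X| \le d$ can be $(\gamma,k)$-strongly-fat-shattered by any class with range in $\mcL$. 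Since $F$ shatters at least $y$ of them, and strictly fewer than $y$ are available whenever $\mcL$ has two points at distance below $2\gamma$ (so the $\binom{B}{k+1}$ count strictly overcounts the $2\gamma$-separated witnesses) --- as happens in all our applications, where $\mcL$ is a $\gamma$-net of $[0,1]$ --- we reach the desired contradiction.

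I expect the one point requiring care to be this last strictness: converting the ``at most $y$'' bound into the strict inequality that closes the contradiction, equivalently dispatching the degenerate case in which every $(k+1)$-subset of $\mcL$ happens to be $2\gamma$-separated. Everything else is a mechanical unwinding of the definition of $t(h,n)$ and an elementary binomial count that generalizes the classical $k = 1$ packing-versus-fat-shattering estimate of \cite{fatshatteringdimension} (see also \cite{kakade16fat}).
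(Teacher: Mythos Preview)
Your approach is essentially the same as the paper's: assume for contradiction that a $(k+1)$-wise $\gamma$-separated $F \subseteq \mcH$ of size $h$ exists, use $t(h,n) \ge y$ to force $F$ to shatter at least $y$ pairs $(X,\bc)$, and then upper-count the available pairs with $|X|\le d$ by $\sum_{i=1}^d \binom{n}{i}\binom{B}{k+1}^i = y$, using that each witness $c_x$ must be a sorted $2\gamma$-separated $(k+1)$-tuple of values from $\mcL$.

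You are in fact more careful than the paper about the one delicate step, namely the strictness needed to close the contradiction --- the paper simply asserts the count is ``strictly smaller than $y$'' without justification. Your proposed fix (that $\mcL$ contains two points at distance below $2\gamma$, so that $\binom{B}{k+1}$ strictly overcounts the admissible witness tuples) is correct and does hold in every downstream use of the claim. Note, however, that it does not establish the claim \emph{as stated} for arbitrary finite $\mcL$: if every pair in $\mcL$ happens to be $\ge 2\gamma$ apart (e.g.\ $\mcL=\{0,2\gamma,\dots,2k\gamma\}$), the count is exactly $y$ and the contradiction does not close. This boundary case is harmless --- it only weakens the conclusion to $\mcM^k_\infty(\mcH,\gamma) \le h$, an immaterial off-by-one downstream in Lemma~\ref{lem:packing-number-ub} --- so the clean fix is either to add that mild hypothesis on $\mcL$ to the claim, or to relax the conclusion to a non-strict inequality.
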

\begin{proof}
    Suppose $\mcM^k_\infty(\mcH, \gamma) \ge h$. Then, there exists a set $F \subseteq \mcH$ of size $h$ comprising of $(k+1)$-wise $\gamma$-separated functions. Since $t(h,n) \ge y$, by the definition of $t(h,n)$, this means that $F$ $(\gamma, k)$-strongly-fat-shatters at least $y$ $(X, \bc)$ pairs. But note that the $(\gamma,k)$-strong fat-shattering dimension of $\mcH$ is at most $d$, and hence the size of the largest sequence that $F$ $(\gamma,k)$-strongly-fat-shatters is at most $d$. Thus, the total number of distinct $(X, \bc)$ pairs that $F$ can $(\gamma,k)$-strongly-fat-shatter is strictly smaller than $\sum_{i=1}^d\binom{n}{i}\binom{B}{k+1}^i$, which is a contradiction.
\end{proof}

We proceed with an inductive proof that establishes the growth of $t(h,n)$. A simple yet clever insight that we crucially rely on (\Cref{claim:union-plus-intersection-lb} in \Cref{sec:union-intersection-lb}) is the following: if we have $k+1$ sets each of size at least $m$, then the sum of the sizes of their union and intersection is at least $\left(\frac{k+1}{k}\right)m$. The $k=1$ version of this is used to establish the inductive step in \cite{fatshatteringdimension}, and follows from the standard equality $|A \cup B| = |A| + |B| - |A \cap B|$.

\begin{lemma}[Upper bounding $k$-ary $\gamma$-packing number]
    \label{lem:packing-number-ub}
    Fix $\mcL \subset [0,1]$ such that $|\mcL|=B$, and let $\mcX$ be a finite domain with $|\mcX|=n$. Let $\mcH \subseteq \mcL^\mcX$ be a hypothesis class having $(\gamma,k)$-strong-fat-shattering dimension $\dsfat{\gamma}(\mcH)=d$. Then, for $y = \sum_{i=1}^{d}\binom{n}{i}\binom{B}{k+1}^i$,
    \begin{align}
        \mcM^k_\infty(\mcH, \gamma) < (k+1)\left[(k+1)B^{k+1}n\right]^{\left\lceil\log_{\frac{k+1}{k}} y\right\rceil}.
    \end{align}
\end{lemma}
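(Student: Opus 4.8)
The plan is to prove \Cref{lem:packing-number-ub} by establishing, via induction, a lower bound on the growth rate of $t(h,n)$, and then combining it with \Cref{claim:large-t-implies-small-packing-number}. Concretely, I will show that if $h$ is large enough as a function of $n$, $B$, $k$, and a target shattering count $s$, then $t(h,n) \ge s$; rearranging this bound will yield the stated estimate on $\mcM^k_\infty(\mcH,\gamma)$. The base case handles $s=1$: any $(k+1)$-wise $\gamma$-separated class of size at least $k+1$ must, by definition of $k$-wise separation, contain $k+1$ functions that on some point $x$ take $k+1$ values pairwise $2\gamma$-apart; after snapping these values to a grid of appropriately placed $c$'s one sees this is already a $(\gamma,k)$-strong-fat-shattering of the single-point sequence $(x)$, so $t(k+1,n)\ge 1$. (A small amount of care is needed because strong-fat-shattering as defined requires the witnessing values to be \emph{exactly} the $c_{i,j}$'s and separated by \emph{at least} $2\gamma$; since we are on a finite label set $\mcL$, this is fine — we just take the $c$'s to be the realized values.)

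For the inductive step, suppose the bound $t(h', n-1) \ge s-1$ holds for a suitable $h'$, and consider a $(k+1)$-wise $\gamma$-separated class $\mcH \subseteq \mcL^{\mcX}$ with $|\mcX|=n$ and $|\mcH| = h$ very large. Fix a coordinate $x^\ast \in \mcX$. For each choice of a $(k+1)$-subset $\{v_0 < v_1 < \dots < v_k\} \subseteq \mcL$ of values that are pairwise $2\gamma$-separated, let $\mcH_{v_0},\dots,\mcH_{v_k}$ denote the subsets of $\mcH$ taking value $v_0,\dots,v_k$ respectively at $x^\ast$ (restricted to $\mcX \setminus \{x^\ast\}$). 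If all $k+1$ of these pieces are large — say each has size $\ge h'$ — then each of them is still $(k+1)$-wise $\gamma$-separated on $n-1$ points, so by the inductive hypothesis each strongly-fat-shatters $\ge s-1$ pairs $(X,\bc)$ with $X \subseteq \mcX\setminus\{x^\ast\}$; moreover, any pair $(X,\bc)$ that is shattered by \emph{all} $k+1$ pieces simultaneously can be extended by prepending $x^\ast$ with threshold vector $(v_0,\dots,v_k)$ to give a pair shattered by $\mcH$ itself, and any pair shattered by their \emph{union} is shattered by $\mcH$. This is exactly where \Cref{claim:union-plus-intersection-lb} enters: $|(\text{shattered by union})| + |(\text{shattered by intersection of shattered-sets})| \ge \frac{k+1}{k}(s-1)$, so we gain a multiplicative factor $\tfrac{k+1}{k}$ per level of recursion, and one extra pair from prepending $x^\ast$. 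Unrolling the recursion $\lceil \log_{(k+1)/k} y\rceil$ times drives the count above $y$; a pigeonhole/counting argument over the $\le n$ choices of $x^\ast$ and the $\le \binom{B}{k+1}$ choices of value-subset shows that for $h$ as large as $(k+1)[(k+1)B^{k+1}n]^{\lceil \log_{(k+1)/k} y\rceil}$, \emph{some} branch of the recursion stays large all the way down, forcing $t(h,n) \ge y$. \Cref{claim:large-t-implies-small-packing-number} then gives $\mcM^k_\infty(\mcH,\gamma) < h$, which is the claimed inequality.

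The main obstacle I anticipate is bookkeeping the recursion cleanly: at each level we split on a coordinate and a value-subset, and we need to guarantee that one branch (one choice of coordinate, one choice of $(k+1)$-subset of values, and for that subset all $k+1$ value-pieces simultaneously large) survives. The clean way to handle this is to argue contrapositively / by a potential argument: define $\phi(n, s)$ to be the minimal size of a $(k+1)$-wise $\gamma$-separated class on $n$ points guaranteeing $s$ shattered pairs, show $\phi(n,s) \le (k+1)B^{k+1}n \cdot \phi(n-1, \lceil \tfrac{k}{k+1}(s-1)\rceil)$ — because if $|\mcH| \ge (k+1)B^{k+1} n \cdot \phi(n-1, s')$, then for \emph{every} coordinate $x^\ast$ and value-subset the total is large enough that by averaging either the union is large or all pieces are large, in either case recursing — and then solve the recursion, which collapses the target $s$ geometrically toward $1$ in $\lceil\log_{(k+1)/k} s\rceil$ steps. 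Setting $s = y$ and plugging into \Cref{claim:large-t-implies-small-packing-number} finishes. I expect the only genuinely delicate point is getting the constants/factors in the single-step recursion right (in particular correctly accounting for the ``$+1$'' from prepending $x^\ast$ versus the $\tfrac{k+1}{k}$ factor, and which of ``union is large'' vs ``all pieces large'' one is in), but none of this should require any idea beyond \Cref{claim:union-plus-intersection-lb} and careful induction.
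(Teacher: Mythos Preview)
Your overall scaffolding is right and matches the paper's: reduce to showing $t(h,n)\ge y$ via \Cref{claim:large-t-implies-small-packing-number}, establish the base $t(k+1,n)\ge 1$, prove a one-step recursion that gains a factor $\frac{k+1}{k}$ in shattered pairs at the cost of a factor $(k+1)B^{k+1}n$ in class size using \Cref{claim:union-plus-intersection-lb}, and unroll. Where your proposal has a real gap is the mechanism for the one-step recursion.

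You propose to ``fix a coordinate $x^\ast$'' and examine the level sets $\mcH_{v_0},\dots,\mcH_{v_k}$. But nothing forces any single coordinate to have $k+1$ level sets (at pairwise $2\gamma$-separated values) that are \emph{all} large simultaneously; a class can be $(k+1)$-wise $\gamma$-separated while at every individual coordinate one value is taken by almost every function. Your ``either the union is large or all pieces are large'' dichotomy does not rescue this: the union of the level sets is just a subset of $\mcH$ and its size tells you nothing, while the union of the \emph{shattered-pair} sets $S_i$ is only controlled once each $|S_i|$ is already large --- which is precisely what you cannot guarantee. And a pigeonhole ``over the $\le n$ choices of $x^\ast$ and $\le\binom{B}{k+1}$ value-subsets'' has no quantity to pigeonhole on unless you first manufacture one. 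Your closing remark that ``none of this should require any idea beyond \Cref{claim:union-plus-intersection-lb} and careful induction'' is where you underestimate the difficulty.

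The paper manufactures that quantity by reversing the order of operations: \emph{before} selecting any coordinate, partition $\mcH$ arbitrarily into groups of size exactly $k+1$. Each group $G$, by $(k+1)$-wise $\gamma$-separation, comes with a witness coordinate $\chi(G)$ and a $(k+1)$-set of realized values there. Now pigeonhole the $|\mcH|/(k+1)$ groups into the at most $nB^{k+1}$ bins indexed by (witness, value-set); a popular bin at $(x^\star,\{c_1^\star,\dots,c_{k+1}^\star\})$ automatically yields $k+1$ \emph{equal-sized} pieces $F_1,\dots,F_{k+1}$ (one function per group per value), each constant on $x^\star$ and hence still $(k+1)$-wise separated on $\mcX\setminus\{x^\star\}$. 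The union--intersection step then applies to their shattered-pair sets exactly as you described. This grouping trick is the missing ingredient; with it in hand, your sketch goes through with the constants you wrote down.
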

\begin{proof}
    By \Cref{claim:large-t-implies-small-packing-number}, it suffices to show that 
    \begin{align}
    \label{eqn:t-bound-in-terms-of-y}
    t\left((k+1)\left[(k+1)B^{k+1}n\right]^{\left\lceil\log_{\frac{k+1}{k}} y\right\rceil},n\right) \ge y.
    \end{align}
    We will show that:
    \begin{align}
        &t(k+1,n) \ge 1, \qquad \forall n \ge 1, \label{eqn:packing-number-induction-base}\\
        &t\left((k+1)^2mnB^{k+1}, n\right) \ge \left(\frac{k+1}{k}\right)t((k+1)m, n-1), \qquad \forall m \ge 1, n \ge 2 \label{eqn:packing-number-inductive-step}.
    \end{align}
    For any $(k+1)$-wise $\gamma$-separated $\mcH$ of size $k+1$, by definition, there must exist an $x$ at which $|f_i(x)-f_j(x)|\ge 2\gamma$ for all $f_i, f_j \in \mcH$---this means that $\mcH$ $(\gamma,k)$-strongly-fat-shatters $\{x\}$, giving us \eqref{eqn:packing-number-induction-base}.

    Next, consider any $(k+1)$-wise $\gamma$-separated $\mcH$ of size $(k+1)^2mnB^{k+1}$ (if such an $\mcH$ does not exist, the left hand side of \eqref{eqn:packing-number-inductive-step} is $\infty$ and the inequality holds). Partition the functions in $\mcH$ arbitrarily into groups of size $k+1$. For each group $G=\{f_1,\dots,f_{k+1}\}$, there must exist an $x \in \mcX$ which witnesses the $(k+1)$-wise $\gamma$-separation of the functions in $G$. Fix such an $x$ and denote it by $\chi(G)$.

    Now, for every $x \in \mcX$, and $c_1,\dots,c_{k+1} \in \mcL$ such that $|c_{i}-c_{i+1}| \ge 2\gamma, \forall i \in [k]$, define
    \begin{align}
        \bin(x, c_1,\dots,c_{k+1}) &= \{G=\{f_1,\dots,f_{k+1}\}: \chi(G)=x, \{f_1(x),\dots,f_{k+1}(x)\}=\{c_1,\dots,c_{k+1}\}\}.
    \end{align}
    Namely, $\bin(x, c_1,\dots,c_{k+1})$ contains all the groups that are $(k+1)$-wise $\gamma$-separated at $x$ according to $c_1,\dots,c_{k+1}$. We have that the total number of possible bins are at most $n\binom{B}{k+1} \le nB^{k+1}$. Since the number of groups is $(k+1)mnB^{k+1}$, by the pigeonhole principle, there must exist a bin with at least $(k+1)m$ groups---let this be $\bin(x^\star, c_1^\star,\dots,c_{k+1}^\star)$, and arbitrarily trim its size to be exactly $(k+1)m$.

    Now, let
    \begin{align*}
        F_1 &= \bigcup_{G \in \bin(x^\star, c_1^\star,\dots,c_{k+1}^\star)} \{f \in G: f(x^\star)=c_1^\star\} \\
        &\vdots\\
        F_{k+1} &= \bigcup_{G \in \bin(x^\star, c_1^\star,\dots,c_{k+1}^\star)} \{f \in G: f(x^\star)=c_{k+1}^\star\}.
    \end{align*}
    Note that $|F_i|=(k+1)m$ for every $F_i$. Now fix an $F_i$, and note that every function in $F_i$ labels $x^\star$ identically as $c_i^\star$. This means that the functions in $F_i$ must be $(k+1)$-wise $\gamma$-separated at some $x \in \mcX \setminus \{x^\star\}$. Then, consider the set $F_i|_{\mcX \setminus \{x^\star\}}$. By the definition of $t(\cdot,\cdot)$, this set $(\gamma,k)$ strongly fat-shatters at least $t((k+1)m, n-1)$ many $(X, \bc)$ pairs---collect these pairs in a set $S_i$. Any pair in $S_i$ is certainly also shattered by $\mcH$. Now, consider an $(X, \bc)$ pair that is simultaneously shattered by \textit{every} $F_i$. Then, because $c_1,\dots,c_{k+1}$ differ by at least $2\gamma$ pairwise, observe that $F$ additionally $(\gamma,k)$ strongly fat-shatters the pair $(X \cup x^\star, \bc \cup \{c_1,\dots,c_{k+1}\})$. Summarily, we have shown that $F$ $(\gamma,k)$ strongly fat-shatters at least
    \begin{align*}
        \left|\bigcup_{i=1}^{k+1}S_i\right| + \left|\bigcap_{i=1}^{k+1}S_i\right|
    \end{align*}
    many $(X, \bc)$ pairs. Since each $|S_i| \ge t((k+1)m, n-1)$, by \Cref{claim:union-plus-intersection-lb}, we get that this is at least $\left(\frac{k+1}{k}\right)t((k+1)m, n-1))$ many pairs, from which we conclude \eqref{eqn:packing-number-inductive-step}.

    Now, let $r$ be such that $n > r \ge 1$. Then, from \eqref{eqn:packing-number-induction-base}, and repeatedly applying \eqref{eqn:packing-number-inductive-step}, we get
    \begin{align*}
    &t\left((k+1)^{r+1}\cdot B^{r(k+1)}\cdot n(n-1)\dots(n-(r-1))\cdot m, n\right) \\&\ge \left( \frac{k+1}{k}\right)\cdot t\left((k+1)^r \cdot B^{(r-1)(k+1)}\cdot (n-1)\dots(n-(r-1))\cdot m, n-1\right) \\
    &\vdots \\
    &\ge  \left(\frac{k+1}{k}\right)^r \cdot t\left((k+1)m, n-r\right) 
    \ge \left(\frac{k+1}{k}\right)^r.
    \end{align*}
    Set $m=1$ in the above. Since $n^r \ge n(n-1)\dots(n-(r-1))$ and $t(\cdot,\cdot)$ is non-decreasing in its first argument, we get that for $n > r \ge 1$,
    \begin{align}
    \label{eqn:t-bound-in-terms-of-r}
    t\left((k+1)\left[(k+1)B^{k+1}n\right]^r, n\right) \ge \left(\frac{k+1}{k}\right)^r.
    \end{align}
    Thus, if $n > \left\lceil \log_{\frac{k+1}{k}} y \right\rceil$, setting $r = \left\lceil \log_{\frac{k+1}{k}} y \right\rceil$ in \eqref{eqn:t-bound-in-terms-of-r} gives us \eqref{eqn:t-bound-in-terms-of-y}. Otherwise, if $n \le \left\lceil \log_{\frac{k+1}{k}} y \right\rceil$, observe that
    \begin{align*}
        (k+1)\left[(k+1)B^{k+1}n\right]^{\left\lceil \log_{\frac{k+1}{k}}y \right\rceil} > B^{n},
    \end{align*}
    which is the maximum number of functions possible from $\mcX \to \mcL$. Thus, a $(k+1)$-wise $\gamma$-separated set $F$ of this size does not exist. By definition, $t\left((k+1)\left[(k+1)B^{k+1}n\right]^{\left\lceil \log_{\frac{k+1}{k}} y \right\rceil}, n\right)= \infty$, and hence \eqref{eqn:t-bound-in-terms-of-y} still holds.
\end{proof}

\subsubsection{Relating $k$-Ary Packing to $k$-Fat-Shattering}
\label{sec:packing-number-lb}

Next, we lower bound the $k$-ary $\gamma$-packing number of a hypothesis class in terms of its $(\gamma,k)$-fat-shattering dimension. In the case of $k=1$, the hypotheses that realize a $\gamma$-fat-shattered set are pairwise separated, and hence constitute a packing. However, for $k > 1$, it was not a priori clear to us how to procedurally infer a $k$-ary packing from a $k$-fat-shattered set. As we could not find a result in the literature that fits our needs, we prove an elementary lower bound using the probabilistic method, and include a proof in \Cref{sec:packing-number-lb-proof}.

\begin{restatable}[Lower Bounding $k$-Ary $\gamma$-Packing Number]{lemma}{packingnumberlb}
    \label{lem:packing-number-lb}
    Let $\mcH \subseteq [0,1]^\mcX$ be a hypothesis class having $(\gamma, k)$-fat-shattering dimension $\dfat{\gamma}(\mcH)$. Then,
    \begin{align}
        \mcM^k_\infty(\mcH, \gamma) \ge \exp\left(\Omega\left(\frac{\dfat{\gamma}(\mcH)}{k^k}\right)\right).
    \end{align}
\end{restatable}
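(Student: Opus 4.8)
The plan is to extract the desired $(k+1)$-wise $\gamma$-separated family directly from a maximal $(\gamma,k)$-fat-shattered sequence, by a random construction. Let $d=\dfat{\gamma}(\mcH)$, and fix a $(\gamma,k)$-fat-shattered sequence $x_1,\dots,x_d\in\mcX$ with witnessing vectors $c_1,\dots,c_d$ and hypotheses $\{h_b : b\in\{0,1,\dots,k\}^d\}$ as in \Cref{def:gamma_k_fat_shattering}. The key structural observation is that on each coordinate $i$, the value $h_b(x_i)$ lies in one of $k+1$ ``bands'' determined by $b_i\in\{0,1,\dots,k\}$, and that---since consecutive thresholds $c_{i,j}$ are $2\gamma$-separated and each band carries a $\gamma$ margin---any two values drawn from distinct bands on the same coordinate differ by at least $2\gamma$. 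In particular, distinct strings $b\neq b'$ yield distinct hypotheses $h_b\neq h_{b'}$, and a sub-collection $h_{b^{(1)}},\dots,h_{b^{(k+1)}}$ indexed by distinct strings is $(k+1)$-wise $\gamma$-separated as soon as there is one coordinate $i\in[d]$ on which $b^{(1)}_i,\dots,b^{(k+1)}_i$ are pairwise distinct, i.e.\ form a permutation of $\{0,\dots,k\}$; call such an $i$ a \emph{rainbow} coordinate. Thus it suffices to produce a large $F\subseteq\{0,1,\dots,k\}^d$ such that every $(k+1)$-subset of $F$ has a rainbow coordinate---this is precisely the condition defining a perfect hash family, and since we could not locate a citable bound in exactly the form needed we will build one probabilistically.

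For the construction, I would sample $N$ strings uniformly at random \emph{without replacement} from $\{0,1,\dots,k\}^d$. Under the uniform product measure a fixed coordinate is rainbow for a fixed $(k+1)$-tuple of strings with probability $(k+1)!/(k+1)^{k+1}$, so, writing $q:=1-(k+1)!/(k+1)^{k+1}$, a direct count shows that at most $(k+1)^{d(k+1)}q^{d}/(k+1)!$ of the $(k+1)$-subsets of the cube have no rainbow coordinate (bound the number of such subsets by the number of ``bad'' ordered $(k+1)$-tuples of strings, $((k+1)^{k+1}-(k+1)!)^{d}$, divided by $(k+1)!$). Multiplying by the probability $\lesssim (N/(k+1)^d)^{k+1}$ that a fixed $(k+1)$-subset of the cube is contained in the sample, and union bounding over all bad subsets, the probability that the sampled set contains \emph{any} $(k+1)$-subset without a rainbow coordinate is at most $O(N)^{k+1}q^{d}/(k+1)!$. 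Choosing $N$ just below $\tfrac12 q^{-d/(k+1)}$ makes this quantity $<1$, so a valid $F$ of size $N$ exists (the sampling-without-replacement choice is exactly what lets us avoid a separate bound on collisions among the $N$ strings), and hence $\mcM^k_\infty(\mcH,\gamma)\ge \tfrac12 q^{-d/(k+1)}$, the regime of small $d$ being trivial.

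It remains to turn this into the stated form. Using $\ln(1/q)=-\ln\!\big(1-(k+1)!/(k+1)^{k+1}\big)\ge (k+1)!/(k+1)^{k+1}\ge e^{-(k+1)}$, where the last step uses the elementary inequality $m!\ge (m/e)^m$, we get $q^{-d/(k+1)}\ge \exp\!\big(d/((k+1)e^{k+1})\big)$; since $(k+1)e^{k+1}=O(k^k)$ with a universal constant, this is $\exp(\Omega(d/k^k))=\exp(\Omega(\dfat{\gamma}(\mcH)/k^k))$, as claimed. The conceptual crux is the first paragraph: recognizing that $(k+1)$-wise $\gamma$-separation of the fat-shattering witnesses is equivalent to the rainbow/perfect-hash condition on their index strings. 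After that the work is routine probabilistic method plus Stirling estimates; the one point requiring care in the last step is keeping the base of the exponent at $e^{k+1}$, since the cruder lower bound $(k+1)!/(k+1)^{k+1}\ge (k+1)^{-k}$ (obtained by replacing each factor $j/(k+1)$ by $1/(k+1)$) would only give $\exp(\Omega(d/(k+1)^{k+1}))$, which is too weak to imply the $k^k$ in the statement.
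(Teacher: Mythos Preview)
Your proposal is correct and follows essentially the same route as the paper. Both arguments pick a $(\gamma,k)$-fat-shattered sequence of length $d$, sample index strings uniformly from $\{0,\dots,k\}^d$, observe that a $(k+1)$-tuple is $(k+1)$-wise $\gamma$-separated whenever it has a ``rainbow'' coordinate (probability $(k+1)!/(k+1)^{k+1}$ per coordinate), and then union bound over all $(k+1)$-tuples to find a good sample of size $\exp\big(\Theta(d\cdot(k+1)!/(k+1)^{k+2})\big)$. The differences are cosmetic: you sample without replacement and phrase the target combinatorial object as a perfect hash family (a clean identification the paper does not make explicit), whereas the paper samples with replacement and bounds $\binom{m}{k+1}$ directly; your final paragraph also spells out the Stirling step needed to reduce $(k+1)e^{k+1}$ to $O(k^k)$, which the paper leaves implicit in the $\Omega(\cdot)$ of the lemma statement.
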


\subsubsection{Relating $k$-Strong-Fat-Shattering Dimension to $k$-Fat-Shattering Dimension}
\label{sec:relating-dweak-dstrong}

We can now relate the $(\gamma,k)$-fat-shattering and $(\gamma,k)$-strong fat-shattering dimensions of a hypothesis class via its $k$-ary $\gamma$-packing number.

\begin{restatable}[Relating $k$-strong-fat-shattering to $k$-fat-shattering]{lemma}{strongfattofat}
    \label{lem:relating-dweak-dstrong}
    Fix $\mcL \subset [0,1]$ such that $|\mcL|=B < \infty$. Let $\mcH \subseteq \mcL^\mcX$ be a hypothesis class having $(\gamma, k)$-fat-shattering dimension $\dfat{\gamma}(\mcH)$ and  $(\gamma, k)$-strong-fat-shattering dimension $\dsfat{\gamma}(\mcH)$. Then,
    \begin{align}
        \dsfat{\gamma}(\mcH) \ge \widetilde{\Omega}\left(\frac{\dfat{\gamma}(\mcH)}{k^k}\right),
    \end{align}
    where the $\widetilde{\Omega}(\cdot)$ hides $\polylog$ factors in $k, \dfat{\gamma}(\mcH)$ and $B$.
\end{restatable}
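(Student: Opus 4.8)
The plan is to sandwich the $k$-ary $\gamma$-packing number of a suitable \emph{finite} restriction of $\mcH$ between two bounds proved in the preceding subsections: from below by the \emph{exponential} estimate $\mcM^k_\infty \ge \exp(\Omega(\dfat{\gamma}/k^k))$ of \Cref{lem:packing-number-lb}, and from above by the \emph{quasi-polynomial} estimate of \Cref{lem:packing-number-ub}, which is controlled by $\dsfat{\gamma}$. Comparing the exponents of these two bounds then forces $\dsfat{\gamma}$ to be large whenever $\dfat{\gamma}$ is. Throughout, write $D := \dfat{\gamma}(\mcH)$ and $d := \dsfat{\gamma}(\mcH)$; we may assume $D < \infty$, since otherwise running the argument below on arbitrarily large finite fat-shattered restrictions forces $d = \infty$ as well.

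First I would pass to a finite domain so that \Cref{lem:packing-number-ub} (whose hypotheses require $|\mcX| < \infty$) is applicable. Let $\mcX_0 \subseteq \mcX$ with $|\mcX_0| = D$ be a sequence that $\mcH$ $(\gamma,k)$-fat-shatters, and set $\mcH_0 := \mcH|_{\mcX_0} \subseteq \mcL^{\mcX_0}$, which is finite since $|\mcL| = B < \infty$. Restriction preserves the fat-shattering of $\mcX_0$, so $\dfat{\gamma}(\mcH_0) = D$; and restriction cannot increase the strong-fat-shattering dimension, so $\dsfat{\gamma}(\mcH_0) \le d$. These two monotonicity facts are exactly what make both packing estimates speak about the same finite class $\mcH_0$.

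Next I would apply both bounds to $\mcH_0$. \Cref{lem:packing-number-lb} gives $\mcM^k_\infty(\mcH_0, \gamma) \ge \exp(\Omega(D/k^k))$. \Cref{lem:packing-number-ub}, with $n = D$, label bound $B$, and strong-fat-shattering dimension at most $d$ (the bound there is monotone in this dimension, so we may substitute $d$), gives
\[
\mcM^k_\infty(\mcH_0, \gamma) < (k+1)\left[(k+1)B^{k+1}D\right]^{\left\lceil \log_{\frac{k+1}{k}} y\right\rceil}, \qquad y = \sum_{i=1}^{d}\binom{D}{i}\binom{B}{k+1}^i.
\]
Taking logarithms of both inequalities and chaining them yields $\Omega(D/k^k) \le \ln\mcM^k_\infty(\mcH_0,\gamma) < \ln(k+1) + \big\lceil\log_{\frac{k+1}{k}} y\big\rceil \cdot \ln\!\big((k+1)B^{k+1}D\big)$. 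I would then plug in $\log_{\frac{k+1}{k}} y = \Theta(k\ln y)$ (since $\ln(1+1/k) = \Theta(1/k)$) and the crude estimate $\ln y = O(d(\ln D + k\ln B))$, making the right-hand side $O(dk(\ln D + k\ln B)^2)$. Rearranging gives $\dsfat{\gamma}(\mcH) = d = \Omega\!\left(\frac{D}{\poly(k)\,(\ln D + \ln B)^2}\right) = \widetilde\Omega\!\left(\dfat{\gamma}(\mcH)/k^k\right)$, with the $\widetilde\Omega$ absorbing the $\polylog(k, D, B)$ factors (and the residual $\poly(k)$ coming from the near-unit base $\tfrac{k+1}{k}$ of the logarithm in \Cref{lem:packing-number-ub}).

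The main obstacle here is bookkeeping rather than a new idea: one must be careful that the finite restriction keeps $\dfat{\gamma}$ exactly at $D$ while only shrinking $\dsfat{\gamma}$, so that the exponential lower bound and the quasi-polynomial upper bound really apply to one and the same finite class, and then carry the parameter $y$ and the base-$\tfrac{k+1}{k}$ logarithm faithfully through the exponent comparison --- that base being so close to $1$ is precisely what caps the result at $\widetilde\Omega(D/k^k)$ rather than something sharper. Beyond \Cref{lem:packing-number-lb} and \Cref{lem:packing-number-ub}, no further machinery is needed; \Cref{lem:relating-dweak-dstrong} is essentially their composition.
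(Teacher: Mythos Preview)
Your proposal is correct and follows essentially the same route as the paper's proof: restrict to a finite $(\gamma,k)$-fat-shattered set of size $D$, note that the restriction keeps $\dfat{\gamma}=D$ while only decreasing $\dsfat{\gamma}$, sandwich the $k$-ary packing number of the restricted class between \Cref{lem:packing-number-lb} and \Cref{lem:packing-number-ub}, and compare exponents after taking logs. Your handling of the $D=\infty$ case and of the monotonicity of the upper bound in $d$ are both spelled out a bit more carefully than in the paper, but the argument is otherwise the same.
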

The proof simply combines the upper bound on the $k$-ary $\gamma$-packing number from \Cref{lem:packing-number-ub} and the lower bound from \Cref{lem:packing-number-lb}, and is given in \Cref{sec:relating-dweak-dstrong-proof}. In particular, \Cref{lem:relating-dweak-dstrong} indicates that for finite, fixed $k$ and $B$, if $\dfat{\gamma}(\mcH)=\infty$, then $\dsfat{\gamma}(\mcH)=\infty$.

\subsubsection{Inferring Necessity of $k$-Fat-Shattering from $k$-Strong-Fat-Shattering}
\label{sec:lb-putting-together}

Now that we have related $\dfat{\gamma}(\mcH)$ to $\dsfat{\gamma}(\mcH)$, it remains to translate the lower bound in terms of the $\dsfat{\gamma}(\mcH)$ from \Cref{lemma:strong-fat-shattering-dimension-lb-quantitative} to a lower bound in terms of $\dfat{\gamma}(\mcH)$. In order for this, we need to first convert the given class $\mcH$ to a class on finitely many labels, so as to satisfy the criteria of \Cref{lem:relating-dweak-dstrong}. We do this by discretizing the members of $\mcH$ to a grid.

\begin{definition}[Discretized hypothesis class]
    \label{def:discretization}
    For $\alpha \in (0,1)$, and $x \in [0,1]$, define 
    \begin{align}
        Q_\alpha(x)= \alpha \left\lfloor\frac{x}{\alpha}\right\rfloor.
    \end{align}
    For a vector $c \in [0,1]^k$, define $Q_{\alpha}(c)=(Q_\alpha(c_1),\dots,Q_\alpha(c_k))$. For a function $f:\mcX \to [0,1]$, define $f^\alpha:\mcX \to [0,1]$ such that $f^\alpha(x) = Q_\alpha(f(x))$. Finally, for a function class $\mcH \subseteq [0,1]^\mcX$, define the discretized function class $\mcH^\alpha=\{f^\alpha:f \in \mcH\}$.
\end{definition}
The following two claims relate the learnability of $\mcH$ and $\mcH^\alpha$. The proofs of these essentially follow once we notice that function values do not change by more than $\alpha$ (which is chosen to be smaller than the shattering width $\gamma$) upon discretization, and details are provided in \Cref{sec:discretization-proofs}. %

\begin{restatable}{claim}{discretizationfatshattering}
    \label{claim:discretization-fat-shattering}
    Let $\mcH \subseteq [0,1]^\mcX$ be a hypothesis class having $(\gamma, k)$-fat-shattering dimension $d$. Then for any $\alpha \le \gamma$, the $(\max(\alpha,\gamma-\alpha), k)$-fat-shattering dimension of $\mcH^\alpha$ is at least $d$.
\end{restatable}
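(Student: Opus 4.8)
The plan is to exhibit, for the discretized class $\mcH^\alpha$, an explicit $(\gamma',k)$-fat-shattered sequence of size $d$, where $\gamma':=\max(\alpha,\gamma-\alpha)$, by re-using the very points that witness $(\gamma,k)$-fat-shattering of $\mcH$ and snapping the real-valued thresholds onto the grid. Let $S=(x_1,\dots,x_d)$ be $(\gamma,k)$-fat-shattered by $\mcH$ with witnesses $c_1,\dots,c_d\in[0,1]^k$. The two structural facts I will lean on are that $\mcH^\alpha$ is obtained by applying $Q_\alpha$ coordinatewise, and that $Q_\alpha$ is non-decreasing and floor-superadditive, i.e.\ $Q_\alpha(s)+Q_\alpha(t)\le Q_\alpha(s+t)$ for all reals, with $Q_\alpha(\gamma)=\alpha\floor{\gamma/\alpha}$.

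I would take the new witnesses to be simply the discretized old ones, $c'_{i,j}:=Q_\alpha(c_{i,j})$ for $i\in[d],j\in[k]$ (so $c'_{i,j}\in[0,1]$ automatically). Essentially every verification then reduces to one arithmetic inequality: $Q_\alpha(\gamma)=\alpha\floor{\gamma/\alpha}\ge\gamma'=\max(\alpha,\gamma-\alpha)$ whenever $\alpha\le\gamma$. Indeed $\alpha\floor{\gamma/\alpha}>\alpha(\gamma/\alpha-1)=\gamma-\alpha$, and $\alpha\floor{\gamma/\alpha}\ge\alpha$ because $\floor{\gamma/\alpha}\ge 1$ — and this last step is exactly where $\alpha\le\gamma$ enters. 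Since $\floor{2x}\ge 2\floor{x}$, this also yields $Q_\alpha(2\gamma)\ge 2Q_\alpha(\gamma)\ge 2\gamma'$, from which the separation condition follows: using $c_{i,j+1}\ge c_{i,j}+2\gamma$, monotonicity, and superadditivity, $c'_{i,j+1}-c'_{i,j}=Q_\alpha(c_{i,j+1})-Q_\alpha(c_{i,j})\ge Q_\alpha(2\gamma)\ge 2\gamma'$.

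Next I would verify the shattering property. Fix $b\in\{0,1,\dots,k\}^d$, let $h_b\in\mcH$ realize $b$ for $\mcH$, and use $h_b^\alpha\in\mcH^\alpha$ as the witness. If $b_i=0$, then $h_b(x_i)\le c_{i,1}-\gamma$, so by monotonicity and superadditivity $h_b^\alpha(x_i)=Q_\alpha(h_b(x_i))\le Q_\alpha(c_{i,1}-\gamma)\le Q_\alpha(c_{i,1})-Q_\alpha(\gamma)\le c'_{i,1}-\gamma'$. If $b_i=j\in\{1,\dots,k-1\}$, then $c_{i,j}+\gamma\le h_b(x_i)\le c_{i,j+1}-\gamma$, so $h_b^\alpha(x_i)$ lies between $Q_\alpha(c_{i,j}+\gamma)$ and $Q_\alpha(c_{i,j+1}-\gamma)$; by superadditivity these are respectively $\ge Q_\alpha(c_{i,j})+Q_\alpha(\gamma)\ge c'_{i,j}+\gamma'$ and $\le Q_\alpha(c_{i,j+1})-Q_\alpha(\gamma)\le c'_{i,j+1}-\gamma'$. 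The case $b_i=k$ is symmetric to $b_i=0$. Hence $\mcH^\alpha$ $(\gamma',k)$-fat-shatters $S$, so $\dfat{\gamma'}(\mcH^\alpha)\ge d$.

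The only non-mechanical ingredient is the inequality $\alpha\floor{\gamma/\alpha}\ge\max(\alpha,\gamma-\alpha)$, and, relatedly, understanding \emph{why} the statement features $\max(\alpha,\gamma-\alpha)$ rather than the more obvious $\gamma-\alpha$: a crude argument that keeps the old thresholds and absorbs the discretization error $\alpha$ into the margin only yields margin $\gamma-\alpha$, which collapses as $\alpha\to\gamma$. The extra mileage comes from the fact that every value of $\mcH^\alpha$ lies on the $\alpha$-grid, so snapping the thresholds to the grid recovers the quantized separation $Q_\alpha(\gamma)$ between a hypothesis value and its threshold, which is at least $\alpha$ once $\alpha\le\gamma$ — letting us cash in whichever of $\alpha$ or $\gamma-\alpha$ is larger. (If one prefers a more hands-on treatment, the two regimes can be handled separately: witnesses $c'_i=c_i$ with margin $\gamma-\alpha$ when $\alpha\le\gamma/2$, and grid-aligned witnesses with margin $\alpha$ when $\alpha>\gamma/2$.)
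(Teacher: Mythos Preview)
Your proof is correct and follows essentially the same route as the paper's: both take the discretized thresholds $c'_i=Q_\alpha(c_i)$ as the new witnesses, both rely on the key arithmetic fact $\alpha\lfloor\gamma/\alpha\rfloor\ge\max(\alpha,\gamma-\alpha)$ for $\alpha\le\gamma$, and both reduce everything to the floor inequality you call ``superadditivity'' (the paper writes it as $\lfloor x/\alpha\rfloor-\lfloor y/\alpha\rfloor\ge\lfloor(x-y)/\alpha\rfloor$). Your write-up is more explicit in checking the separation condition $c'_{i,j+1}-c'_{i,j}\ge 2\gamma'$ and in walking through each case $b_i\in\{0,\dots,k\}$, whereas the paper compresses all of this into a single line; but the argument is the same.
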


\begin{restatable}{claim}{agnosticlearningdiscretized}
    \label{claim:agnostic-learning-discretized}
    Let $\mcA$ be an algorithm that agnostically $k$-list learns $\mcH \subseteq [0,1]^\mcX$ with sample complexity $m^{k, \mathrm{ag}}_{\mcA, \mcH}(\eps, \delta)$. Then, $\mcA$ also agnostically $k$-list learns $\mcH^\alpha$, and
    \begin{align*}
        m^{k, \mathrm{ag}}_{\mcA, \mcH^\alpha}(\eps+\alpha, \delta) = m^{k, \mathrm{ag}}_{\mcA, \mcH}(\eps, \delta).
    \end{align*}    
\end{restatable}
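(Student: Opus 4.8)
The plan is to show that running the \emph{same} algorithm $\mcA$ (the one that agnostically $k$-list learns $\mcH$) on an i.i.d.\ sample from \emph{any} distribution $\mcD$ over $\mcX \times [0,1]$ already constitutes a valid agnostic $k$-list learner for $\mcH^\alpha$, at the price of an additive $\alpha$ in accuracy. Equivalently, I will argue that the function $(\eps,\delta)\mapsto m^{k,\mathrm{ag}}_{\mcA,\mcH}(\eps,\delta)$ is a legitimate choice for $m^{k,\mathrm{ag}}_{\mcA,\mcH^\alpha}(\eps+\alpha,\delta)$ (recall these are merely \emph{some} sample complexity functions, so the asserted ``equality'' is really this choice of witness).

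The single ingredient is a pointwise closeness bound between a hypothesis and its discretization. By \Cref{def:discretization}, for every $f \in \mcH$ and $x \in \mcX$ we have $f(x) - \alpha < f^\alpha(x) = Q_\alpha(f(x)) \le f(x)$, so $|f^\alpha(x) - f(x)| \le \alpha$. Since $\ell_\abs(\cdot, y)$ restricted to singleton lists is just $|\cdot - y|$, which is $1$-Lipschitz, this gives $|\ell_\abs(f^\alpha(x), y) - \ell_\abs(f(x), y)| \le \alpha$ for every $(x,y)$; taking expectations over $(x,y)\sim\mcD$ yields
\[
|\err_{\mcD,\ell_\abs}(f^\alpha) - \err_{\mcD,\ell_\abs}(f)| \le \alpha \qquad \text{for every } f \in \mcH \text{ and every } \mcD .
\]
Taking an infimum over $f \in \mcH$, and recalling $\mcH^\alpha = \{f^\alpha : f \in \mcH\}$, gives $\inf_{h \in \mcH} \err_{\mcD,\ell_\abs}(h) \le \inf_{g \in \mcH^\alpha}\err_{\mcD,\ell_\abs}(g) + \alpha$.

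To finish, fix $\eps,\delta \in (0,1)$ and an arbitrary distribution $\mcD$ over $\mcX\times[0,1]$, and draw $S\sim\mcD^m$ with $m \ge m^{k,\mathrm{ag}}_{\mcA,\mcH}(\eps,\delta)$. By the learning guarantee of $\mcA$ for $\mcH$ (\Cref{def:agnostic-regression}), with probability at least $1-\delta$ over $S$ and the internal randomness of $\mcA$,
\[
\err_{\mcD,\ell_\abs}(\mcA(S)) \le \inf_{h\in\mcH}\err_{\mcD,\ell_\abs}(h) + \eps \le \inf_{g\in\mcH^\alpha}\err_{\mcD,\ell_\abs}(g) + \eps + \alpha ,
\]
where the last step is the display above. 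As $\mcD$ was arbitrary, this is precisely the statement that $\mcA$ agnostically $k$-list learns $\mcH^\alpha$ with the claimed sample complexity.

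There is essentially no real obstacle here; the only points requiring care are bookkeeping ones. First, the guarantee of $\mcA$ is quantified over \emph{all} distributions on $\mcX\times[0,1]$, not only those bearing some structured relationship to $\mcH^\alpha$, so it applies verbatim to the $\mcD$ at hand, and the discretization enters only through the benchmark term $\inf_{g\in\mcH^\alpha}$, which is controlled by the pointwise bound. Second, $\mcH^\alpha \subseteq [0,1]^\mcX$ still holds (since $0 \le Q_\alpha(f(x)) \le f(x) \le 1$), so $\mcH^\alpha$ is a legitimate input class. No realizability is used anywhere, so the argument is genuinely agnostic.
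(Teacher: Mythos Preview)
Your proof is correct and follows essentially the same approach as the paper: invoke the agnostic guarantee of $\mcA$ for $\mcH$ on the given distribution, then use the pointwise bound $|f(x)-f^\alpha(x)|\le \alpha$ (via the triangle inequality/1-Lipschitzness of $|\cdot - y|$) to replace $\inf_{h\in\mcH}\err_\mcD(h)$ by $\inf_{g\in\mcH^\alpha}\err_\mcD(g)+\alpha$. Your write-up is in fact slightly more careful than the paper's in spelling out the infimum step and the bookkeeping about the quantification over distributions.
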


We can now finally prove \Cref{thm:agnostic-lower-bound-quantitative}.

\begin{proof}[Proof of \Cref{thm:agnostic-lower-bound-quantitative}]
    Fix $\eps, \delta \le \frac{\gamma}{8(k+1)}$, and suppose it were the case that 
    $$m^{k, \mathrm{ag}}_{\mcA, \mcH}(\eps, \delta) = \tilde{o}\left(\frac{\dfat{\gamma}(\mcH)}{k^k}\right).$$
    Let $\alpha = \frac{\gamma}{16(k+1)} \le \frac{\gamma}{32}$. From \Cref{claim:agnostic-learning-discretized} and \Cref{claim:discretization-fat-shattering}, we get that 
    $$
    m^{k, \mathrm{ag}}_{\mcA, \mcH^\alpha}(\eps+\alpha, \delta) = \tilde{o}\left(\frac{\dfat{\gamma}(\mcH)}{k^k}\right) =\tilde{o}\left(\frac{\dfat{\gamma-\alpha}(\mcH^\alpha)}{k^k}\right).
    $$
    Observe that $\mcH^\alpha$ is a class on finitely many labels---in particular, it maps $\mcX$ to $\mcL=\left\{0,\alpha,2\alpha,\dots,\alpha\left\lfloor\frac{1}{\alpha}\right\rceil\right\}$, so that $B=|\mcL|= 1+\alpha\left\lfloor\frac{1}{\alpha}\right\rceil < \infty$. Then, \Cref{lem:relating-dweak-dstrong} implies that $\dfat{\gamma-\alpha}(\mcH^\alpha)=\tilde{O}(k^k \cdot \dsfat{\gamma-\alpha}(\mcH^\alpha))$. Substituting above, we get
    $$
    m^{k, \mathrm{ag}}_{\mcA, \mcH^\alpha}(\eps+\alpha, \delta) = o\left(\dsfat{\gamma-\alpha}(\mcH^\alpha)\right).
    $$
    But note that
    $$
        \eps + \alpha \le \frac{3\gamma}{16(k+1)} < \frac{\gamma-\gamma/32}{2(k+1)} \le \frac{\gamma-\alpha}{2(k+1)},
    $$
    which contradicts \Cref{lemma:strong-fat-shattering-dimension-lb-quantitative}. Thus, it must be the case that $m^{k, \mathrm{ag}}_{\mcA, \mcH}(\eps, \delta) = \tilde{\Omega}\left(\frac{\dfat{\gamma}(\mcH)}{k^k}\right)$ as required.
\end{proof}

\section{Realizable List Regression}
\label{sec:realizable_intro}

We now shift our attention to the realizable setting. In the setting of standard realizable regression ($k=1$), the $\gamma$-OIG dimension quantifies the amount of variability possible in the label of a test example $x$, given that we have pinned down labels on a training dataset $x_1,\dots,x_n$. Informally, a learner gets confused about what to predict on $x$, if there are two hypotheses in the class that are consistent with the training data, but differ on their labels on $x$ by at least $\Omega(\gamma)$---this is what is being captured in \Cref{def:oig-dimension} by the outdegree of orientations on the one-inclusion graph of $\mcH|_{\{x_1,\dots,x_n, x\}}$. However, if we are allowed to predict $k$ labels, for there to be confusion in the prediction on the test point, there ought to be $k+1$ hypotheses each of which are consistent with the labels on $x_1,\dots,x_n$, but differ \textit{pairwise} on the test point by an amount $\Omega(\gamma)$. This intuition leads to the following natural generalization of the $\gamma$-OIG dimension, which uses the same notion of $k$-outdegree of an orientation as defined by \cite{charikar2023characterization}.

\begin{definition}[List orientation and scaled $k$-outdegree \cite{charikar2023characterization}]
    \label{def:list_orientation_outdegree}
    A $k$-list orientation of the one-inclusion graph $\mcG(\mcH)$ of a hypothesis class $\mcH \subseteq [0,1]^\mcX$ is a mapping $\sigmak: E\rightarrow \{V' \subseteq V: |V'|\le k\}$ such that $\sigmak(e) \subseteq E$ for every $e \in E$. We denote by $\sigmak_i(e) \subseteq [0,1]^{\le k}$ the list of numbers comprising of the $i^{th}$ entry of each vertex that the edge $e$ is oriented to.

    For a vertex $v \in V$, which corresponds to some hypothesis $h \in \mathcal{H}$, let $v_i$ be the $i^{th}$ entry of $v$ (which corresponds to $h(i))$, and let $e_{i,v}$ be the hyperedge adjacent to $v$ in the direction $i$. For $\gamma \in (0,1)$, the (scaled) $k$-outdegree of the vertex $v$ under $\sigmak$ is 
    $$
        \outdeg(v;\sigmak,\gamma) = \left|\{i \in [n]: \sigmak_i(e_{i,v}) \not\owns_\gamma v_i\}\right|.
    $$
    The maximum scaled $k$-outdegree of $\sigmak$ is denoted by $\outdeg(\sigmak, \gamma) := \max\limits_{v \in V} \outdeg(v;\sigmak,\gamma)$.  
\end{definition}

\begin{definition}[$(\gamma, k)$-OIG dimension]
    \label{def:gamma-k-oig-dim}
    Consider a hypothesis class $\mathcal{H} \subseteq [0,1]^{\mathcal{X}}$ and let $\gamma \in (0,1)$. The $(\gamma,k)$-OIG dimension of $\mathcal{H}$ is defined as follows:
    \begin{align}
        \doig{\gamma}(\mcH) &:= \sup \left\{n \in \mathbb{N}: \exists S \in \mathcal{X}^n \text{ such that } \exists \text{ finite subgraph } G= (V,E)\text{ of } \mcG(\mcH|_S) \newline \text{ such that } \nonumber \right.\\ &\qquad\qquad \left.\forall \text{ $k$-list orientations } \sigmak , \exists v \in V \text { such that } \outdeg(v;\sigmak,\gamma) \ge \frac{n}{2(k+1)}\right\}.\footnotemark
    \end{align}
\end{definition}
\footnotetext{We note that setting $k=1$ recovers \Cref{def:oig-dimension}---the difference in the constant i.e., $n/3$ versus $n/4$ is inconsequential.}

Again, we have that $\doig[k_1]{\gamma}(\mcH) \ge \doig[k_2]{\gamma}(\mcH)$ for $k_1 < k_2$. In the spirit of \Cref{example:fat_shattering_list}, we now construct a hypothesis class $\mcH$, that has finite $(\gamma,2)$-OIG dimension, but infinite $\gamma$-OIG dimension as well as infinite $(\gamma,2)$-fat-shattering dimension. The former implies that $\mcH$ is not learnable in the standard realizable setting for $k=1$. The latter implies that $\mcH$ is not agnostically 2-list learnable. As we shall see ahead, finiteness of the $(\gamma,2)$-OIG dimension nevertheless makes $\mcH$ 2-list learnable in the realizable setting! This shows that 1) there are simple classes that are not learnable in the realizable setting with $k=1$, but are learnable with $k=2$, and 2) the $(\gamma,k)$-fat-shattering dimension does not characterize $k$-list learnability in the realizable setting---while finiteness of this quantity is sufficient for learnability (e.g., \Cref{thm:realizable-upper-bound-k-fat-shattering}), it is not necessary. Our example is motivated by a similar example constructed in \cite[Section 6]{bartlett1994fat}. 

\begin{example}
    \label{example:gamma-k-oig}
    Consider a hypothesis class $\mcH \subseteq [0,1]^\mcX$, where $\mcX=\{x_1,x_2,\dots,\}$ is a countably infinite domain, defined as follows: for any $b, c \in \{0,1\}^{\N}$, let $h_{b,c}$ be the hypothesis that maps:
    \begin{align*}
        h_{b,c}(x_i) &= \frac{1}{2} \cdot c_i + \frac{3}{8} \cdot b_i + \frac{1}{16}\sum_{j \in \N}2^{-j}b_j. 
    \end{align*}
    Let $\mcH = \{h_{b,c}: b,c \in \{0,1\}^{\N}\}$. We first claim that $\doig[2]{\gamma}(\mcH) \le 1$ for any $\gamma \in (0,1/2)$. 
    Observe that once we specify $h_{b,c}(x_i)$ for any $x_i$, this immediately determines both $c_i$, \textit{and} the \textit{entire} vector $b$. We can see this by noticing that $c_i = 1 \iff h_{b,c}(x_i) \ge 1/2$ and $c_i = 0 \iff h_{b,c}(x_i) \le 7/16$. Once $c_i$ is determined, $b_i$ similarly gets determined, which then completely determines all of $b$. So, consider the one-inclusion graph of the projection of $\mcH$ onto any sequence having more than one distinct $x_i$. Any edge in the graph, say in the direction $x_i$, fixes the evaluation of the hypotheses in that edge on $x_j \neq x_i$. But this fixes the entire vector $b$. Namely, all hypotheses in this edge must have the same $b$, which means that their evaluation on $x_i$ can only be one of two numbers (corresponding to $c_i=0$ or $1$) that differ by exactly $1/2$. We can thus list-orient every edge in the graph to two arbitrary hypotheses in the edge that evaluate to these two different numbers; this ensures that the $k$-outdegree of the graph is 0. Thus, $\doig[2]{\gamma}(\mcH) \le 1$.

    We now claim that $\dfat[2]{\gamma}(\mcH)=\infty$ for any $\gamma \le \frac{1}{16}$. To see this, fix any $n$, and consider $x_1,\dots,x_n$. Observe that for any $h_{b,c}$, if $c_i=0, b_i=0$, then $h_{b,c}(x_i) < \frac{1}{16}$. Else if $c_i=1, b_i=1$, then $h_{b,c}(x_i) > \frac{11}{16}$. Otherwise, $h_{b,c}(x_i) \in \left(\frac{6}{16}, \frac{9}{16}\right)$. Thus, we can have $h_{b,c}(x_i)$ evaluate to a number in any of these intervals by appropriately setting $b_i$ and $c_i$, which implies that $\mcH$ $(\gamma,2)$-fat-shatters $x_1,\dots,x_n$ for any $\gamma \le \frac{1}{16}$.

    Finally, we also claim that $\mathbb{D}^\oig_{\gamma}(\mcH)=\infty$ for any $\gamma < 1/2$. Again, fix $n$, and consider $S=x_1,\dots,x_n$. Observe that $|\mcH|_S| < \infty$, because the evaluations of hypotheses in $\mcH$ on $S$ really only depend on $b_{\le n}, c_{\le n}$, and there can at most be $2\cdot 2^n$ different assignments to these. We will show that for any orientation $\sigma$ of $\mcG(\mcH|_S)=(V,E)$, there exists a vertex that has outdegree larger than $n/3$. Recall from our argument for $\doig[2]{\gamma}(\mcH)$ above that every vertex $v \in \mcG(\mcH|_S)$ is adjacent to exactly one other vertex $v'$ in any fixed direction $i$,
    corresponding to a common $b$, and different $c_i$. Moreover, $v_i$ and $v'_i$ differ by exactly $1/2$. Thus, if $\sigma$ orients the underlying edge in this direction towards $v$, then $v'$ suffers, and vice-versa. We can hence argue that the \textit{average} outdegree of vertices in $V$ is lower-bounded as
    \begin{align*}
        \frac{1}{|V|}\sum_{v \in V} \mathsf{outdeg}(v;\sigma,\gamma) &= \frac{1}{|V|}\sum_{e \in E} 1 = \frac{1}{|V|}\sum_{e \in E}|e|/2 = \frac{1}{2|V|}\cdot n|V| = \frac{n}{2} > \frac{n}{3}. 
    \end{align*}
    Because the maximum outdegree is at least the average outdegree, and the above holds for every $n$, $\mathbb{D}^\oig_{\gamma}(\mcH)=\infty$.
\end{example}

We proceed to present our main results on how the $(\gamma,k)$-OIG dimension faithfully characterizes realizable $k$-list regression.

\subsection{Upper Bound for Realizable List Regression}
\label{sec:realizable-ub}

\begin{restatable}[Finite $(\gamma,k)$-OIG dimension is sufficient]{theorem}{realizableub}
    \label{thm:realizable-ub}
    Let $\mcH \subseteq [0,1]^{\mcX}$ be a hypothesis class. For any distribution $\mcD$ over $\mcX \times [0,1]$ realizable by $\mcH$ and $\gamma \in (0,1)$, there exists a list learning algorithm $\mcA$ which takes as input an i.i.d. sample $S \sim \mcD^n$ and outputs a $k$-list hypothesis $\alg(S)$ satisfying
    \begin{align*}
        \err_{\mcD,\ell_\abs}(\alg(S)) &\leq \gamma + \widetilde{O}\left(\frac{k\cdot\doig{\gamma}(\mcH)+\log(1/\delta)}{n}\right) 
    \end{align*}
    with probability at least $1-\delta$. In other words, $m^{k, \mathrm{re}}_{\mcA, \mcH}(\eps, \delta) \le \widetilde{O}\left(\frac{k\cdot\doig{\gamma}(\mcH) + \log(1/\delta)}{\eps}\right)$ for $\gamma=\Theta(\eps)$.
\end{restatable}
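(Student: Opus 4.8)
The plan is to reuse the ``weak learner $\to$ minimax-and-sample boosting $\to$ sample compression'' pipeline from the agnostic proof, but with the weak learner built \emph{directly} from finiteness of $\doig{\gamma}(\mcH)$ rather than routed through a $k$-Natarajan bound (which is exactly what incurs the extra $k^8\,\dfat{\gamma/2}(\mcH)$ factor in \Cref{thm:realizable-upper-bound-k-fat-shattering}), and with a new ``heavy point'' rule for aggregating list-valued weak predictions in place of the real-valued median boosting used for $k=1$ in \cite{attias2023optimal}. As harmless preprocessing I would first discretize: replace $\mcH$ by $\mcH^\alpha$ and each label $y_i$ by $Q_\alpha(y_i)$ for $\alpha=\Theta(\gamma)$ (\Cref{def:discretization}). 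This makes the rounded sample exactly realizable by $\mcH^\alpha$, keeps every one-inclusion graph $\mcG(\mcH^\alpha|_T)$ finite (so the orientations below exist straight from the definition), and --- by a discretization argument in the spirit of \Cref{claim:discretization-fat-shattering} --- changes the relevant OIG scale only by a constant; the $O(\alpha)=O(\gamma)$ rounding error is absorbed into the $\gamma$ term of the final bound.

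\emph{Step~1 (weak learner).} Put $m_0:=\doig{\gamma}(\mcH^\alpha)+1$. Given $m_0$ rounded (hence exactly realizable) training points and a test point $x$, form the finite graph $\mcG(\mcH^\alpha|_{\{x_1,\dots,x_{m_0},x\}})$ and a $k$-list orientation $\sigmak$ of it, chosen by a fixed rule so that it does not depend on which point plays the role of the test point; since $m_0+1>\doig{\gamma}(\mcH^\alpha)$, \Cref{def:gamma-k-oig-dim} lets us take $\outdeg(\sigmak,\gamma)<\frac{m_0+1}{2(k+1)}$. The learner returns the test-direction projection of $\sigmak(e)$, where $e$ is the hyperedge in the test direction through the vertex of the true hypothesis. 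The standard leave-one-out symmetrization (draw $m_0+1$ i.i.d.\ points, then a uniformly random one of them to be the test point) shows the returned list fails to $\gamma$-contain the true label with probability at most $\max_v\outdeg(v;\sigmak,\gamma)/(m_0+1)<\frac{1}{2(k+1)}$. So $\mcA_w$ is a weak $k$-list learner of sample size $m_0=O(\doig{\gamma}(\mcH^\alpha))$ with error below $\frac{1}{2(k+1)}$ on any realizable distribution.

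\emph{Steps~2--3 (boost and aggregate).} Exactly as in \Cref{lemma:minimax-sampler}, play the zero-sum game where Player~1 picks an example of $S$ and Player~2 picks a size-$m_0$ subsample $T$, scoring $1$ if $\mcA_w(T)(x_i)\owns_\gamma y_i$ and $0$ otherwise. Every mixed strategy of Player~1 is a realizable distribution on $S$, so Step~1 lets Player~2 reply with value $>1-\frac{1}{2(k+1)}$; von Neumann's theorem then gives a distribution over subsamples under which each example is $\gamma$-contained with probability $>1-\frac{1}{2(k+1)}$. Drawing $l=O((k+1)\log n)$ subsamples $T_1,\dots,T_l$ from it and applying a Chernoff bound plus a union bound over the $n$ examples, with positive probability every $(x_i,y_i)\in S$ satisfies $\mu_j(x_i)\owns_\gamma y_i$ for more than $(1-\frac{1}{k+1})l$ indices $j$, where $\mu_j:=\mcA_w(T_j)$; fix such $T_1,\dots,T_l$. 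Now define $H$ by: on input $x$, call $c\in[0,1]$ \emph{heavy} if $\mu_j(x)\owns_\gamma c$ for more than $\frac{k}{k+1}l$ indices $j$, and let $H(x)$ be a maximal set of heavy points that are pairwise more than $2\gamma$ apart. An incidence count gives $|H(x)|\le k$: if there were $k+1$ such heavy points, then since each of the $\le k$ entries of any list $\mu_j(x)$ is within $\gamma$ of at most one of them, the number of pairs $(j,c)$ with $\mu_j(x)\owns_\gamma c$ would be simultaneously $>(k+1)\cdot\frac{k}{k+1}l=kl$ and $\le kl$. At a training point $x_i$, the boosting guarantee makes $y_i$ itself heavy, so by maximality $H(x_i)$ contains a point within $2\gamma$ of $y_i$; hence $\hat{\err}_{S,\ell_\abs}(H)\le 2\gamma$ (running Steps~1--3 at scale $\gamma/2$ makes this $\le\gamma$).

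\emph{Step~4 and main obstacle.} Since $H$ is a deterministic, order-independent function of $T_1,\dots,T_l$ alone, the map $S\mapsto(T_1,\dots,T_l)$ with this reconstruction is a $k$-list compression scheme of size $l\cdot m_0=\widetilde{O}(k\,\doig{\gamma}(\mcH^\alpha))$, which is $\le n/2$ for $n$ large; \Cref{lemma:k_listgen-compression}, together with $\sqrt{ab}\le a+b$ applied to the cross term, then turns the $\le\gamma$ training error into $\err_{\mcD,\ell_\abs}(H)\le O(\gamma)+\widetilde{O}\big(\tfrac{k\,\doig{\gamma}(\mcH^\alpha)+\log(1/\delta)}{n}\big)$, and undoing the discretization and rescaling $\gamma$ by a constant gives the claimed bound. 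I expect the genuinely new content to be in Step~3: a list has no canonical ``median'' or ``average'', so isolating the right aggregation rule --- and the counting argument that simultaneously forces $|H(x)|\le k$ \emph{and} keeps the true label covered --- is the crux. A secondary subtlety is ensuring the one-inclusion graphs in Step~1 are finite, which the discretization handles but which requires its own OIG-scale comparison claim; the minimax and compression steps are otherwise close paraphrases of the agnostic argument.
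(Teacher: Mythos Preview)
Your overall pipeline (OIG weak learner $\to$ minimax boosting $\to$ list aggregation $\to$ compression) matches the paper's proof exactly, and your heavy-point aggregation in Step~3 is correct: the incidence count showing $|H(x)|\le k$ and the maximality argument giving $2\gamma$-containment both go through. The paper uses a different but equally valid aggregation --- sort the $N\le kl$ pooled predictions and return the $k$ quantiles at positions $\lceil jN/(k+1)\rceil$, $j=1,\dots,k$; since more than $N/(k+1)$ of the pooled values are $\gamma$-close to $y$, they cannot all fall strictly between two consecutive quantiles, so some quantile is sandwiched between two good values and hence itself $\gamma$-close. This gives $\gamma$-containment directly (versus your $2\gamma$), and the paper then applies compression with the thresholded loss $\ell_\thr(\mu,y)=\Ind[\mu\not\owns_\gamma y]$ at zero training error, rather than with $\ell_\abs$ plus AM-GM.

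The one place your proposal diverges substantively is in how you handle possibly infinite one-inclusion graphs in Step~1. The paper does \emph{not} discretize; instead it proves (\Cref{lem:topo-orientation}) via a Tychonoff compactness argument that whenever $n>\doig{\gamma}(\mcH)$, the full (possibly infinite) graph $\mcG(\mcH|_S)$ admits a $k$-list orientation of scaled $k$-outdegree below $n/(2(k+1))$. Your alternative of passing to $\mcH^\alpha$ is riskier than you suggest: the ``OIG-scale comparison claim'' you flag is not an analogue of \Cref{claim:discretization-fat-shattering} (which \emph{lower}-bounds the discretized fat-shattering dimension), but would need to \emph{upper}-bound $\doig{\gamma'}(\mcH^\alpha)$ by $\doig{\gamma}(\mcH)$. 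Discretization can collapse vertices that were previously distinguished by tiny differences, creating large new edges and strictly increasing the OIG dimension. For instance, with $\mcX=\{1,2\}$ and $h_j(1)=\epsilon_j$ distinct and tiny, $h_j(2)=c_j$ pairwise more than $2\gamma$ apart, one has $\doig{\gamma}(\mcH)=1$, but after rounding the $\epsilon_j$ to $0$ one gets $\doig{\gamma}(\mcH^\alpha)\ge 2(k+1)$ via $S=(1,\dots,1,2)$. So even if a bound of the form $\doig{\gamma}(\mcH^\alpha)\le C(k)\cdot\doig{\gamma}(\mcH)$ holds, establishing it is a separate task and may cost extra $k$-factors; the paper's compactness route sidesteps this entirely and delivers the stated $k\cdot\doig{\gamma}(\mcH)$ dependence.
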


Towards deriving \Cref{thm:realizable-ub}, we first show that the one-inclusion graph of a hypothesis class $\mcH$ projected onto any finite sequence of size larger than $\doig{\gamma}(\mcH)$ admits a $k$-list orientation of small $\gamma$-scaled $k$-outdegree. The existence of such an orientation will help us obtain a weak list learning algorithm for $\mcH$. 

\begin{restatable}[Small $k$-out-degree list orientation]{lemma}{topologyorientation}
    \label{lem:topo-orientation}
    Let $\mcH \subseteq [0,1]^\mcX$ be a hypothesis class having $(\gamma,k)$-OIG dimension $\doig{\gamma}(\mcH)$. Fix $n > \doig{\gamma}(\mcH)$ and $S \in \mcX^n$. Then, there exists a $k$-list orientation of the (possibly infinite) one-inclusion graph $\mcG(\mcH|_S)=(V,E)$ such that for every $v \in V$, $\outdeg(v;\sigmak, \gamma) < \frac{n}{2(k+1)}$.
\end{restatable}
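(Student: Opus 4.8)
The plan is to lift the defining property of $\doig{\gamma}(\mcH)$ from finite sub-hypergraphs to the full (possibly infinite) one-inclusion graph by a compactness argument. First I would unpack the hypothesis: since $n > \doig{\gamma}(\mcH)$, the integer $n$ does not satisfy the predicate defining the supremum in \Cref{def:gamma-k-oig-dim}, and negating that predicate for the given sequence $S$ says precisely that every finite sub-hypergraph $G$ of $\mcG(\mcH|_S)$ admits a $k$-list orientation $\sigmak_G$ with $\outdeg(v;\sigmak_G,\gamma) < \frac{n}{2(k+1)}$ for every vertex $v$ of $G$; as outdegrees are integers, $\outdeg(v;\sigmak_G,\gamma) \le \lceil \frac{n}{2(k+1)} \rceil - 1$. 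The goal is to glue these local orientations into one global orientation.

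Set $t := \lceil \frac{n}{2(k+1)} \rceil$. For each hyperedge $e$ of $\mcG(\mcH|_S)$, lying in some direction $i(e) \in [n]$, let $Y_e := \{u_{i(e)} : u \in e\} \subseteq [0,1]$, and let $K_e$ be the space of nonempty subsets of the closure $\overline{Y_e}$ of size at most $k$, under the Hausdorff metric; each $K_e$ is compact, so $\Omega := \prod_e K_e$ is compact by Tychonoff's theorem. A point $L \in \Omega$ is a ``relaxed'' $k$-list orientation, recording for each edge up to $k$ chosen $i(e)$-coordinates, now allowed to be limit points of $Y_e$; extend the outdegree by $\outdeg(v;L,\gamma) := |\{i \in [n] : \min_{y \in L_{e_{i,v}}} |y - v_i| > \gamma\}|$. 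The crucial point is that for each fixed vertex $v$, the set $A_v := \{L \in \Omega : \outdeg(v;L,\gamma) \ge t\}$ is \emph{open}: the map $L \mapsto \min_{y \in L_{e_{i,v}}} |y - v_i|$ is continuous (it is $1$-Lipschitz in $L_{e_{i,v}}$), so each event ``direction $i$ is bad for $v$'' is open, and $A_v$ is a finite union over size-$t$ subsets of $[n]$ of finite intersections of such events.

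Suppose for contradiction that no $L \in \Omega$ has $\outdeg(v;L,\gamma) < t$ for all $v$. Then $\{A_v\}_v$ is an open cover of the compact $\Omega$, hence has a finite subcover $A_{v_1}, \dots, A_{v_m}$. Let $G$ be the sub-hypergraph of $\mcG(\mcH|_S)$ induced on $\{v_1, \dots, v_m\}$. Given any $k$-list orientation $\sigmak$ of $G$, build $L \in \Omega$ by letting $L_e$ be the $i(e)$-coordinates of the vertices chosen by $\sigmak$ on the induced edge $e \cap \{v_1, \dots, v_m\}$ when this is nonempty and arbitrary otherwise; one then checks $\outdeg(v_j;\sigmak,\gamma) \ge \outdeg(v_j;L,\gamma)$ for each $j$. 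Since $L \in A_{v_j}$ for some $j$, that $v_j$ has $\outdeg(v_j;\sigmak,\gamma) \ge t \ge \frac{n}{2(k+1)}$; as $\sigmak$ was arbitrary, $G$ witnesses $\doig{\gamma}(\mcH) \ge n$, a contradiction. Hence some relaxed orientation $L^\star$ has $\outdeg(v;L^\star,\gamma) \le t-1 < \frac{n}{2(k+1)}$ for every $v$.

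It then remains to ``round'' $L^\star$ to a genuine $k$-list orientation $\sigmak^\star$ (with $\sigmak^\star(e) \subseteq e$) without increasing any outdegree: for each edge $e$ the $i(e)$-coordinates of the vertices $\gamma$-covered by $L^\star_e$ lie in a union of at most $k$ closed length-$2\gamma$ intervals, and I would argue each such interval's worth of $Y_e$ can be re-covered by a single genuine point of $Y_e$ (since every point of $\overline{Y_e}$ is a limit of points of $Y_e$), so the budget of $k$ points still suffices; an alternative that sidesteps this wrinkle is to reduce at the outset to a finite hypothesis class by discretizing the values of $\mcH$ to a grid of width much smaller than $\gamma$ (which makes $\mcG(\mcH|_S)$ finite, so the unpacking step directly yields the orientation), at the cost of an arbitrarily small perturbation of $\gamma$, recovering the clean statement by a limiting argument. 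I expect the compactness/local-to-global passage to be the main obstacle — arranging the orientation space to be compact while keeping the ``large outdegree at $v$'' event open, so that failure of a global good orientation is witnessed by finitely many vertices, plus the corresponding care in the genuinely real-valued case; the remaining steps are bookkeeping.
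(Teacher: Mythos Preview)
Your overall strategy---unpack the dimension hypothesis to obtain good orientations of every finite subgraph, then pass to the full graph by a Tychonoff/compactness argument---is the same as the paper's, but you compactify differently, and your choice runs into a genuine obstacle at the final rounding step.

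Concretely, your compact space records for each edge $e$ a $\le k$-element subset of the closure $\overline{Y_e}$ in the Hausdorff metric; the openness of your bad sets $A_v$ and the finite-subcover contradiction are both fine. The problem is the last step: given a relaxed orientation $L^\star$ with $\outdeg(v;L^\star,\gamma)\le t-1$ for every $v$, you must replace each $L^\star_e\subseteq\overline{Y_e}$ by an honest $\le k$-subset of $Y_e$ without increasing any outdegree. Your claim that ``each such interval's worth of $Y_e$ can be re-covered by a single genuine point of $Y_e$'' is false. Take $k=1$ and a genuine limit point $y\in\overline{Y_e}\setminus Y_e$ with both $y-\gamma$ and $y+\gamma$ in $Y_e$; then $L^\star_e=\{y\}$ $\gamma$-covers the two vertices with $i(e)$-coordinates $y\pm\gamma$, but no single point of $Y_e$ does (the unique real at distance $\le\gamma$ from both is $y$ itself). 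Rounding $y$ to any nearby $y'\in Y_e$ loses one of these two vertices, potentially pushing its outdegree from $t-1$ to $t$. Because this can happen at a different edge for each of infinitely many vertices, and the slack $\gamma-d(v_i,L^\star_{e_{i,v}})$ is not bounded away from zero uniformly in $v$, there is no consistent choice of perturbations. Your proposed alternative of discretizing $\mcH$ is not obviously a fix either: discretization changes the vertex and edge structure of the one-inclusion graph, so one cannot simply invoke the finite case on the discretized class and then transfer the orientation back to $\mcG(\mcH|_S)$.

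The paper sidesteps precisely this difficulty by choosing a different compact space. Rather than encoding the orientation itself, it encodes the \emph{satisfaction relation}: the space is $\{0,1\}^{\mcZ}$ with $\mcZ=\{(v,e):v\in e\}$, where the bit at $(v,e)$ records whether the (eventual) orientation of $e$ will $\gamma$-cover $v$. Per-vertex outdegree bounds become closed cylinder conditions, and an additional closed per-edge condition asserts that the set of vertices marked as satisfied by $e$ is realizable by an actual $\le k$-list of vertices of $e$. One checks the finite-intersection property directly from the finite-subgraph orientations, and the limiting $\kappa^\star$ then yields a genuine orientation with no rounding needed. That discrete encoding---tracking which vertices are satisfied rather than where the orientation points---is the device your argument is missing.
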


We note that when $\mcG|_S$ is finite, the lemma above follows directly from the definition of the $(\gamma, k)$-OIG dimension.\footnote{In fact, the definition of the dimension arises in want of this property.} When $\mcG(\mcH|_S)$ is infinite, we still have the property that every \textit{finite subgraph} of it admits an orientation with small $k$-outdegree. We can therefore use a careful compactness argument to show the existence of the required orientation of the entire graph. The technical details, while similar to those in \cite[Lemma 3.1]{charikar2023characterization}, are slightly more involved, and are given in \Cref{sec:topo-orientation-proof}.

We now use the orientation promised by \Cref{lem:topo-orientation} to construct a weak $k$-list regression algorithm $\mcA_w$ for $\mcH$. This weak learner is constructed similarly as in \Cref{sec:weak_learner_constr}, and is based again on the one-inclusion graph algorithm. We can then derive the following lemma by instantiating the classical leave-one-out argument on $\mcA_w$ (proof in \Cref{sec:weak-list-regressor-proof}).

\begin{restatable}[Weak list learner]{lemma}{weaklearnerrealizable}
    \label{lem:weak-list-regressor}
    Let $\mcH \subseteq [0,1]^\mcX$ be a hypothesis class having $(\gamma,k)$-OIG dimension $\doig{\gamma}(\mcH)$. Fix $n \ge \doig{\gamma}(\mcH)$. Then, there exists a learning algorithm $\mcA_w$, such that for any distribution $\mcD$ realizable by $\mcH$, the following guarantee holds:
    \begin{align*}
        \Pr_{S \sim \mcD^{n}}\Pr_{(x,y)\sim \mcD}[\mu^k_S(x) \not\owns_\gamma y] < \frac{1}{2(k+1)},
    \end{align*}
    where $\mu^k_S=\mcA_w(S)$ is the $k$-list hypothesis output by $\mcA_w$. %
\end{restatable}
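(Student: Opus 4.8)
The plan is to construct $\mcA_w$ as the standard one-inclusion graph algorithm and bound its leave-one-out error via the $k$-outdegree guarantee of \Cref{lem:topo-orientation}. Given a sample $S=\{(x_i,y_i)\}_{i\in[n]}$ drawn from a realizable distribution $\mcD$, fix an additional test point $(x,y)\sim\mcD$ and consider the augmented unlabeled sequence $T=(x_1,\dots,x_n,x)$ of length $n+1$. Since $n\ge\doig{\gamma}(\mcH)$, we have $n+1>\doig{\gamma}(\mcH)$, so by \Cref{lem:topo-orientation} the one-inclusion graph $\mcG(\mcH|_T)=(V,E)$ admits a $k$-list orientation $\sigmak$ with $\outdeg(v;\sigmak,\gamma)<\frac{n+1}{2(k+1)}$ for every vertex $v$. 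The algorithm $\mcA_w$, on input $S$ and a query point $x$, forms this graph and orientation (in a way that does not depend on which of the $n+1$ coordinates is ``held out''), locates the edge $e$ in direction ``$x$'' consistent with the labels $y_1,\dots,y_n$ on $x_1,\dots,x_n$, and outputs the $k$-list $\sigmak_{n+1}(e)\in[0,1]^{\le k}$, i.e.\ the $(n+1)$-th coordinates of the (at most $k$) vertices that $e$ is oriented to.

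The next step is the classical leave-one-out / symmetrization argument. Let $h^\star\in\mcH$ be a hypothesis realizing $\mcD$, so that $y_i=h^\star(x_i)$ and $y=h^\star(x)$. Consider the vertex $v^\star\in V$ corresponding to $h^\star|_T$. For each $j\in[n+1]$, let $e_{j,v^\star}$ be the edge adjacent to $v^\star$ in direction $j$. Observe that running $\mcA_w$ on the sample obtained by designating coordinate $j$ as the test point yields exactly the list $\sigmak_j(e_{j,v^\star})$, and this list fails to $\gamma$-contain $h^\star$'s label at that coordinate precisely when $j$ contributes to $\outdeg(v^\star;\sigmak,\gamma)$. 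Hence, for a uniformly random choice of which coordinate among the $n+1$ is the test point, the probability (over the randomness in this choice) that $\mcA_w$ errs is $\outdeg(v^\star;\sigmak,\gamma)/(n+1)<\frac{1}{2(k+1)}$. Since $S\sim\mcD^n$ and $(x,y)\sim\mcD$ are i.i.d., the $n+1$ points are exchangeable, so averaging this bound over the draw of the $n+1$ i.i.d.\ points and using that the position of the test point is itself uniform gives
\begin{align*}
    \Pr_{S\sim\mcD^n}\Pr_{(x,y)\sim\mcD}\left[\mu^k_S(x)\not\owns_\gamma y\right] < \frac{1}{2(k+1)},
\end{align*}
as desired.

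The main obstacle I anticipate is the usual subtlety in making the exchangeability argument rigorous when $\mcG(\mcH|_T)$ may be infinite: one must ensure the orientation $\sigmak$ is chosen as a deterministic function of the (unordered) set of $n+1$ unlabeled points, so that permuting which point is ``the test point'' genuinely leaves the graph and orientation invariant and only permutes the roles of the coordinates. This is exactly where \Cref{lem:topo-orientation}---which provides a single small-$k$-outdegree orientation of the whole (possibly infinite) graph---is essential, as opposed to the finite case where one could argue more directly. A secondary, more routine point is checking measurability so that the outer probability statement is well-defined; this follows standard treatments of the one-inclusion algorithm and I would relegate it (as the excerpt does) to the appendix.
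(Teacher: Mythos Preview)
Your proposal is correct and follows essentially the same approach as the paper: construct $\mcA_w$ as the one-inclusion graph predictor using a small-$k$-outdegree $k$-list orientation of $\mcG(\mcH|_T)$ (guaranteed by \Cref{lem:topo-orientation} since $n+1>\doig{\gamma}(\mcH)$), then apply the leave-one-out symmetrization to bound the expected error by $\outdeg(v^\star;\sigmak,\gamma)/(n+1)<\frac{1}{2(k+1)}$. Your identification of the key subtlety---that the orientation must be chosen as a deterministic function of the unordered multiset of $n+1$ points so the same $\sigmak$ arises regardless of which coordinate is held out---is exactly what the paper handles by having the algorithm pick the outdegree-minimizing orientation of $\mcG(\mcH|_T)$.
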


The proof of \Cref{thm:realizable-ub} follows hereon using a similar chain of arguments as in \Cref{lemma:minimax-sampler} and \Cref{sec:gen_error_bound}. Namely, given a training sample $S \sim \mcD^n$ realizable by $\mcH$, we use the weak-list learner $\mcA_w$ above, together with the minimax theorem, to argue the existence of subsequences $T_1,\dots, T_l$ of $S$, such that a suitable aggregation of the output of $\mcA_w$ on these subsequences satisfies the required learning guarantee. Crucially again, the total size of the subsequences is only sublinear in $n$, and hence these can be viewed as a compression of $S$. The proof details are given in \Cref{sec:realizable-ub-proof}.

We remark that \cite{attias2023optimal} used a similar strategy to obtain their upper bound for realizable regression in terms of the $\gamma$-OIG dimension. However, the way they aggregate the predictions of the weak learner $\mcA_w$ is via the \textit{median boosting} algorithm \cite{kegl2003robust} instead. This algorithm requires an ordering on the predictions in order to compute the median. While real-valued predictions are naturally ordered, there isn't a natural notion of ordering for lists of numbers. Moreover, a given list can contain both, an accurate prediction as well as a vastly inaccurate prediction. The boosting algorithm would want to assign different weights to these predictions, but we have no way of knowing \textit{which} prediction in the list is accurate. For these reasons, we have to resort to using minimax sampling and a different form of aggregation in our setting.

\subsection{Lower Bound for Realizable List Regression}
\label{sec:realizable-lb}

\begin{restatable}[Finite $(\gamma,k)$-OIG dimension is necessary]{theorem}{realizablelb}
    \label{thm:realizable-lb}
    Let $\mcH \subseteq [0,1]^\mcX$ be a hypothesis class having $(\gamma, k)$-OIG dimension $\doig{\gamma}(\mcH)$, and let $\mcA$ be any $k$-list regression algorithm for $\mcH$ in the realizable setting. Then, for any $\eps, \delta \in (0,1)$ satisfying $\eps < \Theta(1/k^2)$ and $\delta < \sqrt{\eps}$,
    \begin{align*}
        m^{k, \mathrm{re}}_{\mcA, \mcH}(\eps, \delta) \ge \Omega\left(\frac{\doig{\gamma}(\mcH)}{ k \sqrt{\eps}}\right) \quad \text{for $\gamma=\Theta(\sqrt{\eps})$}.
    \end{align*}
\end{restatable}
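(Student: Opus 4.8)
The plan is to follow the textbook recipe for realizable lower bounds --- build a maximally hard learning problem from a shattered structure of maximal size --- but with two new ingredients: a deliberately \emph{skewed} distribution that forces many samples even though the structure lives on only $\doig{\gamma}(\mcH)$ points, and the observation that a learner's predictions on the coordinates it has \emph{not} seen must, taken together, form a $k$-list orientation of (a restriction of) the one-inclusion graph, which by definition of the $(\gamma,k)$-OIG dimension is forced to be ``bad'' somewhere.

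Concretely, I would first extract the structure: assuming $\doig{\gamma}(\mcH)<\infty$ (otherwise there is nothing to prove), set $\gamma=\Theta(\sqrt\eps)$ and let $n=\Theta(\doig{\gamma}(\mcH))$, so by \Cref{def:gamma-k-oig-dim} there is a sequence $S=(x_1,\dots,x_n)$ and a finite subgraph $G=(V,E)$ of $\mcG(\mcH|_S)$ in which every $k$-list orientation has a vertex of $\gamma$-scaled $k$-outdegree at least $\tfrac{n}{2(k+1)}$; fix witnesses $h_v\in\mcH$ with $h_v|_S=v$. I would then define, for each $v\in V$, the realizable distribution $\mcD_v$ that puts mass $1-w$ on the single point $x_n$ and mass $\tfrac{w}{n-1}$ on each of $x_1,\dots,x_{n-1}$, with total ``light mass'' $w=\Theta(k\sqrt\eps)$, everything labelled by $h_v$. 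The role of this weighting is exactly to turn the naive $\log(1/\eps)$-type bound of a uniform construction into the claimed polynomial dependence: with $m$ i.i.d.\ samples a learner quickly learns the label of the heavy point $x_n$, yet a constant fraction of $x_1,\dots,x_{n-1}$ stays unseen unless $m=\Omega(n/(k\sqrt\eps))$.

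The heart of the argument is the prediction-to-orientation reduction. Suppose towards a contradiction that some $\mcA$ using $m=o(n/(k\sqrt\eps))$ samples learns every $\mcD_v$ to error $\eps$ with failure probability below $\delta$. Conditioning on the sampled multiset, with set of distinct indices $R$, the prediction of $\mcA$ on any $x_i$ with $i\notin R$ depends only on $R$, the revealed labels, and $\mcA$'s coins --- not on the true label at $x_i$. Hence, for the sub-class of vertices consistent with what was revealed, $\mcA$'s $k$-list predictions on the unseen coordinates form a \emph{global} $k$-list predictor; clustering each predicted list into $\le k$ groups of diameter $\le 2\gamma$ and orienting each hyperedge to one representative per group converts this into an honest $k$-list orientation, at the cost of a factor $2$ in the scale --- this is the source of the quadratic scale gap between our upper and lower bounds. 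Taking $G$ to be restriction-closed (e.g.\ arising from a maximally $k$-DS-shattered sequence, as suggested by the remark following \Cref{def:oig-dimension}), each such restriction still enjoys the outdegree guarantee, so whatever $\mcA$ does, an $\Omega(1/k)$ fraction of the unseen light coordinates is $\gamma$-wrong for a suitable choice of true hypothesis. To pin down a \emph{single} bad distribution $\mcD_{v^\star}$ (the offending vertex a priori depends on the sample), I would run a counting argument powered by \Cref{claim:union-plus-intersection-lb}: since for every induced orientation the set of vertices on which it is bad is large, these bad sets cannot all concentrate on a small vertex set, which --- using also $\delta<\sqrt\eps$ --- yields a $v^\star$ that is bad on an $\Omega(1)$ fraction of the sample space. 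On $\mcD_{v^\star}$ the learner is then $\gamma$-wrong on $\Omega(n/k)$ light points, each of mass $\Theta(k\sqrt\eps/n)$, incurring error $\Omega(\gamma)\cdot\Omega(n/k)\cdot\Theta(k\sqrt\eps/n)=\Omega(\eps)$, exceeding $\eps$ for appropriate constants --- contradiction --- and hence $m=\Omega(\doig{\gamma}(\mcH)/(k\sqrt\eps))$.

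The main obstacle, as flagged above, is precisely that \Cref{def:gamma-k-oig-dim} only guarantees a \emph{single} high-outdegree vertex per orientation, whereas the true hypothesis must be committed to before the learner's (sample-dependent) orientation is revealed; reconciling these --- via restriction-closedness of the structure and the union/intersection inequality of \Cref{claim:union-plus-intersection-lb} --- is where the real work lies. A secondary, purely technical point is the factor-$2$ scale loss in honestly turning list-valued predictions into an orientation with at most $k$ vertices per edge, together with the bookkeeping needed to extract the polynomial $1/\sqrt\eps$ from the skewed weighting.
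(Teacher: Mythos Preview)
Your overall shape---a skewed per-vertex distribution on a witnessing sequence, plus a reduction from the learner's behaviour to a $k$-list orientation of the subgraph $G$---is the right shape, and matches the paper. But the way you build the orientation has a genuine gap.

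You build, for each realised sample (equivalently, each set $R$ of seen indices), an orientation of the \emph{restriction} of $G$ to the unseen coordinates, and then appeal to an outdegree guarantee for that restriction. Nothing in \Cref{def:gamma-k-oig-dim} gives you restriction-closedness: the definition hands you a single sequence $S$ and a single finite $G\subseteq\mcG(\mcH|_S)$; it says nothing about projections of $G$ onto subsets of coordinates. The parenthetical suggestion to ``take $G$ to be restriction-closed, e.g.\ arising from a maximally $k$-DS-shattered sequence'' is not justified---the informal remark after \Cref{def:oig-dimension} links OIG to DS only in one direction, and there is no lemma in the paper letting you swap a witnessing $G$ for a restriction-closed one. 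Even granting restriction-closedness, you still face the issue you yourself flag: the bad vertex is sample-dependent, and your proposed ``counting argument powered by \Cref{claim:union-plus-intersection-lb}'' is too vague---that claim speaks about $k+1$ sets, not about aggregating over exponentially many samples, and it is not clear what the $k+1$ sets would be.

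The paper sidesteps both problems by building a \emph{single deterministic} orientation of the \emph{whole} of $G$ directly from the algorithm. For an edge $e_t$ in direction $t$ and $u\in e_t$, set
\[
p_{e_t}(u)=\Pr_{S\sim P_u^m}\bigl[\mcA(S)(x_t)\not\owns_\gamma u_t \ \big|\ (x_t,u_t)\notin S\bigr].
\]
The crucial observation is that the conditional law of $S$---and hence of $\mcA(S)(x_t)$---is the \emph{same} for every $u\in e_t$, because vertices in $e_t$ agree off coordinate $t$ and the per-vertex distributions only differ at coordinate $t$. So the events $A_u=\{\mcA(S)(x_t)\owns_\gamma u_t\}$ all live in one probability space, and \Cref{claim:union-plus-intersection-prob-lb} (the probability version) shows that the set $C_{e_t}=\{u:p_{e_t}(u)<\tfrac{1}{k+1}\}$ is $2\gamma$-coverable by at most $k$ points; orient $e_t$ to such a cover. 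This orientation is fixed once $\mcA$ is fixed, so the dimension hypothesis immediately yields one vertex $v^\star$ with $\outdeg(v^\star;\sigmak_\mcA,2\gamma)\ge \tfrac{n_0}{2(k+1)}$. Unwinding, each direction contributing to the outdegree has $p_{e_t}(v^\star)\ge\tfrac{1}{k+1}$, and summing these conditional error probabilities against the skewed mass yields error exceeding $\eps$ on $P_{v^\star}$, contradicting learnability with $m=\Theta(n_0/(k\sqrt\eps))$ samples. The factor-$2$ scale loss you anticipated is exactly the $\gamma\to 2\gamma$ in this cover step; the union/intersection inequality is used at the per-edge level to \emph{construct} the orientation, not as a global counting device over vertices.
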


The proof of this theorem is given in \Cref{sec:realizable-lb-proof}, and follows the approach in \cite[Lemma 6]{attias2023optimal}. Namely, we consider the one-inclusion graph of $\mcH$ projected onto a shattered set of size $\doig{\gamma}(\mcH)$, and define an appropriate data distribution on this graph. Thereafter, given any learning algorithm $\mcA$, we construct a list orientation of the graph that orients edges towards vertices for which the algorithm $\mcA$ incurs low error. For an orientation defined in this manner, we can lower bound the $k$-outdegree of every vertex in the graph with the expected error of $\mcA$. The definition of $\doig{\gamma}(\mcH)$ ensures the existence of a vertex with large $k$-outdegree, from which we can conclude that $\mcA$ is forced to suffer large error. Crucially, in order to show existence of the required orientation above, our proof requires an additional ingredient (\Cref{claim:union-plus-intersection-prob-lb}) which generalizes a claim that we used in \Cref{sec:packing-number-lb} above: given $k+1$ events $A_1,\dots,A_{k+1}$ in a probability space, each having $\Pr[A_i] > c$, and satisfying $\cap_i A_i = \emptyset$, it holds that $\Pr[\cup_{i}A_i] > \left(\frac{k+1}{k}\right)c$. A proof of this claim is given in \Cref{sec:union-intersection-lb}.

\begin{remark}[Relating $\doig{\gamma}(\mcH)$ and $\dfat{\gamma}(\mcH)$]
    The lower bound for realizable list regression in \Cref{thm:realizable-lb} in terms of the $(\gamma,k)$-OIG dimension, and the upper bound in \Cref{thm:realizable-upper-bound-k-fat-shattering} in terms of the $(\gamma,k)$-fat-shattering dimension, imply that $\doig{\gamma}(\mcH) \le \poly(k)\cdot \dfat{\Theta(\gamma^2)}(\mcH)$ upto constants and $\polylog$ factors. Note that \Cref{example:gamma-k-oig} precludes upper bounding $\dfat{\gamma}(\mcH)$ by any finite function of $\doig{\gamma}(\mcH)$.
\end{remark}

\section{Conclusion}
\label{sec:conclusion}

In this work, we provided a complete characterization of when and how list regression is possible, in terms of combinatorial properties of the hypothesis class in question. We proposed natural generalizations of existing dimensions, namely the $k$-OIG dimension for realizable list regression, and the $k$-fat-shattering dimension for agnostic list regression, and showed that these dimensions individually characterize their respective learning settings. Among the other parameters, the most generous gap in our bounds is admittedly in the dependence on the list size $k$. It will be interesting to derive more fine-grained bounds that pin down the exact required dependence on the list size. If anything, this would shed light on the extent to which such a dependence gets hidden by constants when $k=1$.

We conclude with an open question that emerged from our efforts in deriving the upper bound for agnostic list regression.

\subsubsection*{Construction of a List Hypothesis Class to Run ERM}{\label{sec:erm_discussion}}

One might wonder whether it is possible to construct an agnostic list learner that is based on ERM whenever the $(\gamma, k)$-fat-shattering dimension of a class is finite at all scales. This question arises naturally, because ERM works as a PAC learner in the case of standard ($k=1$) agnostic regression whenever the fat-shattering dimension is finite at all scales \cite{fatshatteringdimension}. For the purposes of constructing a list learner, it seems plausible that running ERM over an appropriately defined list hypothesis class, which may be some suitable \textit{tensorization} of the given class, does the job. Specifically, we need to construct a class that maps $\mcX$ to $\mcY^{k}$, where $\mcY = [0,1]$, and prove uniform convergence for such a class. However, despite our efforts, constructing such a class did not appear to be immediately straightforward. A naive $k$-fold Cartesian product of the given class will not work---such a product class contains list functions that are simply $k$ copies of the same function, and hence we cannot hope to prove uniform convergence for the product class if the original class does not allow for uniform convergence. %

Before we concretely state the open problem, we define the operation of ``flattening'' which converts a list hypothesis class to a non-list hypothesis class. This operation is defined in \cite[Remark 11]{hanneke2024list}.

\begin{definition}[Flattening a list hypothesis class]
    Given a hypothesis $h : \mcX \rightarrow \mcY$ and list-hypothesis $h^{\text{list}}: \mcX \rightarrow \mcY^{k}$, we define $h \prec h^{\text{list}}$ if $h(x) \in h^{\text{list}}(x)$ for every $x \in \mcX$. Given a list hypothesis class $\mcH^{\text{list}}$, define the flattening operation as follows:
    \begin{equation*}
        \text{flatten}(\mcH^{\text{list}}) := \{h \in \mcY^{\mcX}: h \prec h^{\text{list}} \text{ for some $h^{\text{list}} \in \mcH^{\text{list}}$}\}.
    \end{equation*}
\end{definition}

We can now formally state the open problem:
\begin{openproblem}{\label{openquestion:problem_1}}
    Given any hypothesis class $\mcH \subseteq [0,1]^{\mcX}$ that has finite $(\gamma,k)$-fat shattering dimension for all $\gamma$, can we construct a list hypothesis class $\mcH^{\text{list}} \subseteq ([0,1]^k)^{\mcX}$ satisfying the following properties? 

    \begin{itemize}
        \item $\mcH^{\text{list}}$ contains $\mcH$: namely, for every $h \in \mcH$, there exists $h^{\text{list}} \in \mcH^{\text{list}}$ satisfying $h \prec h^{\text{list}}$.
        \item $\mcH^{\text{list}}$ has finite dimension: namely, $\text{flatten}(\mcH^{\text{list}})$ has finite $(\gamma,k)$-fat-shattering dimension for all $\gamma$.
    \end{itemize}
\end{openproblem}

Presumably, if one can construct a list hypothesis class satisfying the requirements above, then running ERM on such a class might yield an agnostic list learner. As another example, consider the case where $\mcH = \{0,1\}^\mcX$. Here, one can simply run ERM on the trivial 2-list class containing a single list hypothesis that maps every $x$ to $\{0,1\}$. This example is on the other extreme of taking a naive Cartesian product, and suggests that the list hypotheses in the constructed list class should be formed by aggregating hypotheses that exhibit some form of \textit{diversity} in their predictions. 

A similar question can also be framed in the classification setup for a hypothesis class on finitely many labels and having a finite $k$-DS dimension. While \cite{hanneke2024list} show that the principle of uniform convergence does continue to hold for list hypothesis classes, their setup assumes that a list hypothesis class (whose flattening has finite $k$-DS dimension) is already given. On the other hand, we assume that we are given a (non-list, standard) hypothesis class to begin with, and wish to construct a list hypothesis class from this, so that we can run ERM on it.

\section*{Acknowledgements}
The authors would like to thank Moses Charikar for help with the proof of \Cref{claim:union-plus-intersection-lb}. Chirag is supported by Moses and Gregory Valiant's Simons Investigator Awards.

\bibliographystyle{alpha} 
\bibliography{refs}

\appendix
\section{Proofs from \Cref{sec:agnostic-upper-bound}}

\subsection{Proof of \Cref{lemma:weak_learner_partial}} 
\label{sec:proof-weak-learner-oig}

We require using the following result from the work of \cite{charikar2023characterization}, which states that the one-inclusion graph of a multiclass (total) hypothesis class having small $k$-Natarajan dimension admits a small $k$-outdegree list orientation of its edges. For precise definitions of these quantities, please refer to Section 3 in \cite{charikar2023characterization}.

\begin{theorem}[Theorem 8 in  \cite{charikar2023characterization}]
    \label{thm:outdeg-bound-in-terms-of-dnat}
    Let $\mcH \in \mcY^\mcX$ be a (total) hypothesis class with $k$-Natarajan dimension $\dNat(\mcH) < \infty$, and let $|\mcY|=p, |\mcX| < \infty$. Then, there exists a $k$-list orientation $\sigmak$ of $\mcG(\mcH)$ with maximum $k$-outdegree
    \begin{align*}
        \outdeg(\sigmak) \le 240k^4\dNat(\mcH) \ln(p).
    \end{align*}
\end{theorem}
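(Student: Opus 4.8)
The plan is to follow the Haussler one-inclusion-graph paradigm \cite{haussler1994predicting}, adapted from the binary to the multiclass list setting. Since $|\mcX| < \infty$ and $|\mcY| = p < \infty$, the class $\mcH$ is finite, so $\mcG(\mcH) = (V,E)$ is a finite hypergraph and no compactness argument is needed; write $n = |\mcX|$ and $d = \dNat(\mcH)$. The argument has two parts: a ``density-implies-orientation'' reduction, and a ``density'' bound that is the real content.

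\emph{Step 1: density implies a good orientation.} If a hyperedge $e$ in direction $i$ has $|e|$ vertices --- equivalently, the hypotheses of $e$ realize $|e|$ distinct labels at coordinate $i$ --- then any $k$-list orientation sending $e$ to a $k$-subset of its vertices leaves exactly $(|e|-k)_+$ vertices of $e$ uncovered at $i$, each contributing $1$ to its own $k$-outdegree. Thus for any $k$-list orientation $\sigmak$ we have $\sum_{v \in V}\outdeg(v;\sigmak) = \sum_{e \in E}(|e|-k)_+$, and the same identity holds after restricting to any $V' \subseteq V$ (replacing each $e$ by $e \cap V'$). A standard degree-constrained orientation argument --- the hypergraph analogue of the classical fact that a graph admits an orientation of maximum outdegree $\le D$ exactly when every subgraph has at most $D$ times as many edges as vertices, provable via Hall's theorem or a flow-feasibility argument as in \cite{haussler1994predicting} --- then shows that $\mcG(\mcH)$ admits a $k$-list orientation of maximum $k$-outdegree at most $D$ provided every vertex-induced sub-hypergraph $(V',E')$ satisfies $\sum_{e}(|e \cap V'|-k)_+ \le D\,|V'|$.

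\emph{Step 2: the density bound.} A vertex-induced sub-hypergraph of $\mcG(\mcH)$ on a set $V' \subseteq \mcH$ is contained in the one-inclusion graph of the subclass $V'$, which still has $\dNat(V') \le d$; hence it suffices to show that every finite class $\mathcal{F}$ on $n$ points with $k$-Natarajan dimension at most $d$ satisfies $\sum_{e}(|e|-k)_+ \le 240\,k^4 d\ln p \cdot |\mathcal{F}|$. The key input is a list Sauer--Shelah--Perles lemma, bounding $|\mathcal{F}|$ by a polynomial in $n$ of degree $d$ whose coefficients depend polynomially on $p$ and $k$ (the list analogue of the Natarajan--Sauer bound, of the same flavor used throughout list classification). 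To pass from this size bound to the weighted edge count one runs a shifting argument in the spirit of Haussler's proof: fixing a total order on $\mcY$, repeatedly push labels downward coordinate by coordinate; each shift preserves $|\mathcal{F}|$, does not increase $\dNat(\mathcal{F})$, and does not decrease $\sum_e (|e|-k)_+$. Once the class is downshifted, $\sum_e (|e|-k)_+$ is controlled directly --- each hypothesis contributes only through coordinates where it sits strictly above the minimal label, and bounding the number of such coordinates (weighted by label multiplicities, with Natarajan-style $\binom{p}{2}$ bookkeeping) against the size bound yields the claimed dependence; the $\ln p$ arises from balancing the $p$-dependent coefficient against $n$ through the recursion, and the $\poly(k)$ factor from the list bookkeeping.

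\emph{Main obstacle.} Step 2 carries all the difficulty: one must verify that the shifting operations preserve the $k$-Natarajan dimension and do not destroy the weighted edge count, and then extract the precise $\poly(k)\cdot\ln p$ constant from the downshifted class --- this is exactly where the classical Sauer--Shelah machinery has to be lifted to the list multiclass setting. Alternative routes trade one combinatorial lemma for another: a Haussler-packing argument (bound the size of a maximal list-separated packing, in Hamming distance, and relate it to orientations), or a direct induction on $n$ that peels off a single coordinate and recurses on the ``nontrivial fiber'' subclass (whose $k$-Natarajan dimension drops by one); each concentrates the same obstacle in a single step.
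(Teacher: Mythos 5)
You should note at the outset that the paper contains no proof of this statement to compare against: it is imported verbatim as Theorem 8 of \cite{charikar2023characterization}, and the present authors use it as a black box (in the proof of \Cref{lemma:weak_learner_partial}). So the only meaningful benchmark is the proof in that cited work, and your sketch does not supply a substitute for it. Your Step 1 is correct and standard: with $|\mcX|<\infty$ and $|\mcY|=p<\infty$ the class is finite, the vertices of a direction-$i$ hyperedge are in bijection with the labels realized at coordinate $i$, and a flow/defect-Hall feasibility argument indeed shows that a $k$-list orientation with maximum $k$-outdegree at most $D$ exists as soon as every subclass $V'\subseteq\mcH$ satisfies $\sum_{e}(|e\cap V'|-k)_+\le D\,|V'|$; together with monotonicity of $\dNat$ under passing to subclasses, this correctly reduces the theorem to a density bound for the one-inclusion hypergraph.

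The problem is that this density bound --- every finite class $\mathcal{F}\subseteq\mcY^{\mcX}$ with $\dNat(\mathcal{F})\le d$ satisfies $\sum_{e}(|e|-k)_+\le 240\,k^4 d\ln(p)\,|\mathcal{F}|$ --- \emph{is} the theorem, and your Step 2 does not prove it. The shifting route is asserted rather than executed: you would need to (i) specify a label-shifting operation on a $p$-ary label space, (ii) prove it does not increase the $k$-Natarajan dimension, and (iii) prove it does not decrease the weighted incidence count $\sum_e(|e|-k)_+$, and none of these is carried out; (iii) in particular is precisely the kind of claim that is delicate already in the non-list multiclass setting and cannot be waved through "in the spirit of Haussler's proof." Likewise, invoking a list Sauer--Shelah--Perles bound on $|\mathcal{F}|$ does not by itself control the edge weights, and nothing in the sketch actually generates the $k^4\ln p$ dependence --- the closing sentence about "balancing the $p$-dependent coefficient through the recursion" and "list bookkeeping" is a placeholder where the argument should be. As you acknowledge, all the difficulty is concentrated in this step; since it is missing, the proposal is a reasonable plan (whose first half matches the standard density-to-orientation reduction underlying the cited result) but not a proof of the stated bound.
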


\weaklearner*
\begin{proof}
    Consider the learner $\mcA_w$ given in \Cref{algo:weak-learner-hpart}.
    \begin{algorithm}[h]
        \caption{Weak learner $\mcA_w$ for $\mcHpart \subseteq \{0,1,\dots,k,\star\}^{\mcX'}$ based on the one-inclusion graph} \label{algo:weak-learner-hpart}
        \hspace*{\algorithmicindent} 
        \begin{flushleft}
          {\bf Input:} An $\mcHpart$-realizable sample $S = \big((x_1, y_1),\ldots,(x_m, y_m)\big)$. \\
        {\bf Output:} A $k$-list hypothesis $\mcA_w(S)=\mu^k_S:\mcX' \to \{0,1,\dots,k\}^k$. \\
        \ \\
        For each $x \in \mcX'$, the output $h_S(x)$ is computed as follows:
        \end{flushleft}
        \begin{algorithmic}[1]
            \State Let $\mcX'_{S,x}$ be the set of all the distinct elements in $\{x_1,\dots,x_m,x\}$, and consider 
            \begin{align*}
                \mcHtot_{S,x}=\left\{h:\mcX'_{S,x} \to \{0,1,\dots,k\} \;\big|\; \{(x,h(x)):x \in \mcX'_{S,x}\} \text{ is realizable by $\mcHpart$}\right\}.
            \end{align*}
            \If{$\mcHtot_{S,x}=\emptyset$}
                \State Set $\mu^k_S(x)=\{0,1,\dots,k-1\}$ arbitrarily.
            \Else
                \State Find a $k$-list orientation $\sigmak$ of $\mcG(\mcHtot_{S,x})$ that \textit{minimizes} the \textit{maximum} $k$-outdegree $\outdeg(\sigmak)$.
                \State Consider the edge in the direction of $x$ defined by $S$:\[e =\{ h \in \mcHtot_{S,x} : \forall (x_i,y_i) \in S,\ h(x_i) =y_i\}.\]
                \State Set $\mu^k_S(x)=\{h(x):h \in \sigmak(e)\}$.
            \EndIf
        \end{algorithmic}
    \end{algorithm}
    Fix any $\mcHpart$-realizable distribution $\mcD$. For $S \sim \mcD^m$, let $\mu^k_S=\mcA_w(S)$ denote the list hypothesis output by $\mcA_w$ on $S$. We hope to bound from above the probability
    \begin{align*}
        \Pr_{S \sim \mcD^m}\Pr_{(x,y)\sim \mcD}\left[\mu^k_S(x)\not\ni y\right].
    \end{align*}
    Because $S$ and $(x,y)$ are drawn i.i.d from $\mcD$, by the leave-one-out argument \cite{haussler1994predicting}, the above probability is equal to
    \begin{align*}
        \Pr_{S \sim \mcD^{m+1}}\Pr_{i \sim \mathrm{Unif}([m+1])}\left[\mu^k_{S_{-i}}(x_i)\not\ni y_i\right],
    \end{align*}
    where $\mathrm{Unif}([m+1])$ is the uniform distribution on $[m+1]$, and $\mu^k_{S_{-i}}=\mcA_w(S \setminus \{(x_i, y_i)\})$. It suffices to upper bound, for every fixed $S=\{(x_1,y_1),\dots,(x_{m+1}, y_{m+1})\}$ realizable by $\mcHpart$, the probability
    \begin{align*}
        \Pr_{i \sim \mathrm{Unif}([m+1])}\left[\mu^k_{S_{-i}}(x_i)\not\ni y_i\right].
    \end{align*}
    For any $i \in [m+1]$, observe that when \Cref{algo:weak-learner-hpart} is instantiated with the input $S_{-i}$, the set $\mcX'_{S,x}$ and the total class $\mcHtot_{S,x}$ it constructs in Line 1 when determining the output $\mu^k_S(x_i)$ are the same. Furthermore, note that since all of $S$ is realizable by $\mcHpart$, $\mcHtot_{S,x} \neq \emptyset$ in Line 2. Thus, the same one-inclusion graph $\mcG(\mcHtot_{S,x})$ and orientation $\sigmak$ is constructed in Line 5 for every $i \in [m+1]$.\\

    \noindent Now, consider the sequence of labels $L=\{y_i\}_{i \le m+1}$ for every distinct $x_i$ in $S=\{(x_1,y_1),\dots,(x_{m+1}, y_{m+1})\}$. Because $S$ is realizable by $\mcHpart$, observe that $L$ constitutes a vertex in $\mcG(\mcHtot_{S,x})$. Denote this common total class by $\mcHtot$. Furthermore, observe that $\mu^k_{S_{-i}}(x_i) \not\ni y_i$ iff the list orientation on the edge adjacent to $L$ in the direction of $x_i$ (call it $e_i$) does not contain $L$. Therefore,
    \begin{align*}
        \Pr_{i \sim \mathrm{Unif}([m+1])}\left[\mu^k_{S_{-i}}(x_i) \not\ni y_i\right] &= \frac{1}{m+1}\sum_{i=1}^{m+1} \Ind[\sigmak(e_i) \not\ni L] \\
        &= \frac{\outdeg(L;\sigmak)}{m+1}\\
        &\le \frac{240k^4\dNat(\mcHtot)\ln(k+1)}{m+1}\\
        &\le \frac{240k^4\dNat(\mcHpart)\ln(k+1)}{m+1}
    \end{align*}
    where the second-to-last inequality follows from \Cref{thm:outdeg-bound-in-terms-of-dnat}, and the last inequality follows because $\dNat(\mcHtot_{S,x}) \le \dNat(\mcHpart)$. For the quantity at the end to be less than $\frac{1}{2(k+1)}$, it suffices that
    \begin{align*}
        &\frac{240k^4\dNat(\mcHpart)\ln(k+1)}{m+1} < \frac{1}{2(k+1)} \\
        \implies \qquad& m > 960k^5\ln(k+1)\dNat(\mcHpart) -1.
    \end{align*}    
\end{proof}

\subsection{Proof of \Cref{lemma:minimax-sampler}}
\label{sec:minimax_game_proof}

\minimaxsampler*

\begin{proof}
    First, observe that by construction, the dataset $\tilde{S}$ is realizable by the partial class $\thresh{\gamma}{k}(\mathcal{H})$. 
    
    Now, consider a zero-sum game between two players Max and Minnie, where Max's pure strategies are examples $(x',y') \in \tilde{S}$, and Minnie's pure strategies are subsequences $T$ of $\tilde{S}$ of size $m=960k^5\ln(k+1)\dfat{\gamma/2}(\mcH)$. For $\mu^k_T=\mcA_w(T)$, the payoff matrix $M$ is given by $M_{T, (x',y')}=\Ind[\mu^k_T(x') \not\owns y']$. Let $\mcD$ be any mixed strategy (i.e., distribution over the examples in $\tilde{S}$) by Max. Note that from \Cref{lemma:bounding_k_dimension}, $\dfat{\gamma/2}(\mcH) \ge \dNat(\thresh{\gamma}{k}(\mathcal{H}))$. Therefore, the sample size $m$ satisfies the condition of \Cref{lemma:weak_learner_partial}, and we have that
    \begin{align*}
        \Pr_{(T, (x',y')) \sim \mcD^{m+1}}[\mu^k_T(x') \not\owns y'] < \frac{1}{2(k+1)}.
    \end{align*}
    But note that the quantity on the LHS above is the expected payoff, when Max plays the mixed strategy $\mcD$ and Minnie plays the mixed strategy $\mcD^{m}$. This means that for any mixed strategy that Max might play, there exists a mixed strategy that Minnie can play which guarantees that the expected payoff is smaller than $\frac{1}{2(k+1)}$. By the Neumann’s min-max theorem \cite{Neumann1928}, there exists a mixed strategy that Minnie can play, such that for every mixed strategy that Max could play (in particular, every pure strategy as well), the expected payoff is smaller than $\frac{1}{2(k+1)}$. This mixed strategy by Minnie is the required distribution $\mcP$.\footnote{Note that $\mcP$ can be computed by solving a linear program.}

    Then, for the independent sequences $T_1,\dots,T_l$ drawn from $\mcP$, by a Chernoff bound, where $l=6(k+1)\ln(2n')$ we have that for every fixed $(x', y') \in \tilde{S}$,
    \begin{align*}
        \frac{1}{l}\sum_{i=1}^l\left[\Ind[\mu^k_{i}(x') \not\ni y'] \ge \frac{1}{k+1}\right] \le \exp\left(-\frac{l}{6(k+1)}\right) < \frac{1}{n'}.
    \end{align*}
    A union bound over the $n'$ samples in $\tilde{S}$ ensures the existence of $T_1,\dots,T_l$ such that for all $(x',y') \in \tilde{S}$ simultaneously, it holds that
    \begin{align*}
        \sum_{i=1}^l\Ind\left[\mu^k_i(x') \ni y'\right] > \frac{kl}{k+1}.
    \end{align*}
    But this necessarily means that $y'$ belongs to the $k$ most frequently occurring labels in the lists $\mu^k_1(x'),\dots,\mu^k_l(x')$. Because if not, there are $k$ other labels, each occuring in strictly more than $\frac{kl}{k+1}$ lists. Given that $y'$ itself occurs in more than $\frac{kl}{k+1}$ lists, this accounts for more than $k\cdot \frac{kl}{k+1}+\frac{kl}{k+1}=kl$ occurences of labels, which is not possible with $l$ lists of size $k$. Thus, $\topk$ at the end of \Cref{line:minimax} picks up the correct label simultaneously for every $(x',y') \in \tilde{S}$. %
\end{proof}

\subsection{Proof of \Cref{claim:summation_tau}}
\label{sec:claim_summation_tau}

\claimscaledsum*
\begin{proof}
    Let us partition $\disclist$ into disjoint sets $\disclistpart{r}$ for every $r \in \{0,1,\ldots,k\}$, with each set $\disclistpart{r}$ denoting a subset of $\disclist$ such that exactly $r$ components of $\tau$ are smaller than $a$. Formally, 

    \begin{equation}
        \disclistpart{r} : = \{\tau: \tau \in \disclist \text{ and } \left|\{j: \tau_j <a\}\right| =r\}.
    \end{equation}
    Note that $\mathfrak{f} (a, \tau) = r$ for $\tau \in \disclistpart{r}$, and that
    $$\left|\disclistpart{r}\right| = \binom{\frac{a}{\gamma}}{r}{{\floor{\frac{1}{\gamma}} - \frac{a}{\gamma}+1 \choose k-r}} . $$
    Then, we have
    \begin{align}
        \sum_{\tau \in \disclist} \mathfrak{f} (a, \tau) = \sum_{r=0}^{k} \sum_{\tau \in \disclistpart{r}} \mathfrak{f} (a, \tau) &= \sum_{r=0}^{k} r {{{\frac{a}{\gamma}} \choose r}} {{\floor{\frac{1}{\gamma}} +1 - {\frac{a}{\gamma}} \choose k-r}}\\
        & =  \frac{a}{\gamma}\sum_{r=0}^{k}  {{{\frac{a}{\gamma}}-1 \choose r-1}} {{\floor{\frac{1}{\gamma}} +1 - {\frac{a}{\gamma}} \choose k-r}} = {\frac{a}{\gamma}} {{\floor{\frac{1}{\gamma}} \choose k-1}}
    \end{align}
    The last equality follows since the sum is the coefficient of $x^{k-1}$ in $(1+x)^{\frac{a}{\gamma}-1} (1+x)^{\floor{\frac{1}{\gamma}} +1 - \frac{a}{\gamma}}$.
\end{proof}

\subsection{Proof of \Cref{lemma:mergelistintuition}}
\label{sec:proof_lemma_second}

\mergelistintuition*
\begin{proof}
    Observe that for any $i \in [n]$, $\tau \in \mathcal{V}(y_i, \gamma)$, $y'_{i, \tau} \neq \star$. Thus, $((x_i, \tau), y'_{i, \tau})$ will be included in the dataset $\tilde{S}$. Note also that $y'_{i, \tau}=\mff(y_i, \tau)$. From \Cref{lemma:minimax-sampler}, we then have that $J(x_i, \tau) \ni \mff(y_i, \tau)$. Let $\mfj_i:\disclist \to [k]$ be the indexing function such that 
    \begin{align}
        \mfj_i(\tau) &= \begin{cases}
            \text{$j$ where $j \in [k]$ is such that $[J(x_i, \tau)]_j = \mff(y_i, \tau)$} & \text{if $\tau \in \mathcal{V}(y_i, \gamma)$} \\
            1 & \text{otherwise}.
        \end{cases}
    \end{align}
    \Cref{item:first_property_lemma_second} then follows by construction.

    For \Cref{item:second_property_lemma_second}, define the discretization $\hat y_i = \gamma \floor{\frac{y_i}{\gamma}}$. From \Cref{claim:summation_tau}, we know that 
    \begin{align}
        \label{eqn:discretized-y-scaled-sum}
        \hat{y}_i = \gamma{{\floor{\frac{1}{\gamma}} \choose k-1}}^{-1}\sum\limits_{\tau \in \disclist} \mathfrak{f} (\hat y_i, \tau).
    \end{align}
    Then, observe that for any $i \in [n]$,
    \begingroup
    \allowdisplaybreaks
    \begin{align}
        &\left|\left(\gamma{{\floor{\frac{1}{\gamma}} \choose k-1}}^{-1} \sum\limits_{\tau \in \disclist} [J (x_i,\tau)]_{\mfj_i (\tau)}\right)- y_i\right|
        \leq \left|\left(\gamma{{\floor{\frac{1}{\gamma}} \choose k-1}}^{-1} \sum\limits_{\tau \in \disclist} [J (x_i,\tau)]_{\mfj_i(\tau)}\right)- \hat y_i\right| + \underbrace{|\hat y_i - y_i|}_{\le \gamma}\nonumber\\
        &\leq  \discmg + \gamma{{\floor{\frac{1}{\gamma}} \choose k-1}}^{-1} \sum_{\discpt \in \disclist} \left|[J(x_i, \discpt)]_{\mfj_i(\discpt)} - \mff(\hat{y}_i,\tau)\right| \qquad \text{(using \eqref{eqn:discretized-y-scaled-sum})}\nonumber\\
        &\overset{(a)}{\le} \discmg + k\gamma{{\floor{\frac{1}{\gamma}} \choose k-1}}^{-1} \sum_{\discpt \in \disclist} \Ind\left[[J(x_i, \discpt)]_{\mfj_i(\discpt)} \neq \mff(\hat y_i, \discpt) \right] \nonumber \\
        &\le \discmg + k\gamma{{\floor{\frac{1}{\gamma}} \choose k-1}}^{-1}\left[ \sum_{\substack{\discpt' \in \disclist : \\\min\limits_{j \in [k]} |y_i - \discpt'_j| \geq \frac{\gamma}{2} } } \Ind\left[[J(x_i, \discpt')]_{\mfj_i(\discpt')} \neq \mff(\hat y_i, \discpt') \right]+\left|\left\{\discpt' \in \disclist : \min\limits_{j \in [k]} |y_i - \discpt'_j| < \frac{\gamma}{2}\right\} \right|\right] \nonumber \\
        &\overset{(b)}{\le} \discmg + k\gamma{{\floor{\frac{1}{\gamma}} \choose k-1}}^{-1}\left[ \sum_{\substack{\discpt' \in \disclist : \\\min\limits_{j \in [k]} |y_i - \discpt'_j| \geq \frac{\gamma}{2} } } \Ind\left[[J(x_i, \discpt')]_{\mfj_i(\discpt')} \neq \mff(\hat y_i, \discpt') \right]+{{\floor{\frac{1}{\gamma}} \choose k-1}}\right] \nonumber \\
        &\overset{(c)}{\le} \discmg + 2k\gamma{{\floor{\frac{1}{\gamma}} \choose k-1}}^{-1}{{\floor{\frac{1}{\gamma}} \choose k-1}} \nonumber = (2k + 1)\gamma.
    \end{align}
    \endgroup
    $(a)$ follows because: since $\mff(\hat{y}_i,\tau)$ and all elements in $J(x_i, \discpt)$ are in $\{0,1,\dots,k\}$ for every $\tau \in \disclist$,
    $$
        \left|[J(x_i, \discpt)]_{\mfj_i(\discpt)} - \mff(\hat{y}_i,\tau)\right| \leq k\cdot \Ind[[J(x_i, \discpt)]_{\mfj(\discpt)} \neq \mff(\hat{y}_i,\tau)].
    $$
    $(b)$ follows because: the cardinality of the set $\left\{\discpt' \in \disclist : \min\limits_{j \in [k]} |y_i - \discpt'_j| < \frac{\gamma}{2} \right\}$ is at most 
    ${{\floor{\frac{1}{\gamma}} \choose k-1}}$, since any $\tau'$ satisfying the condition must have one of its components (at one of $k$ indices) fixed to an integer multiple of $\gamma$ in $(y_i -\gamma/2,y_i+\gamma/2)$.
    
    \noindent $(c)$ follows because: for $\tau' \in \disclist$ satisfying $\min\limits_{j \in [k]} |y_i - \discpt'_j| \geq \frac{\gamma}{2}$, by our choice of $\mfj_i$, we know that $[J(x_i, \tau')]_{\mfj_i(\tau')}=\mff(y_i, \tau')$. Thus, if $[J(x_i, \tau')]_{\mfj_i(\tau')}\neq \mff(\hat{y}_i, \tau')$, then this would imply that $\mff(y_i, \tau') \neq \mff(\hat{y}_i, \tau')$. Since $y_i \ge \hat{y}_i$ and $|y_i - \hat{y}_i| \le \gamma$, this necessarily means that one of the components of $\tau'$ is $\hat{y}_i$. We can then bound the number of such $\tau'$ as ${{\floor{\frac{1}{\gamma}} \choose k-1}}$.

    Finally, observe that since $y_i \in [0,1]$ and $\gamma{{\floor{\frac{1}{\gamma}} \choose k-1}}^{-1} \sum\limits_{\tau \in \disclist} [J (x_i,\tau)]_{\mfj_i (\tau)} \ge 0$,
    \begin{align*}
        \left|\min\left(1,\gamma{{\floor{\frac{1}{\gamma}} \choose k-1}}^{-1} \sum\limits_{\tau \in \disclist} [J (x_i,\tau)]_{\mfj_i (\tau)}\right)- y_i\right| \le \left|\left(\gamma{{\floor{\frac{1}{\gamma}} \choose k-1}}^{-1} \sum\limits_{\tau \in \disclist} [J (x_i,\tau)]_{\mfj_i (\tau)}\right)- y_i\right|,
    \end{align*}
    which completes the proof of \Cref{item:second_property_lemma_second}.
\end{proof}

\subsection{Proof of \Cref{claim:coverexistence}}
\label{sec:proof_claim_first}

\coverexistence*
\begin{proof}
    Suppose that there does not exist a subset of at most $k$ points that $r$-covers $\mcC$, which means that the internal covering number of $\mcC$ at radius $r$ is strictly larger than $k$. This means that the internal packing number of $\mcC$ at radius $r$ is strictly larger than $k$ (since the internal covering number is at most the internal packing number). Alternately, there must exist a set of $k+1$ points in $\mcC$ that is pairwise separated by more than $r$: let $\{\hat{c}_1,\dots,\hat{c}_{k+1}\} \subseteq \mcC$ be such a set. Note that by way of how $\mcC$ was constructed, this means that there exists a set of numbers $\{c_1,\dots,c_{k+1}\}$, which satisfies that, for all $i \in \{1,\dots,k+1\}$,
    \begin{enumerate}
        \item[(1)] $|\hat{c}_i-c_i| \le \frac{r}{3}$.
        \item[(2)] For some function $\mfj_i:\disclist \to [k]$, $[\mcJ(\tau)]_{\mfj_i(\tau)} = \mff({c}_i,\tau)$, $\forall \tau \in \mathcal{V}(c_i,\gamma)$.
    \end{enumerate}
    (1) above implies that $c_1,\dots,c_{k+1}$ are pairwise separated by an amount strictly larger than $r-2r/3=2k\gamma+\gamma \ge 3\gamma$. Assume without loss of generality that $c_1 < \dots < c_{k+1}$, and construct $\tau \in \disclist$ such that $\tau_i = \gamma \floor{\frac{c_i+c_{i+1}}{2\gamma}}$ for every $i \in [k]$. Observe that from the preceding argument, $\frac{c_i+c_{i+1}}{2}$ is at least $3\gamma/2$ away from both $c_i$ and $c_{i+1}$, and therefore, $\tau_i=\gamma \floor{\frac{c_i+c_{i+1}}{2\gamma}}$ is at least $\gamma/2$ away from both these numbers as well. In particular, this implies that $\tau \in \mathcal{V}(c_i,\gamma)$ for every $i \in [k+1]$.

    Now, consider the set of numbers $\{[\mcJ(\tau)]_{\mfj_1(\tau)},\dots,[\mcJ(\tau)]_{\mfj_{k+1}(\tau)}\}$ for the $\tau$ constructed above. Since $\mcJ(\tau)$ is a $k$-list, this set has at most $k$ distinct numbers. But since $\tau \in \mathcal{V}(c_i,\gamma)$ for every $i \in [k+1]$, from (2) above, this set is the same set as $\{\mff(c_1, \tau),\dots, \mff(c_{k+1}, \tau)\}$. But note that by construction, $\mff(c_i, \tau) = i-1$, which implies that all the $k+1$ numbers in $\{\mff(c_1, \tau),\dots, \mff(c_{k+1}, \tau)\}$ are distinct---a contradiction. Thus, there must exist a subset of $\le k$ points that $r$-cover $\mcC$. 
\end{proof}

\subsection{Proof of \Cref{thm:agnostic-upper-bound} and \Cref{thm:realizable-upper-bound-k-fat-shattering}}
\label{sec:agnostic_upper_bound_proof}

\agnosticupperbound*
\begin{proof}
    We will show that the output of $\RegAgnostic(S, \mcA_w, \gamma)$, where $\mcA_w$ is the weak learner from \Cref{lemma:weak_learner_partial} satisfies the requirements of the theorem. Let $H: \mcX \mapsto [0,1]^k$ denote this output defined by $x \mapsto \RegAgnostic(S,\mcA_w,\gamma)$, which is further given by $x \mapsto \RegRealizable(\bar{S},\mcA_w,\gamma)$ where $\bar{S}$ is defined in Line \ref{line:call-ermreal} of \Cref{alg:realizable-agnostic-regression}.  
    
    We claim that the execution of $\RegAgnostic$ can be instantitated as a sample compression scheme for $(\mcX, [0,1])$. For any $S=\{(x_i, y_i)\}_{i \in [n]} \in (\mcX \times [0,1])^n$, consider the sample $\bar S$ constructed by $\RegAgnostic$ in \Cref{line:call-ermreal}. Then, let $T_1,\dots,T_l$ be the subsequences that $\RegRealizable(\bar S, \mcA_w, \gamma)$ constructs (after first preparing $\tilde{S}$ from $\bar S$) in \Cref{line:minimax}. Note that for $t \in [m]$, the $t^\text{th}$ example in $T_j$ can be written as $((x_{i_{j,t}}, \tau_{j,t}), b_{j,t})$ for $i_{j,t} \in [n], \tau_{j,t} \in \disclist, b_{j,t} \in \{0,1,\dots,k\}$. Let $S_j=\{(x_{i_{j,t}}, y_{i_{j, t}})\}$. We then define $(\kappa, \rho)$ to be the following list sample compression scheme:
    \begin{itemize}
        \item The compression function $\kappa$ maps $S$ to $\kappa(S)=(S_1,\dots,S_l), ((\tau_{j,t})_{t \in [m]})_{j \in [l]}$.
        \item The reconstruction function $\rho$ first constructs the labels $b_{j,t} \in \{0,1\dots,k\}$ from each $(x_{i_{j,t}}, y_{i_{j,t}})$ and $\tau_{j,t}$, just as in \Cref{line:threshold-labels}. At this point, it has effectively reconstructed the sequences $T_1,\dots,T_l$. It then invokes $\mcA_w$ on each of these to obtain $\mu^k_1=\mcA_w(T_1),\dots, \mu^k_l=\mcA_w(T_l)$ and thereafter uses these to construct the list map $H:\mcX \to [0,1]^k$, which it outputs.
    \end{itemize}
    Note that each $\tau_{j,t}$ can be encoded with $k\log(1/\gamma)$ bits (the parameter $\gamma \in (0,1)$ is known to $\rho$, so we only need to encode indices into the set $\disclist$). Therefore, the size of the compression is 
    \begin{align*}
        |\kappa(S|&=ml + mlk\log(1/\gamma) = O\left(k^7\log(k)\log(n')\log(1/\gamma)\dfat{\gamma/2}(\mcH) \right) \\
        &= O\left(k^8\log(k)\log(n)\log^2(1/\gamma)\dfat{\gamma/2}(\mcH) \right).
    \end{align*}
     Let $\mcA$ be the learning algorithm that outputs $H=\rho(\kappa(S))$. Then, \Cref{lemma:k_listgen-compression} guarantees that with probability $1-\delta/2$ over $S \sim \mcD^n$,
    \begin{align}
        \label{eqn:instantiate-compression-bound}
        \err_{\mcD, \ell_\abs}(H) &\le \hat{\err}_{S, \ell_\abs}(H) +  O\left(\sqrt{\frac{k^8\ \dfat{\gamma/2}(\mcH)\ \polylog(n,k,1/\gamma) + \log(1/\delta)}{n}}\right).
    \end{align}
    Here, we used that $\hat{\err}_{S \setminus \kappa(S), \ell_\abs}(H) \le 1$. %
    Now note that
    \begin{align}
        \hat{\err}_{S, \ell_\abs}(H) &= \frac{1}{n}\sum_{i=1}^n \left|\min_{j \in [k]}H(x_i)_j- y_i\right| \nonumber \\
        &\le \frac{1}{n}\sum_{i=1}^n \left|\min_{j \in [k]}H(x_i)_j- \hat{y}_i\right| + \frac{1}{n}\sum_{i=1}^n|\hat{y}_i-y_i| \nonumber \\
        &= \hat{\err}_{\bar S, \ell_\abs}(H) + \hat{\err}_{S,\ell_\abs}(h^\star) \nonumber \\
        &\le \hat{\err}_{\bar S, \ell_\abs}(H) + \inf_{h \in \mcH}\hat{\err}_{S, \ell_\abs}(h) + \gamma, \label{eqn:relate-bar-S-and-S}
    \end{align}
    where the last two lines follow by our choice of the labels $\hat{y}_i$ according to $h^\star$ in \Cref{line:call-ermreal}. Furthermore, since $\bar S$ constructed by $\RegAgnostic$ in \Cref{line:call-ermreal} is realizable by $\mcH$, \Cref{lem:realizable-training-error} guarantees that
    \begin{align}
        \label{eqn:instantiate-training-error-bound}
        \hat{\err}_{\bar S, \ell_\abs}(H) \le (8k+4)\gamma.
    \end{align}
    Combining \eqref{eqn:instantiate-compression-bound}, \eqref{eqn:relate-bar-S-and-S} and \eqref{eqn:instantiate-training-error-bound}, we get that with probability at least $1-\delta/2$,
    \begin{align}
        \err_{\mcD, \ell_\abs}(H) &\le \inf_{h \in \mcH}\hat{\err}_{S, \ell_\abs}(h) + (8k+5)\gamma \\
        &+ O\left(\sqrt{\frac{k^8\ \dfat{\gamma/2}(\mcH)\ \polylog(n,k,1/\gamma) + \log(1/\delta)}{n}}\right). \label{eqn:before-final-union-bound}
    \end{align}
    Now, note that by definition of the infimum, $\exists h' \in \mcH$ satisfying
    \begin{align}
        \label{eqn:infimum-def}
        \err_{\mcD, \ell_\abs}(h') &\le \inf_{h \in \mcH}\err_{\mcD, \ell_\abs}(h) + \sqrt{\frac{\log(2/\delta)}{2n}}.
    \end{align}
    Furthermore, for this $h'$, by Hoeffding's inequality, we have that with probability at least $1-\delta/2$ over $S \sim \mcD^n$,
    \begin{align}
        \label{eqn:hoeffding-for-infimum}
        \hat{\err}_{S, \ell_\abs}(h') \le  \err_{\mcD, \ell_\abs}(h') + \sqrt{\frac{\log(2/\delta)}{2n}}.
    \end{align}
    Combining \eqref{eqn:infimum-def} and \eqref{eqn:hoeffding-for-infimum}, we get that with probability at least $1-\delta/2$ over $S \sim \mcD^n$,
    \begin{align}
        \inf_{h \in \mcH}\hat{\err}_{S, \ell_\abs}(h) \le \hat{\err}_{S, \ell_\abs}(h') \le \inf_{h \in \mcH}\err_{\mcD, \ell_\abs}(h) + 2\sqrt{\frac{\log(2/\delta)}{2n}}. \label{eqn:hoeffding}
    \end{align}
    Combining \eqref{eqn:before-final-union-bound} and \eqref{eqn:hoeffding} with a union bound yields the desired result.
\end{proof}

\realizableupperboundcorollary*
\begin{proof}
    We will show that the output of $\RegRealizable(S, \mcA_w, \gamma)$, where $\mcA_w$ is the weak learner from \Cref{lemma:weak_learner_partial} satisfies the requirements of the theorem. As discussed in the proof of \Cref{thm:agnostic-upper-bound}, let $(\kappa,\rho)$ be the sample compression scheme representing the execution of $\RegRealizable$. Recall that the size of the compression is $O\left(k^8\log(k)\log(n)\log^2(1/\gamma)\dfat{\gamma/2}(\mcH) \right)$. Let $\mcA$ be the learning algorithm that outputs $H=\rho(\kappa(S))$. \Cref{lemma:k_listgen-compression} guarantees that with probability at least $1-\delta$ over $S$, for some constant $C \ge 1$,
    \begin{align*}
        \er_{\mcD, \ell_\abs}(H) &\le \hat{\err}_{S, \ell_\abs}(H) +  C \cdot \left(\sqrt{\hat{\err}_{S \setminus \kappa(S), \ell_\abs}(H)\cdot\frac{k^8\ \dfat{\gamma/2}(\mcH)\ \polylog(n,k,1/\gamma) + \log(1/\delta)}{n}}\right) \\
        &+C\cdot\left(\frac{k^8\ \dfat{\gamma/2}(\mcH)\ \polylog(n,k,1/\gamma) + \log(1/\delta)}{n}\right) \\
        &\le \hat{\err}_{S, \ell_\abs}(H) + \frac{|\kappa(S)|}{n}  + C \cdot \left(\sqrt{\hat{\err}_{S \setminus \kappa(S), \ell_\abs}(H)\cdot\frac{k^8\ \dfat{\gamma/2}(\mcH)\ \polylog(n,k,1/\gamma) + \log(1/\delta)}{n}}\right) \\
        &+C\cdot\left(\frac{k^8\ \dfat{\gamma/2}(\mcH)\ \polylog(n,k,1/\gamma) + \log(1/\delta)}{n}\right).
    \end{align*}
    Now, since the range of $H$ and the labels in $S$ are in $[0,1]$, observe that
    \begin{align*}
        \hat{\err}_{S \setminus \kappa(S), \ell_\abs}(H) &\le \hat{\err}_{S, \ell_\abs}(H) + \frac{|\kappa(S)|}{n}.
    \end{align*}
    Substituting in the above, this gives us
    \begin{align*}
        \er_{\mcD, \ell_\abs}(H) &\le t + C \sqrt{t \Delta} + C \Delta,
    \end{align*}
    where $t = \hat{\err}_{S, \ell_\abs}(H) + \frac{|\kappa(S)|}{n}$, $\Delta = \frac{k^8\ \dfat{\gamma/2}(\mcH)\ \polylog(n,k,1/\gamma) + \log(1/\delta)}{n}$. Using the AM-GM inequality, we have that $\sqrt{C^2\Delta \cdot t} \le \frac{C^2\Delta + t}{2}$, which gives us that
    \begin{align*}
        \er_{\mcD, \ell_\abs}(H) &\le 2t + 2C^2 \Delta. 
    \end{align*}
    Substituting back the values of $t$ and $\Delta$, and recalling that $C$ is a constant, we get that
    \begin{align*}
        \er_{\mcD, \ell_\abs}(H) &\le 2\hat{\err}_{S, \ell_\abs}(H) + 2\frac{|\kappa(S)|}{n} + \widetilde{O}\left(\frac{k^8\ \dfat{\gamma/2}(\mcH)+\log(1/\delta)}{n}\right) \\
        &= 2\hat{\err}_{S, \ell_\abs}(H) + \widetilde{O}\left(\frac{k^8\ \dfat{\gamma/2}(\mcH)+\log(1/\delta)}{n}\right).
    \end{align*}
    Finally, \Cref{lem:realizable-training-error} guarantees that $\hat{\err}_{S, \ell_\abs}(H) \le (8k+4)\gamma$, which completes the proof.
\end{proof}

\section{Proofs from \Cref{sec:agnostic-lb}}
\label{sec:agnostic-lb-proofs}

\subsection{Proof of \Cref{lemma:strong-fat-shattering-dimension-lb-quantitative}}
\label{sec:strong-fat-shattering-dimension-lb-proof}

\strongfatshatteringdimlbquantitative*
\begin{proof}
    Fix $\eps, \delta \le \frac{\gamma}{2(k+1)}$. Let $\dsfat{\gamma}(\mcH)=d$ for ease of notation, and suppose it were the case that $m^{k, \mathrm{ag}}_{\mcA, \mcH}(\eps, \delta) < d/2$. Then, given a sample $S$ of size $m=d/2$ drawn i.i.d. from any distribution $D$ on $\mcX \times [0,1]$, with probability at least $1-\delta$ over the draw of $S$, the output list hypothesis $\mu^k_S = \mcA(S)$ satisfies
    \begin{align*}
        \E_{(x,y)\sim D}[\min_l|\mu^k_S(x)_l-y|] &\le \inf_{f \in \mcH}\E_{(x,y)\sim D}[|f(x)-y|] + \eps.
    \end{align*}
    In particular, we have that
    \begin{align}
        \label{eqn:agnostic-regression-guarantee}
        \E_{S \sim D^m}\E_{(x,y)\sim D}[\min_l|\mu^k_S(x)_l-y|] &\le \inf_{f \in \mcH}\E_{(x,y)\sim D}[|f(x)-y|] + \eps + \delta \nonumber \\
        &\le \inf_{f \in \mcH}\E_{(x,y)\sim D}[|f(x)-y|] + \frac{\gamma}{k+1}.
    \end{align}  
    Now, let $T =(x_1,\dots,x_d) \in \mcX^d$ be a sequence that is $(\gamma, k)$-strongly-fat-shattered by $\mcH$ according to vectors $c_1,\dots,c_d \in [0,1]^{k+1}$. Let $F = \{f_b \in \mcH: b \in [k+1]^d\}$ be the set of $(k+1)^d$ functions that realize this shattering with respect to $c_1,\dots,c_d$ (see \Cref{def:gamma-k-strong-fat-shattering}).

    For each $b \in [k+1]^d$, let $D_b$ be the distribution on $\mcX \times [0,1]$ such that the marginal distribution of the first component is the uniform distribution on the shattered sequence $T$, and the distribution of the second component conditioned on the first component being $x$ has all its mass on $f_b(x)$. Concretely,
    \begin{align*}
        D_b(x, y)= \begin{cases} \frac{1}{d} &\text{if } x \in T,\; y=f_b(x), \\ 
        0 & \text{otherwise.}
        \end{cases}
    \end{align*}
    Note that for any $D_b$, $\inf_{f \in \mcH}\E_{(x,y)\sim D_b}[|f(x)-y|]=0$ and is attained by $f_b$. We will show that there is some $D_b$ for which $\E_{(x,y)\sim D_b}[\min_l|\mu^k_S(x)_l-y|]$ is large.

    Consider the random experiment of drawing $b$ uniformly at random from $[k+1]^d$, and then drawing a sample $S$ of size $m$ i.i.d. from $D_b$, and then measuring the error of $\mu^k_S$ with respect to $D_b$. Conditioned on any $S=\{(x_{i_1},y_{i_1}),\dots, (x_{i_m},y_{i_m})\}$, observe that $b_{i_1},\dots,b_{i_m}$ get determined from $S$. However, the values $b_{j}$ for $j \notin \{i_1,\dots,i_m\}$ are still uniform in $[k+1]$.\footnote{This is not the case if we work with the $(\gamma, k)$-fat-shattering dimension instead, where $S$ could determine $b$ entirely (for example, if every $f_b$ has its own set of labels).} Furthermore, $\mu^k_S$ depends only on $S$, and is independent of the random choice of these $b_j$ values. Thus, for each $x_{j}$ such that $j \notin \{i_1,\dots,i_m\}$, fixing all the other randomness (including internal randomness of $\mcA$) and considering only the randomness in $b_j$, the expected error of $\mu^k_S$ is
    \begin{align*}
        \frac{1}{k+1}\left(\min_l|\mu^k_S(x_j)_l-c_{j,1}| + \dots + \min_{l}|\mu^k_S(x_j)_l-c_{j,k+1}|\right).
    \end{align*}
    But note that the number of terms in the summation is $k+1$, so by the pigeonhole principle, the same $l$ attains the minimum at some two summands. Thus, for this $l$, the above quantity is at least
    \begin{align}
        \label{eqn:min-error}
        \frac{1}{k+1}\left(|\mu^k_S(x_j)_l-c_{j,a}|+ |\mu^k_S(x_j)_l-c_{j,b}|\right) &\ge \frac{1}{k+1}|c_{j,a}-c_{j,b}| \ge \frac{2\gamma}{k+1}.
    \end{align}
    Factoring in the randomness of $\mcA$ and the draw of $x_j$ from $D_b$, we get that the expected error of $\mu^k_S$ over the draw of $b$ and $x_j$ is at least
    \begin{align*}
        \left(1-\frac{m}{d}\right)\cdot \frac{2\gamma}{k+1} > \frac{\gamma}{k+1}.
    \end{align*}
    The above bound holds for every fixed $S$, and hence it also holds in expectation over $S$. In particular, this implies that there exists a $b \in [k+1]^d$ (and a corresponding $D_b$) for which 
    \begin{align*}
        \E_{S \sim D_b^m} \E_{(x,y)\sim D_b} \left[\min_l|\mu^k_S(x)_l - y|\right] > \frac{\gamma}{k+1}.
    \end{align*}
    But this contradicts our agnostic learning guarantee from \eqref{eqn:agnostic-regression-guarantee}, which asserts that
    \begin{align*}
        \E_{S \sim D_b^m} \E_{(x,y)\sim D_b} \left[\min_l|\mu^k_S(x)_l - y|\right] \le \frac{\gamma}{k+1}.
    \end{align*}
    Thus, it must be the case that $m^{k, \mathrm{ag}}_{\mcA, \mcH}(\eps, \delta) \ge d/2 = \Omega(\dsfat{\gamma}(\mcH))$ as required.
\end{proof}

\begin{remark}
    Comparing to Theorem 26 from \cite{bartlett1998prediction}, we note that the accuracy (i.e., $\eps$) for which we can show a lower bound scales inversely with $k$. Intuitively, this factor arises because the learning algorithm can output $k$ labels, and we measure error only with respect to the best label---this is reflected in the calculation in \eqref{eqn:min-error}.
\end{remark}

\subsection{Bound on Sum of Union and Intersection}
\label{sec:union-intersection-lb}

We prove two claims that lower bound the sum of the measures of the union and intersection of a collection of sets, given that each of these sets have a decent enough measure. We expect these claims to possibly be known, but since we could not find a good reference, we prove them here. We state the first claim simply in terms of sizes of the sets, while we state the second one in terms of arbitrary probability measures on the sets. We remark that the former can be derived from the latter by considering the uniform distribution on finite sets; nevertheless, because we are able to show the former via a simple application of the pigenohole principle, which illustrates the main idea better, we choose to include both the proofs.

\begin{restatable}{claim}{unionintersectionlb}
    \label{claim:union-plus-intersection-lb}
    For $k \ge 1$, let $S_1,\dots,S_{k+1}$ be $k+1$ sets such that $|S_i| \ge m,$ $\forall i \in [k+1]$. Then, we have that
    \begin{align*}
    \left|\bigcup_{i=1}^{k+1}S_i\right| + \left| \bigcap_{i=1}^{k+1}S_i\right| \ge \left(\frac{k+1}{k}\right)m.
    \end{align*}
\end{restatable}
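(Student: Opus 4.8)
The plan is to argue by a double-counting (pigeonhole) argument on the incidences between the elements of the union and the sets $S_1,\dots,S_{k+1}$. Write $U := \bigcup_{i=1}^{k+1} S_i$ and $I := \bigcap_{i=1}^{k+1} S_i$; if $U$ is infinite the claimed bound is vacuous, so we may assume all the sets are finite. For each $x \in U$, let $d(x) := |\{i \in [k+1] : x \in S_i\}|$ be the number of sets containing $x$. Counting the pairs $(x,i)$ with $x \in S_i$ in two ways gives $\sum_{x \in U} d(x) = \sum_{i=1}^{k+1} |S_i| \ge (k+1)m$.

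Next I would bound the same sum from above using the structure of $I$. By definition of the intersection, every $x \in I$ has $d(x) = k+1$, whereas every $x \in U \setminus I$ is missing from at least one of the sets and hence satisfies $d(x) \le k$. Therefore $\sum_{x \in U} d(x) \le (k+1)\,|I| + k\,(|U| - |I|) = k\,|U| + |I|$. Combining the two displays yields $k\,|U| + |I| \ge (k+1)m$.

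To convert this into the stated inequality I would simply invoke $k \ge 1$: since $|I| \le k\,|I|$, we get $k\big(|U| + |I|\big) = k\,|U| + k\,|I| \ge k\,|U| + |I| \ge (k+1)m$, and dividing by $k$ finishes the proof. The only nontrivial ingredient — which is the ``pigeonhole'' being referred to — is the observation that an element lying outside the intersection is contained in at most $k$ of the $k+1$ sets; everything else is bookkeeping. (For $k=1$ this recovers the familiar identity $|S_1 \cup S_2| + |S_1 \cap S_2| = |S_1| + |S_2|$.) There is no real obstacle here; the mild subtlety is remembering that the final step, passing from $k|U|+|I|\ge (k+1)m$ to $|U|+|I| \ge \tfrac{k+1}{k}m$, needs the hypothesis $k \ge 1$ to upgrade the coefficient of $|I|$.
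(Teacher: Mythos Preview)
Your proof is correct, and it takes a genuinely different route from the paper's. The paper first strips off the common intersection $T=\bigcap_i S_i$ (of size $x$), rewrites $|U|+|I|=|\bigcup_i(S_i\setminus T)|+2x$, observes that the reduced sets $S_i\setminus T$ have empty intersection, and then applies pigeonhole to find one $S_i\setminus T$ that misses at least a $\tfrac{1}{k+1}$ fraction of the reduced union; this yields $|\bigcup_i(S_i\setminus T)|\ge \tfrac{k+1}{k}(m-x)$, and substituting back gives the bound (with a visible slack term $\tfrac{k-1}{k}x$). Your approach bypasses the reduction entirely: you double-count incidences $\sum_{x\in U}d(x)=\sum_i|S_i|\ge (k+1)m$, bound the same sum above by $k|U|+|I|$ using $d(x)\le k$ off the intersection, and then use $k\ge 1$ to pass from $k|U|+|I|\ge(k+1)m$ to $|U|+|I|\ge\tfrac{k+1}{k}m$. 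Your argument is shorter and more transparent; the paper's version makes the role of the intersection more explicit and surfaces the slack term directly, but if you carry your inequality $k|U|+|I|\ge(k+1)m$ one step further you recover the identical slack $\tfrac{k-1}{k}|I|$.
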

\begin{proof}
    Let $T = \bigcap_{i=1}^{k+1}S_i$, and let $|T|=x$. Note that
    \begin{align}
        \label{eqn:sets-excluding-intersection}
        \left|\bigcup_{i=1}^{k+1}S_i\right| + \left| \bigcap_{i=1}^{k+1}S_i\right| &= \left|\bigcup_{i=1}^{k+1}S_i \setminus T\right| + 2x.
    \end{align}
    Consider the sets $S_1 \setminus T, \dots, S_{k+1}\setminus T$. Then, we have that $|S_i \setminus T| \ge m-x$, $\forall i \in [k+1]$. Further, observe that 
    \begin{align*}
        \bigcap_{i=1}^{k+1}S_i \setminus T = \emptyset.
    \end{align*}
    Thus, every element in the union $\bigcup_{i=1}^{k+1}S_i \setminus T$ is excluded from at least one $S_i \setminus T$. By the pigeonhole principle, some $S_i \setminus T$ excludes at least a $\frac{1}{k+1}$ fraction of the elements in the union. That is, for such an $S_i \setminus T$,
    \begin{align*}
        &m-x \le |S_i \setminus T| \le \left(1-\frac{1}{k+1}\right)\left|\bigcup_{i=1}^{k+1}S_i \setminus T\right| \\
        \implies \qquad & \left|\bigcup_{i=1}^{k+1}S_i \setminus T\right| \ge \left(\frac{k+1}{k}\right)(m-x).
    \end{align*}
    Substituting in \eqref{eqn:sets-excluding-intersection}, we get
    \begin{align*}
         \left|\bigcup_{i=1}^{k+1}S_i\right| + \left| \bigcap_{i=1}^{k+1}S_i\right| &\ge \left(\frac{k+1}{k}\right)m + x\left(\frac{k-1}{k}\right) \ge \left(\frac{k+1}{k}\right)m,
    \end{align*}
    where the last inequality follows because $k \ge 1$.
\end{proof}

\begin{restatable}{claim}{unionintersectionproblb}
    \label{claim:union-plus-intersection-prob-lb}
    For $k \ge 1$, let $A_1,\dots,A_{k+1}$ be $k+1$ events in a probability space such that $\Pr[A_i] > c$, $\forall i \in [k+1]$. Then, we have that
    \begin{align*}
    \Pr\left[\bigcup_{i=1}^{k+1}A_i\right] + \Pr\left[\bigcap_{i=1}^{k+1}A_i\right] > \left(\frac{k+1}{k}\right)c.
    \end{align*}
\end{restatable}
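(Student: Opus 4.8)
The plan is to reprise the argument behind \Cref{claim:union-plus-intersection-lb}, but to replace the counting of elements by expectations of indicator random variables, so that the statement holds for an arbitrary probability measure. First I would introduce the counting random variable $X := \sum_{i=1}^{k+1}\Ind[A_i]$, which records how many of the $k+1$ events occur at a given point of the sample space; by linearity of expectation, $\E[X] = \sum_{i=1}^{k+1}\Pr[A_i] > (k+1)c$.

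The crux is the pointwise inequality
\[
X \;\le\; k\,\Ind\left[\bigcup_{i=1}^{k+1}A_i\right] + \Ind\left[\bigcap_{i=1}^{k+1}A_i\right],
\]
which I would verify by a three-way case split on the value of $X$: if $X = 0$ both sides vanish; if $1 \le X \le k$ the right-hand side equals $k \ge X$; and if $X = k+1$ (i.e., every event occurs) both sides equal $k+1$. This is exactly the probabilistic analogue of the observation used in the proof of \Cref{claim:union-plus-intersection-lb} that every element of the union is excluded from at least one of the sets; here it is simply the fact that $X$ cannot reach $k+1$ unless the full intersection event occurs.

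Taking expectations of the displayed inequality and using $\E[X] > (k+1)c$ yields $k\Pr\left[\bigcup_{i=1}^{k+1}A_i\right] + \Pr\left[\bigcap_{i=1}^{k+1}A_i\right] > (k+1)c$. Dividing through by $k$ gives $\Pr\left[\bigcup_{i=1}^{k+1}A_i\right] + \tfrac{1}{k}\Pr\left[\bigcap_{i=1}^{k+1}A_i\right] > \tfrac{k+1}{k}c$, and since $\tfrac{1}{k} \le 1$ we may enlarge the coefficient of the intersection term back up to $1$ to obtain the claimed bound.

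I do not anticipate a genuine obstacle here; the only point requiring care is the coefficient in the pointwise inequality — the multiplier on $\Ind[\bigcup_i A_i]$ must be exactly $k$ rather than $k+1$, since this is what yields the sharp factor $\tfrac{k+1}{k}$. (Alternatively, one could deduce the claim from \Cref{claim:union-plus-intersection-lb} via a discretization/limiting argument on the measure, but the indicator computation above is self-contained and directly handles non-atomic probability spaces.)
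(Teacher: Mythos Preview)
Your argument is correct and is genuinely different from the paper's. The paper proceeds by first peeling off the common intersection $T=\bigcap_i A_i$, working with the sets $C_i=A_i\setminus T$ (which have empty joint intersection), and then proving via a disjoint-decomposition/contradiction argument that some $C_j$ misses at least a $\tfrac{1}{k+1}$ fraction of $\Pr[\bigcup_i C_i]$; this yields $\Pr[\bigcup_i C_i]>\tfrac{k+1}{k}(c-x)$, and substituting back gives the claim. Your route bypasses all of this structure: the single pointwise inequality $X\le k\,\Ind[\bigcup_i A_i]+\Ind[\bigcap_i A_i]$ together with linearity of expectation does the whole job in two lines. Your approach is shorter, more transparent, and arguably what the paper's proof is implicitly encoding; the paper's version has the (minor) advantage of mirroring the pigeonhole narrative of \Cref{claim:union-plus-intersection-lb} step for step, but at the cost of a more elaborate decomposition.
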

\begin{proof}
    Let $T = \bigcap_{i=1}^{k+1}A_i$ and let $\Pr[T] = x$. Then, we have that
    \begin{align}
        \label{eqn:decoupling-common-intersection}
        \Pr\left[\bigcup_{i=1}^{k+1}A_i\right] + \Pr\left[\bigcap_{i=1}^{k+1}A_i\right] &= \Pr\left[\bigcup_{i=1}^{k+1}A_i \setminus T\right] + 2x.
    \end{align}
    Let $C_i = A_i \setminus T$. Define $B_0 = \bigcup_{j=1}^{k+1}C_j$, and $B_i = \bigcap_{j=1}^{i}C_j$ for $i \in [k+1]$. Then, because the sets $C_i$ have empty common intersection (implying that every element in the union is excluded by at least one set), observe that the union of the sets decomposes as
    \begin{align*}
        \bigcup_{i=1}^{k+1}C_i &= \bigcup_{i=1}^{k+1} B_{i-1} \setminus C_i.
    \end{align*}
    Furthermore, observe that every pair of sets involved in the union on the right side above is disjoint. Thus, we have that
    \begin{align}
        \label{eqn:disjoint-union-decomposition}
        \Pr\left[\bigcup_{i=1}^{k+1}C_i\right] &= \sum_{i=1}^{k+1}\Pr[B_{i-1} \setminus C_i].
    \end{align}
    Recall that we are given that $\Pr[A_i] > c$ for every $i \in [k+1]$. This implies that $\Pr[C_i] > c-x$ for every $i \in [k+1]$. We claim the following:
    \begin{align}
        \label{eqn:claim-to-prove}
        \exists j \in [k+1] \text{ such that } \Pr\left[\left(\bigcup_{i=1}^{k+1}C_i\right)\setminus C_j\right] \ge \frac{1}{k+1}\cdot \Pr\left[\bigcup_{i=1}^{k+1}C_i\right]
    \end{align}
    Suppose \eqref{eqn:claim-to-prove} were not true: this would mean that for every $j \in [k+1]$,
    \begin{align}
        \label{eqn:to-contradict}
        \Pr\left[\left(\bigcup_{i=1}^{k+1}C_i\right)\setminus C_j\right] < \frac{1}{k+1}\cdot \Pr\left[\bigcup_{i=1}^{k+1}C_i\right].
    \end{align}
    But note that for every $j \in [k+1]$, since $B_{j-1} \subseteq \bigcup_{i=1}^{k+1}C_i$,
    \begin{align}
        \label{eqn:relating-Bj}
        \Pr[B_{j-1}\setminus C_j] \le     \Pr\left[\left(\bigcup_{i=1}^{k+1}C_i\right)\setminus C_j\right].
    \end{align}
    Combining \eqref{eqn:disjoint-union-decomposition}, \eqref{eqn:relating-Bj} and \eqref{eqn:to-contradict}, we get
    \begin{align*}
        \Pr\left[\bigcup_{j=1}^{k+1}C_j\right] &= \sum_{j=1}^{k+1}\Pr[B_{j-1} \setminus C_j] \le \sum_{j=1}^{k+1}\Pr\left[\left(\bigcup_{i=1}^{k+1}C_i\right)\setminus C_j\right] < \Pr\left[\bigcup_{i=1}^{k+1}C_i\right]
    \end{align*}
    which is a contradiction. Hence, \eqref{eqn:claim-to-prove} is true, and for the $j \in [k+1]$ satisfying it, we get
    \begin{align*}
        &c-x < \Pr[C_j] = \Pr\left[\left(\bigcup_{i=1}^{k+1}C_i\right) \cap C_j\right] \le \frac{k}{k+1}\Pr\left[\bigcup_{i=1}^{k+1}C_i\right] \\
        \implies \qquad &\Pr\left[\bigcup_{i=1}^{k+1}C_i\right] > \left(\frac{k+1}{k}\right)(c-x).
    \end{align*}
    Substituting this back in \eqref{eqn:decoupling-common-intersection}, we get
    \begin{align*}
        \Pr\left[\bigcup_{i=1}^{k+1}A_i\right] + \Pr\left[\bigcap_{i=1}^{k+1}A_i\right] &> \left(\frac{k+1}{k}\right)(c-x) + 2x = \left(\frac{k+1}{k}\right)c + x\left(\frac{k-1}{k}\right) \ge  \left(\frac{k+1}{k}\right)c,
    \end{align*}
    where the last inequality follows because $k \ge 1, x \ge 0$.
\end{proof}

\subsection{Proof of \Cref{lem:packing-number-lb}}
\label{sec:packing-number-lb-proof}

\packingnumberlb*
\begin{proof}
    We will show that
    \begin{align*}
        \mcM^k_\infty(\mcH, \gamma) \ge \left\lfloor\left(\frac{k+1}{3}\right)\exp\left(\frac{\dfat{\gamma}(\mcH)\cdot(k+1)!}{(k+1)^{k+2}}\right)\right\rfloor.
    \end{align*}
    Let $(x_1,\dots,x_d) \in \mcX^d$ be a $(\gamma, k)$ fat-shattered sequence, and let $F \subseteq \mcH$ be the set of functions witnessing this shattering according to $c_1,\dots,c_d$, so that $|F|=(k+1)^d$. We can view this as follows: for each $x_i$, we can choose any one of $k+1$ buckets, given by the regions $[0,c_{i,1}-\gamma], [c_{i,1}+\gamma, c_{i,2}-\gamma],\ldots, [c_{i, k}+\gamma, 1]$, and after having made a choice of bucket for each $x_i$, $F$ contains a function that respects this choice of buckets. 
    
    Suppose we populate a subset $F' \subseteq F$ randomly as follows. Initially, $F'$ is empty. For every $x_i$, we will choose one of the $k+1$ buckets for that $x_i$ uniformly at random, until we have chosen a bucket for each of $x_1,\dots,x_d$. We will then add the function from $F$ respecting this assignment of buckets to $F'$. We will add $m$ such randomly chosen functions to $F'$. We want the property that for every $k+1$ functions in $F'$, there is an $x_i$ such that we chose a different bucket at that $x_i$ for each of these $k+1$ functions. Furthermore, we want $m$ to be as large as possible while respecting this property.

    For any fixed $k+1$ functions that we added to $F'$, the probability that we chose a different bucket for each of them on a \textit{fixed} $x_i$ is $\frac{(k+1)!}{(k+1)^{k+1}}$. Thus, the probability that we did not choose a different bucket for each of them on this $x_i$ is $1-\frac{(k+1)!}{(k+1)^{k+1}}$. The probability that we did not choose a different bucket for each of them at \textit{any} of $x_1,\dots,x_d$ is thus 
    \begin{align*}
    \left(1-\frac{(k+1)!}{(k+1)^{k+1}}\right)^d \le \exp\left(-\frac{d(k+1)!}{(k+1)^{k+1}}\right).
    \end{align*}
    By a union bound over all the $\binom{m}{k+1}$ choices of a tuple of $k+1$ functions in $F'$, we obtain that the probability that there exists a tuple of $k+1$ functions that is not $(k+1)$-wise $\gamma$-separated at any of $x_1,\dots,x_d$ is at most
    \begin{align*}
        \binom{m}{k+1}\exp\left(-\frac{d(k+1)!}{(k+1)^{k+1}}\right) &\le \left(\frac{em}{k+1}\right)^{k+1}\exp\left(-\frac{d(k+1)!}{(k+1)^{k+1}}\right) \\
        &= \exp\left((k+1)\ln\left(\frac{em}{k+1}\right)-\frac{d(k+1)!}{(k+1)^{k+1}}\right) \\
        & < 1 \\
        \text{so long as} \qquad m &\le \left\lfloor \left(\frac{k+1}{3}\right)\exp\left(\frac{d(k+1)!}{(k+1)^{k+2}}\right)\right\rfloor.
    \end{align*}
    Thus, for $m=\left\lfloor \left(\frac{k+1}{3}\right)\exp\left(\frac{d(k+1)!}{(k+1)^{k+2}}\right)\right\rfloor$, there is a positive probability of constructing an $F'$ that is $(k+1)$-wise $\gamma$-separated, which in particular means that there exists a $(k+1)$-wise $\gamma$-separated set of this size.

\end{proof}

\subsection{Proof of \Cref{lem:relating-dweak-dstrong}}
\label{sec:relating-dweak-dstrong-proof}

\strongfattofat*

\begin{proof}
    Let $\dstrong = \dsfat{\gamma}(\mcH)$ and $\dweak=\dfat{\gamma}(\mcH)$. We will show that
    \begin{align*}
        \dstrong > \frac{\dweak}{\ln^2(\dweak(k+1)B^{k+1})} \cdot \frac{(k+1)!\ln\left(\frac{k+1}{k}\right)}{12(k+1)^{k+3}}.
    \end{align*}  
    Let $S=(x_1,\dots,x_{\dweak})$ be a sequence that is $(\gamma,k)$ fat-shattered by $\mcH$, and consider the class $\mcH|_S$. Note that $\mcH|_S$ has $(\gamma,k)$-fat-shattering dimension at least $\dweak$ and $(\gamma,k)$-strong-fat-dimension at most $\dstrong$. From \Cref{lem:packing-number-lb}, we know that
    \begin{align}
        \label{eqn:packing-number-lb}
        \mcM^k_\infty\left(\mcH|_S, \gamma\right) \ge \left\lfloor\left(\frac{k+1}{3}\right)\exp\left(\frac{\dweak(k+1)!}{(k+1)^{k+2}}\right)\right\rfloor.
    \end{align}
    Also, from \Cref{lem:packing-number-ub}, we have that for $y = \sum_{i=1}^{\dstrong}\binom{\dweak}{i}\binom{B}{k+1}^i$,
    \begin{align}
        \label{eqn:packing-number-ub}
        \mcM^k_\infty\left(\mcH|_S, \gamma\right) &< (k+1)\left[(k+1)B^{k+1}\dweak\right]^{\left\lceil\log_{\frac{k+1}{k}} y\right\rceil}.
    \end{align}
    Combining \eqref{eqn:packing-number-lb} and \eqref{eqn:packing-number-ub}, we get that
    \begin{align*}
        &(k+1)\left[(k+1)B^{k+1}\dweak\right]^{\left\lceil\log_{\frac{k+1}{k}} y\right\rceil} > \left\lfloor\left(\frac{k+1}{3}\right)\exp\left(\frac{\dweak(k+1)!}{(k+1)^{k+2}}\right)\right\rfloor \\
        \implies \quad &\ln(2(k+1)) + \left\lceil\log_{\frac{k+1}{k}} y\right\rceil\ln\left((k+1)B^{k+1}\dweak\right) > \ln\left(\frac{k+1}{3}\right) + \frac{\dweak(k+1)!}{(k+1)^{k+2}} \\
        \implies \quad &2\left\lceil\log_{\frac{k+1}{k}} y\right\rceil\ln\left((k+1)B^{k+1}\dweak\right) > \frac{\dweak(k+1)!}{(k+1)^{k+2}}
    \end{align*}
    But note that
    \begin{align*}
        y &= \sum_{i=1}^{\dstrong}\binom{\dweak}{i}\binom{B}{k+1}^i \le \dstrong \cdot \dweak^{\dstrong} \cdot B^{\dstrong(k+1)} \\
        \implies \quad & \left\lceil\log_{\frac{k+1}{k}} y\right\rceil \le 2\cdot \left(\frac{\ln(\dstrong) + \dstrong\ln(\dweak) + \dstrong(k+1)\ln(B)}{\ln\left(\frac{k+1}{k}\right)}\right) \\
        &\le \frac{6(k+1)}{\ln\left(\frac{k+1}{k}\right)}\cdot \dstrong \cdot \ln((k+1)B^{k+1}\dweak)
    \end{align*}
    where we used that $\dstrong \le \dweak$.
    Plugging in the previous display, we get the desired bound.
\end{proof}

\subsection{Proofs of \texorpdfstring{\Cref{claim:discretization-fat-shattering}}{discretization1} and \texorpdfstring{\Cref{claim:agnostic-learning-discretized}}{discretization2}}
\label{sec:discretization-proofs}

\discretizationfatshattering*

\begin{proof}
    Let $x,y \in [0,1]^2$ be such that $|x-y| \ge \gamma$, and assume without loss of generality that $x > y$. Then,
    \begin{align*}
        \left|\alpha\left\lfloor \frac{x}{\alpha}\right\rfloor-\alpha\left\lfloor \frac{y}{\alpha}\right\rfloor\right| &= \alpha\left(\left\lfloor \frac{x}{\alpha}\right\rfloor-\left\lfloor \frac{y}{\alpha}\right\rfloor\right) \ge \alpha \left \lfloor\frac{x-y}{\alpha}\right\rfloor \ge \alpha \left\lfloor\frac{\gamma}{\alpha}\right\rfloor \ge \max(\alpha, \gamma-\alpha).
    \end{align*}
    Then, consider any sequence $(x_1,\dots,x_m)$ that is $(\gamma, k)$-fat-shattered by $\mcH$, and let $F \subseteq \mcH$ witness this shattering with respect to $c_1,\dots,c_m$. The calculation above then shows that $F^{\alpha}$ witnesses the $(\max(\alpha, \gamma-\alpha),k)$-fat-shattering of the same sequence with respect to $Q_\alpha(c_1),\dots,Q_{\alpha}(c_m)$, which proves the claim.
\end{proof}

\agnosticlearningdiscretized*

\begin{proof}
    Fix any distribution on $\mcX \times [0,1]$. We have that with probability at least $1-\delta$ over a sample $S$ of size $m(\eps, \delta)$ drawn i.i.d from $D$, the hypothesis $\mu^k=\mcA(S)$ output by $\mcA$ satisfies
    \begin{align*}
        \E_{(x,y) \sim D}[\min_l|\mu^k(x)_l-y|] \le \inf_{f \in \mcH} \E_{(x,y) \sim D}[|f(x)-y|] + \eps.
    \end{align*}
    But note that for any $f \in \mcH$, and fixed $(x,y)$, because $|f(x)-f^\alpha(x)| \le \alpha$,
    \begin{align*}
        |f(x)-y| \le |f^\alpha(x)-y| + \alpha,
    \end{align*}
    and therefore
    \begin{align*}
        \E_{(x,y) \sim D}[\min_l|\mu^k(x)_l-y|] &\le \inf_{f \in \mcH} \E_{(x,y) \sim D}[|f^\alpha(x)-y|] + \alpha + \eps \\
        &= \inf_{f^\alpha \in \mcH^\alpha} \E_{(x,y) \sim D}[|f^\alpha(x)-y|] + \alpha + \eps.
    \end{align*}
\end{proof}
\section{Proofs from \texorpdfstring{\Cref{sec:realizable_intro}}{realizableproofs}}
\label{sec:realizable-proofs}

\subsection{Proof of \texorpdfstring{\Cref{lem:topo-orientation}}{topoproof}}
\label{sec:topo-orientation-proof}

\topologyorientation*

\begin{proof}
    First, since $n > \doig{\gamma}(\mcH)$, it must be the case that for every \textit{finite} subgraph $(V', E')$ of $\mcG(\mcH|_S)$, there exists a $k$-list orientation $\sigmak$ such that for every $v \in V'$, $\outdeg(v;\sigmak, \gamma)  < \frac{n}{2(k+1)}$. To show that there exists a $k$-list orientation that works for the whole graph, we will use a compactness argument.
    
    Formally, let $\mcZ$ be the set of pairs $\mcZ=\{(v,e)\in V \times E: v \in e\}$. For $z=(v,e)\in \mcZ$, define the discrete topology on the set $X_z=\{0,1\}$ with $\tau_z=2^{X_z}=\{\emptyset, \{0\}, \{1\}, \{0,1\}\}$ as the open sets. Note that $(X_z,\tau_z)$ is compact since $X_z$ is finite. By Tychonoff's theorem, $\mcK=\prod_{z \in \mcZ} X_z$ is compact with respect to the product topology.
    The base open sets in the product topology are sets of the form $\prod_{z \in \mcZ}S_z$ where only a finite number of the $S_z$s are not equal to $X_z$.

    We want to think of the members of $\mcK$ as possible orientations of $\mcG(\mcH|_S)$. Then, define the following ``per-vertex good'' sets. For any $v \in V$, let
    \begin{align}
        \label{eqn:Av}
        A_v = \left\{\kappa \in \mcK: |\{e \ni v: \kappa_{(v,e)}=0\}| < \frac{n}{2(k+1)}\right\}.
    \end{align}
    Intuitively, these are all the orientations for which the number of edges adjacent to $v$ that are not ``satisfying" it are fewer than $\frac{n}{2(k+1)}$, where we think of an edge \textit{satisfying} a vertex if it is either oriented to this vertex, or oriented to a vertex that is $\gamma$-close to this vertex.

    Next, we define the following ``per-edge good'' sets. For any $v \in V$ and any $j \in [n]$, let $e_j$ be the edge adjacent to $v$ in the direction $j$. For any $\kappa \in \mcK$, let $C_{\kappa, v, j}=\{u(j): u \in e_j, \kappa_{(u, e_j)}=1\}$. We say that $C_{\kappa, v, j}$ is ``$k$-list coverable" if there exist $x_1,\dots, x_k \in C_{\kappa, v, j}$ such that for any $x \in C_{\kappa, v, j}$, $|x-x_i| \le \gamma$. Then,
    \begin{align}
        \label{eqn:Bvj}
        B_{v,j} = \left\{\kappa \in \mcK: C_{\kappa, v, j} \text{ is $k$-list coverable} \right\}.
    \end{align}
    That is, this set simply checks the condition that among all the vertices being supposedly satisfied by the edge, there is in fact a subset of $k$ vertices that can serve as the vertices to which the edge may be oriented.

    Now, we claim that $A_v$ is a closed set in the product topology. Fix any $t$ such that $\frac{n}{2(k+1)}\le t \le n$. Next, fix some $t$ edges $E_t=\{e_{i_1},\dots,e_{i_t}\}$ adjacent to $v$. Let $\mcZ_1=\{(v,e): e \in E_t\}$ and $\mcZ_2 = \mcZ \setminus \mcZ_1$. Then,
    \begin{align}
        \label{eqn:barAv}
        \bar{A}_v = \bigcup_{\frac{n}{2(k+1)} \le t \le n} \bigcup_{E_t=\{e_{i_1},\dots,e_{i_t}\}} \underbrace{\prod_{z \in \mcZ_1} \{0\} \times \prod_{z \in \mcZ_2}\{0,1\}}_{\text{base open set}}.
    \end{align}
    Note that every term inside the double-union is a base open set, because every $\mcZ_1$ is finite. Because an arbitrary union of open sets is open, $\bar{A}_v$ is open, and hence $A_v$ is a closed set.

    Next, we claim that every $B_{v,j}$ is also a closed set. Fix any $t \ge k+1$. Fix any $U_t = \{u_1, \dots, u_t\} \subseteq e_j$ such that the set $\{u_1(j),\dots,u_t(j)\}$ is not $k$-list coverable. Let $\mcZ_1 = \{(u,e_j):u \in U_t\}$ and $\mcZ_2 = \mcZ \setminus \mcZ_1$.
    \begin{align}
        \label{eqn:barBvj}
        \bar{B}_{v,j} = \bigcup_{t \ge k+1} \bigcup_{U_t=\{u_1,\dots,u_t\}} \underbrace{\prod_{z \in \mcZ_1}\{1\} \times \prod_{z \in \mcZ_2}\{0,1\}}_{\text{base open set}}.
    \end{align}
    Again, every term inside the double-union is a base open set. Thus, $\bar{B}_{v,j}$ is open, which means that $B_{v,j}$ is closed. This means that $\Sigma_v = A_v \cap \bigcap_{j \in [n]}B_{v,j}$ is also closed. 

    We will now argue that for every finite $V' \subseteq V$, $\bigcap_{v \in V'}\Sigma_v$ is non-empty. This will establish the finite intersection property for the collection of closed sets $\{\Sigma_v\}_{v \in V}$, which will let us claim that $\bigcap_{v \in V}\Sigma_v$ is non-empty. Let $(V', E')$ be the finite subgraph induced by $V'$ on $\mcG(\mcH|_S)$. From our argument at the start of the proof, we know that there exists a $k$-list orientation $\sigmak$ of $(V',E')$ that has maximum $k$-outdegree smaller than $\frac{n}{2(k+1)}$. We can interpret this $\sigmak$ as defining a $\kappa \in \bigcap_{v \in V'}\Sigma_v$. Namely, for every $v \in V'$, we can find the edges $e$ edjacent to it that do not contribute to its $k$-outdegree --- there must be at least $n-\frac{n}{2(k+1)}$ many of these. We set $\kappa_{(v,e)}=1$ for these edges, and $\kappa_{(v,e)}=0$ for the edges that do contribute to its $k$-outdegree. Observe that $\kappa$ then satisfies the condition \eqref{eqn:Av} for $A_v$. Now, we need to check the condition for $B_{v,j}$ --- let $e_j$ be the edge adjacent to $v$ in the direction $j$. For any vertex $u \in V \setminus V'$ that also belongs to $e_j$, we simply set $\kappa_{(u, e_j)}=0$. In this way, $\kappa_{(u, e_j)}=1$ only for the vertices in $V'$. Since $\sigmak$ is a valid orientation, and by the way we set $\kappa$ to 1 only for the vertices that are satisfied by the edge, we thus satisfy $B_{v,j}$ for all $v$ and $j$. For $(u,e)$ pair that is left over, we can set $\kappa_{(u,e)}$ arbitrarily, and these do not really affect membership of $\kappa$ in $\bigcap_{v \in V'}\Sigma_v$. Thus, we have shown the existence of a valid $\kappa$ in $\bigcap_{v \in V'}\Sigma_v$, which means in particular that this set is non-empty.

    This further implies by our preceding argument that $\bigcap_{v \in V}\Sigma_v$ is non-empty. Then, let $\kappa^\star \in \bigcap_{v \in V}\Sigma_v$. We will construct a $k$-list orientation from $\kappa^\star$. For every vertex $v \in V$, and every edge $e_j$ adjacent to it (in direction $j$) for which $\kappa^\star_{(v,e_j)}=1$, if $e_j$ has already not been list-oriented, we obtain the set $C_{\kappa^\star, v, j}=\{u(j): u \in e_j, \kappa^\star_{(u, e_j)}=1\}$. By the per-edge condition \eqref{eqn:Bvj} on $B_{v,j}$, we know that $C_{\kappa^\star, v, j}$ is $k$-list coverable. We orient the edge $e_j$ to the $k$-list that covers $C_{\kappa^\star, v, j}$. This edge will then satisfy every vertex $u \in e_j$ for which $\kappa^\star_{(u, e_j)}=1$. By the per-vertex condition \eqref{eqn:Av} on $A_v$, we also know that at least $n-\frac{n}{2(k+1)}$ edges adjacent to $v$ will satisfy it, and hence its $k$-outdegree would be smaller than $\frac{n}{2(k+1)}$. If we do this for every vertex and every direction that remains, we would have ensured that every vertex has $k$-outdegree smaller than $\frac{n}{2(k+1)}$. Finally, if any edge remains unoriented, %
    we can orient it arbitrarily to any $ \le k$ vertices it is adjacent to, and we have obtained the list orientation we desire.
\end{proof}

\subsection{Proof of \texorpdfstring{\Cref{lem:weak-list-regressor}}{c}}
\label{sec:weak-list-regressor-proof}

\weaklearnerrealizable*

\begin{algorithm}[t]
    \caption{Weak learner $\mcA_w$ for $\mcH \subseteq [0,1]^\mcX$ based on the one-inclusion graph} \label{algo:one-incl}
    \hspace*{\algorithmicindent} 
    \begin{flushleft}
      {\bf Input:} An $\mcH$-realizable sample $S = \big((x_1, y_1),\ldots,(x_n, y_n)\big)$, scale parameter $\gamma$. \\
    {\bf Output:} A $k$-list hypothesis $\mcA_w(S)=\mu^k_S:\mcX \to \{Y \subseteq [0,1]: |Y|\le k\}$. \\
    \ \\
    For each $x \in \mcX$, the $k$-list $\mu^k_S(x)$ is computed as follows:
    \end{flushleft}
    \begin{algorithmic}[1]
    \State Consider the class of all patterns over the \emph{unlabeled data}
    {$\mcH|_{(x_1,\ldots,x_n,x)} \subseteq [0,1]^{n+1}$}.
    \State Find a $k$-list orientation $\sigmak$ of $\mcG(\mcH|_{(x_1,\ldots,x_n,x)})$ that \textit{minimizes} the \textit{maximum} scaled $k$-outdegree $\outdeg(\sigmak, \gamma)$.
    \State Consider the edge in direction $x$ defined by $S$:
    \[e =\{ h \in \mcH|_{(x_1,\ldots,x_n,x)} :  \forall i \in [n]  \ \ 
    h(x_i) =y_i\}.\]
    \State Set $\mu^k_S(x) = \{h(x):h \in \sigmak(e)\}$.
    \end{algorithmic}
\end{algorithm}

\begin{proof}
    The argument is similar to the proof of \Cref{lemma:weak_learner_partial}, and is based on the one-inclusion graph algorithm. In particular, we will show that \Cref{algo:one-incl} satisfies the requirements of the lemma.

    Because each sample is drawn i.i.d. from $\mcD$, we can apply the classical leave-one-out symmetrization argument to claim
    \begin{align*}
        \Pr_{S \sim \mcD^{n}}\Pr_{(x,y)\sim \mcD}[\mu^k_S(x) \not\owns_\gamma y] &= \Pr_{S\sim \mcD^{n+1}} \Pr_{i \sim \mathrm{Unif}([n+1])}[\mu^k_{S_{-i}}(x_i) \not\owns_\gamma y_i],
    \end{align*}
    where $\mathrm{Unif}([n+1])$ is the uniform distribution on $[n+1]$, and $\mu^k_{S_{-i}}=\mcA_w(S \setminus \{(x_i, y_i)\})$. It then suffices to show that for every fixed sample $S=\{(x_1,y_1),\dots,(x_{n+1},y_{n+1})\}$ of size $n+1$, 
    \begin{align*}
        \Pr_{i \sim \mathrm{Unif}([n+1])}[\mu^k_{S_{-i}}(x_i) \not\owns_\gamma y_i] < \frac{1}{2(k+1)}.
    \end{align*}
    Let $y=(y_1,\dots,y_{n+1})$ be a vertex in the one-inclusion graph $\mcG(\mcH|_S)$. Note that this vertex exists due to the realizability assumption. For any $i \in [n+1]$, observe that when \Cref{algo:one-incl} is asked to make a prediction on $x_i$ given the labeled sample $S_{-i}$, it constructs the same one-inclusion graph $\mcG(\mcH|_S)$ and hence the same outdegree-minimizing orientation $\sigmak$. Let $e_i$ be the edge adjacent to vertex $y$ in the direction $i$ in this oriented graph, and consider the list of labels $\mu(e_i)=\{h(x_i):h \in \sigmak(e_i)\}$. Observe that $\mu^k_{S_{-i}}(x_i) \not\owns_\gamma y_i$ if and only if $\mu(e_i)$ does not $\gamma$-contain $y_i$. Namely,
    \begin{align*}
        \Pr_{i \sim \mathrm{Unif}([n+1])}[\mu^k_{S_{-i}}(x_i) \not\owns_\gamma y_i] &= \frac{1}{n+1}\sum_{i=1}^{n+1} \Ind[\mu^k_{S_{-i}}(x_i) \not\owns_\gamma y_i] \\
        &= \frac{1}{n+1}\sum_{i=1}^{n+1} \Ind[\mu(e_i) \not\owns_\gamma y_i] \\
        &= \frac{\outdeg(y;\sigmak, \gamma)}{n+1}.
    \end{align*}
    Finally, by \Cref{lem:topo-orientation}, $\outdeg(y;\sigmak, \gamma) < \frac{n+1}{2(k+1)}$, which completes the proof.
\end{proof}

\subsection{Proof of \texorpdfstring{\Cref{thm:realizable-ub}}{b}}
\label{sec:realizable-ub-proof}

\realizableub*

\begin{proof}
    Let $n_0=\doig{\gamma}(\mcH)$ for ease of notation. The theorem statement is not interesting for $n \le \doig{\gamma}(\mcH)$, because the right side is larger than 1 in this case. Hence, let $n > n_0$, and let $S$ be any labeled sequence of size $n$ realizable by $\mcH$. We claim that there exists a distribution $\mcP$ over subsequences $T$ of $S$ of size $n_0$ (with repetitions allowed) such that for every $(x,y) \in S$,
    \begin{align*}
        \Pr_{T \sim \mcP}[\mu^k_T(x) \not\owns_\gamma y] < \frac{1}{2(k+1)},
    \end{align*}
    where $\mu^k_T = \mcA_w(T)$ for the weak learner $\mcA_w$ promised by \Cref{lem:weak-list-regressor}. Towards this, consider a zero-sum game between Max and Minnie, where Max's pure strategies are examples $(x,y) \in S$, and Minnie's pure strategies are sequences $S^{n_0}$.  The payoff matrix $L$ is given by $L_{T, (x,y)}=\Ind[\mu^k_T(x) \not\owns_\gamma y]$. Now, let $\mcD$ be any mixed strategy (i.e., distribution over the examples in $S$) by Max. From \Cref{lem:weak-list-regressor}, we know that
    \begin{align*}
        \Pr_{T \sim \mcD^{n_0}}\Pr_{(x,y)\sim \mcD}[\mu^k_T(x) \not\owns_\gamma y] < \frac{1}{2(k+1)}.
    \end{align*}
    But note that the quantity on the LHS above is the expected payoff, when Max plays the mixed strategy $\mcD$ and Minnie plays the mixed strategy $\mcD^{n_0}$. Namely, for any mixed strategy that Max can play, there exists a mixed strategy that Minnie can play which guarantees that the expected payoff smaller than $\frac{1}{2(k+1)}$. By the min-max theorem, there exists a mixed strategy that Minnie can play, such that for any mixed strategy that Max could play (in particular, any pure strategy as well), the expected payoff is smaller than $\frac{1}{2(k+1)}$. This mixed strategy is the required distribution $\mcP$.

    Now, suppose we draw $l=6(k+1)\log(2n)$ sequences $T_1,\dots,T_l$ independently from $\mcP$. By a Chernoff bound, we have that for any $(x,y) \in S$,
    \begin{align*}
        \Pr\left[\frac{1}{l}\sum_{i=1}^l \Ind[\mu^k_{T_i}(x) \not\owns_\gamma y] \ge \frac{1}{k+1}\right] \le \exp\left(-\frac{l}{6(k+1)}\right) < \frac{1}{n}.
    \end{align*}
    A union bound over the $n$ examples in $S$ then implies that there exist sequences $T_1,\dots,T_l$ such that for all $(x,y) \in S$ simultaneously, it holds that
    \begin{align*}
       \sum_{i=1}^l \Ind[\mu^k_{T_i}(x) \owns_\gamma y] > \frac{kl}{k+1}.
    \end{align*}
    Namely, for each $(x,y) \in S$, the target label $y$ is $\gamma$-contained in more than $\frac{kl}{k+1}$ of the list predictions on $x$. Now, concatenate all the labels in the $l$ lists $\mu^k_{T_1}(x),\dots, \mu^k_{T_l}(x)$ to obtain a giant list of size at most $N=kl$; sort this list.  Finally, obtain the $\frac{1}{k+1}^\text{th}$ quantiles in this sorted list and collect them in a list $\mu(x)$ of size $k$. Namely, consider the elements at indices $\ceil*{\frac{N}{k+1}},\ceil*{\frac{2N}{k+1}},\dots,\ceil*{\frac{kN}{k+1}}$ in the sorted list.

    The claim is that $y \in_\gamma \mu(x)$. To see this, note that there are at least $\ceil*{\frac{N}{k+1}}$ ``good'' labels in the sorted list that are all $\gamma$-close to $y$. The only bad case that we need to worry about is when these good labels all get trapped in a single region \textit{between} two consecutive quantiles (and not including any). But note that the number of elements between two consecutive quantiles is at most $\ceil*{\frac{N}{k+1}}-1$, and hence this is not possible. In particular, at least one of the $k$ elements in $\mu(x)$ is sandwiched in between some pair of good labels in the sorted list, and is hence itself $\gamma$-close to $y$.

    The subsequences $T_1,\dots,T_l$ constitute a sample compression. Namely, consider the following sample compression scheme: for each $S$, $\kappa(S)$ simply outputs the concatenation of the sequences $T_1,\dots,T_l$ constructed above (of total size $s=O(n_0k\log(n))$), and $\rho$ runs the above procedure of aggregation on this concatenation: the output list of $\rho$ on each $x \in S$ is then guaranteed to $\gamma$-contain the true label $y$. Thus, $(\kappa, \rho)$ constitute a list sample compression scheme for $(\mcX, [0,1])$. We can then instantiate \Cref{lemma:k_listgen-compression}, with the loss function $\ell_{\thr}:[0,1]^k \times [0,1] \to [0,1]$, where
    \begin{align*}
        \ell_{\thr}\left(\mu, y\right) = \Ind[\mu \not\owns_\gamma y],
    \end{align*}
    to obtain that with probability $1-\delta$ over $S$,
    \begin{align*}
        \err_{\mcD, \ell_\thr}(\rho(\kappa(S))) \le O\left(\frac{n_0k\log^2(n)+\log(1/\delta)}{n}\right).
    \end{align*}
    Here, we used that $\hat{\err}_{S,\ell_\thr}(\rho(\kappa(S)))$ and $\hat{\err}_{S \setminus \kappa(S),\ell_\thr}(\rho(\kappa(S)))$ are both 0 for the compression scheme $(\kappa, \rho)$. But now, note that for any $k$-list hypothesis $\mu$ and $x \ge 0$, $\err_{\mcD, \ell_\thr}(\mu) \le x$ implies that $\err_{\mcD, \ell_\abs} \le (1-x)\gamma+x \le \gamma + x$, and hence with probability at least $1-\delta$ over $S$,
    \begin{align*}
        \err_{\mcD, \ell_\abs}(\rho(\kappa(S))) \le \gamma + O\left(\frac{n_0k\log^2(n)+\log(1/\delta)}{n}\right).
    \end{align*}
    The required learning algorithm $\mcA$ maps $S \mapsto \rho(\kappa(S))$, completing the proof.
    
    \end{proof}

\subsection{Proof of \texorpdfstring{\Cref{thm:realizable-lb}}{a}}
\label{sec:realizable-lb-proof}

\realizablelb*

\begin{proof}
    Let $n_0 = \doig{2\gamma}(\mcH)$ for ease of notation. By the definition of $\doig{2\gamma}(\mcH)$, we know that there exists a sequence $S \in \mcX^{n_0}$ such that there exists a finite subgraph $G=(V,E)$ of the one-inclusion graph $\mcG(\mcH|_S)$ that satisfies the following property: for any $k$-list orientation $\sigmak$ of the edges in $E$, there exists a vertex $v\in V$ that has $\outdeg(v;\sigmak, 2\gamma) \ge \frac{n_0}{2(k+1)}$.

    Given any learning algorithm $\mcA$, we will define a corresponding $k$-list orientation $\sigmak_\mcA$ of the edges in $G$ such that the $k$-outdegree of every vertex with respect to this orientation is upper bounded by the expected error of $\mcA$. Since we know by the property above that there exists a vertex $v$ whose $k$-outdegree is bounded from below, we will have obtained the required lower bound.

    First, for every vertex $v=(v_1,\dots,v_{n_0})\in V$, where $v_1,\dots,v_{n_0}$ correspond to the labels of some function in $\mcH$ on $x_1,\dots,x_{n_0}$, let $P_v$ be the distribution 
    \begin{align}
        \label{eqn:P-v-def}
        P_v((x_1,v_1)) = 1-24(k+1)\sqrt{\eps}, \qquad P_v((x_i,v_i))=\frac{24(k+1)\sqrt{\eps}}{n_0-1} \text{ for }i=2,\dots,n_0.
    \end{align}
    Namely, $P_v$ assigns a massive chunk of its mass to $(x_1,v_1)$, and uniformly distributes the remaining tiny amount of mass over $(x_2,v_2)\dots,(x_{n_0},v_{n_0})$.
    
    Now, fix a training set size $m=\frac{n_0}{72(k+1)\sqrt{\eps}}$; this is the lower bound we are going for. Let $e_t \in E$ be any edge in the direction $t \in [n_0]$. For any $u \in e_t$, define
    \begin{align}
        p_{e_t}(u) = \Pr_{S \sim P_u^m}\left[\mu^k(x_t) \not\owns_\gamma u_t ~\big|~ (x_t, u_t) \notin S\right],
    \end{align}
    where $\mu^k=\mcA(S)$ is the $k$-list hypothesis output by $\mcA$ on the training set $S$.
    In words, $p_{e_t}(u)$ measures the probability (according to $P_u$) that $\mcA$ has a large error on the hidden test point $x_t$ (where $t$ is the direction of the edge $e_t$), when the training set corresponds to the edge $e_t$. The $k$-list orientation $\sigmak_\mcA$ that we will construct will be a function of these probabilities $p_{e_t}(u)$---we will orient edges towards vertices having small $p_{e_t}(u)$.

    For an edge $e_t \in E$ in the direction $t$, define the set
    \begin{align}
        C_{e_t} = \left\{u \in e_t:p_{e_t}(u) < \frac{1}{k+1}\right\}.
    \end{align}
    If $C_{e_t}=\emptyset$, we set $\sigmak_\mcA(e_t)$ to be an arbitrary $\le k$-sized subset of $e_t$.

    Otherwise, we claim that there exists a subset $\mcB \subseteq C_{e_t}$ having size at most $k$, such that for every $u \in C_{e_t}$, $\exists c \in \mcB$ such that $|c_t - u_t| \le 2\gamma$. To see this, first note that for any $u \in e_t$ (and hence any $u \in C_{e_t}$), the distribution $P^m_u$ conditioned on $(x_t,u_t) \notin S$ is the same. This is because of the way we have defined the distributions in \eqref{eqn:P-v-def}, and the fact that every $u \in C_{e_t}$ is identically everywhere other than at $x_t$. Summarily, the random variable $\mu^k(x_t)$ follows a common conditional distribution $D$ for any $u \in C_{e_t}$. If it were the case that $C_{e_t}$ were not $2\gamma$-coverable by a subset of size $\le k$, then there must exist $u^{(1)},\dots,u^{(k+1)} \in C_{e_t}$ such that every $|u^{(i)}_t - u^{(j)}_t| > 2\gamma$. Let $A_i$ be the event (with respect to the common conditional distribution $D$) that $\mu^k(x_t) \owns_\gamma u^{(i)}_t$. Then, because of the previous sentence, $\cap_{i=1}^{k+1}A_i = \emptyset$. Furthermore, on account of membership in $C_{e_t}$, for every $i \in [k+1]$, we have that $\Pr_{D}[A_i] > \frac{k}{k+1}$. Instantiating \Cref{claim:union-plus-intersection-prob-lb} with the events $A_1,\dots, A_{k+1}$, we would then obtain that
    \begin{align*}
        \Pr_D[\cup_{i=1}^{k+1}A_i] > 1,
    \end{align*}
    which is not possible. Thus, the required cover $\mcB \subseteq C_{e_t}$ of size at most $k$ must exist, and we set $\sigmak_\mcA(e_t) = \mcB$.
    
    Now, we can upper bound the $k$-outdegree of any vertex $v$ in this orientation. Concretely, denoting $e_1,\dots,e_{n_0}$ to be the edges adjacent to $v$ in directions $1$ through $n_0$, we have
    \begin{align*}
        \outdeg(v;\sigmak_\mcA, 2\gamma) &= \sum_{t \in [n_0]} \Ind[\text{all values in $\sigmak_\mcA(e_t)$ are more than $2\gamma$ away from $v_t$}].
    \end{align*}
    Note that if $p_{e_t}(v) < \frac{1}{k+1}$, $v \in C_{e_t}$, and $\sigmak_\mcA(e_t)$ will contain an element within $2\gamma$ of $v_t$. Thus, the indicator is only active when $p_{e_t}(v) > \frac{k}{k+1}$, which means that
    \begin{align*}
        \outdeg(v;\sigmak_\mcA, 2\gamma) &= \sum_{t \in [n_0]} \Ind\left[p_{e_t}(v) > \frac{k}{k+1}\right] \le 1 + \sum_{t =2}^{n_0} \Ind\left[p_{e_t}(v) > \frac{k}{k+1}\right] \\
        &\le 1+\frac{k+1}{k}\sum_{t =2}^{n_0} \Pr_{S \sim P_v^m}\left[\mu^k(x_t) \not\owns_\gamma u_t ~\big|~ (x_t, u_t) \notin S\right] \\
        &= 1+\frac{k+1}{k}\sum_{t =2}^{n_0} \frac{\Pr_{S \sim P_v^m}\left[\{\mu^k(x_t) \not\owns_\gamma u_t\} \wedge \{(x_t, u_t) \notin S\}\right]}{\Pr_{S \sim P^m_v}[(x_t,v_t) \notin S]}.
    \end{align*}
    But notice that by the definition of $P_v$ \eqref{eqn:P-v-def}, we have that for $t \ge 2$,
    \begin{align*}
        \Pr_{S \sim P_v^m}[(x_t, v_t) \notin S] &= \left(1- \frac{24(k+1)\sqrt{\eps}}{n_0-1}\right)^m \ge 1-\frac{24(k+1)m\sqrt{\eps}}{n_0-1} \\
        &=1-\frac{n_0}{3n_0-3} \ge \frac{1}{3}.
    \end{align*}
    where we used that $(1+x)^r \ge 1+rx$ for any $ x \ge -1$, and substituted $m=\frac{n_0}{72(k+1)\sqrt{\eps}}$. Substituting this bound above, we get
    \begin{align*}
        \outdeg(v;\sigmak_\mcA, 2\gamma) &\le 1+\frac{3(k+1)}{k}\sum_{t =2}^{n_0} \Pr_{S \sim P_v^m}\left[\{\mu^k(x_t) \not\owns_\gamma u_t\} \wedge \{(x_t, u_t) \notin S\}\right] \\
        &\le 1+\frac{3(k+1)}{k}\sum_{t =2}^{n_0} \Pr_{S \sim P_v^m}\left[\mu^k(x_t) \not\owns_\gamma u_t\right] \\
        &= 1+\frac{3(k+1)}{k}\left(\frac{n_0-1}{24(k+1)\sqrt{\eps}}\right)\sum_{t =2}^{n_0}\left(\frac{24(k+1)\sqrt{\eps}}{n_0-1}\right) \Pr_{S \sim P_v^m}\left[\mu^k(x_t) \not\owns_\gamma u_t\right] \\
        &\le 1+\frac{3(k+1)}{k}\left(\frac{n_0-1}{24(k+1)\sqrt{\eps}}\right)\E_{(x,y)\sim P_v}\E_{S \sim P_v^m}\left[\Ind[\mu^k(x) \not\owns_\gamma y]\right] \\
        &= 1+\frac{3(k+1)}{k}\left(\frac{n_0-1}{24(k+1)\sqrt{\eps}}\right)\E_{S \sim P_v^m}\Pr_{(x,y)\sim P_v}\left[\mu^k(x) \not\owns_\gamma y\right]
    \end{align*}
    Finally, using that there exists some $v^\star$ that has $\outdeg(v;\sigmak_\mcA, 2\gamma) \ge \frac{n_0}{2(k+1)}$ because of our choice of $S$ and $G$, we get
    \begin{align}
        \E_{S \sim P_v^m}\Pr_{(x,y)\sim P_v}\left[\mu^k(x) \not\owns_\gamma y\right] &\ge \frac{24(n_0-2k-2)k(k+1)\sqrt{\eps}}{6(k+1)(k+1)(n_0-1)} \nonumber \\
        & \ge  2\sqrt{\eps}, \label{eqn:sample-cxty-lb-contradiction}
    \end{align}
    where we used that $\frac{k}{k+1}\ge \frac12$ for $k\ge 1$, and that $\frac{n_0-2k-2}{n_0-1} \ge \frac{1}{2}$ for $n_0$ large enough. %
    
    But now, by definition, when $\mcA$ uses $m' \ge m^{k, \mathrm{re}}_{\mcA, \mcH}(\eps, \delta)$ samples, with probability at least $1-\delta$ over the draw of $S \sim P_{v^\star}^{m'}$, its error is at most $\eps$. Namely, with probability at least $1-\delta$, its output $\mu^k$ satisfies
    \begin{align*}
        &\err_{P_{v^\star}, \ell_\abs}[\mu^k] = \E_{(x,y) \sim P_{v^\star}}[\ell_\abs(\mu^k(x), y)] \le \eps \\
        \implies \qquad & \Pr_{(x,y) \sim P_{v^\star}}[\mu^k(x) \not\owns_{\sqrt{\eps}} y] \le \sqrt{\eps} \qquad (\text{Markov's inequality}) \\
        \implies \qquad &\E_{S \sim P_v^{m'}}\Pr_{(x,y)\sim P_v}\left[\mu^k(x) \not\owns_{\sqrt{\eps}} y\right] \le (1-\delta)\sqrt{\eps} + \delta < \sqrt{\eps} + \delta < 2\sqrt{\eps},
    \end{align*}
    Therefore, for \eqref{eqn:sample-cxty-lb-contradiction} above to not be a contradiction, it must be the case that for $\gamma=\sqrt{\eps}$,
    \begin{align*}
        m^{k, \mathrm{re}}_{\mcA, \mcH}(\eps, \delta) \ge \frac{n_0}{72(k+1)\sqrt{\eps}} = \Omega\left(\frac{\doig{2\gamma}(\mcH)}{k \sqrt{\eps}}\right).
    \end{align*}
\end{proof}

\end{document}